\newtheorem{Theorem}{Theorem}
\newtheorem{Lemma}{Lemma}
\newtheorem{Proposition}{Proposition}
\newtheorem*{Assumptions}{Assumptions}
\newcommand{\R}{{\mathbb R}}
\newcommand{\C}{{\mathbb C}}
\newcommand{\Z}{{\mathbb Z}}
\newcommand{\E}{\mathbb{E}}
\newcommand\ind{\protect\mathpalette{\protect\independenT}{\perp}}
\def\independenT#1#2{\mathrel{\rlap{$#1#2$}\mkern2mu{#1#2}}}
\newcommand{\td}{\mathrm{d}}
\newcommand{\I}{\mathrm{I}}
\newcommand{\diag}{\mathrm{diag}}
\newcommand{\cov}{\mathrm{Cov}}
\newcommand{\stress}{\textbf}
\newcommand{\defi}{\emph}
\newcommand{\const}{\mathrm{const}}
\newcommand{\vect}{\mathrm{vec}}
\newcommand{\adj}{\mathrm{adj}}
\newcommand{\1}{\mathrm{1}}
\pgfplotsset{compat=newest}
\pgfplotsset{plot coordinates/math parser=false}       
\newif\iffinal 
 \newcommand{%
   \beginpgfgraphicnamed{-external}%
   \input{.tikz}%
   \endpgfgraphicnamed%
 }[1]{%
   \input{#1.tikz}%
 }
 \newcommand{%
   \beginpgfgraphicnamed{-external}%
   \input{.tikz}%
   \endpgfgraphicnamed%
 }[1]{%
   \beginpgfgraphicnamed{#1-external}%
   \input{#1.tikz}%
   \endpgfgraphicnamed%
 }
\newlength\figheight
\newlength\figwidth
\tikzset{>=stealth'}
\tikzstyle{graphnode} = [circle,draw=black,minimum size=22pt,text centered,text width=22pt,inner sep=0pt] 
\tikzstyle{var}   =[graphnode,fill=white]
\tikzstyle{obs}   =[graphnode,fill=white]
\tikzstyle{inv}   =[rectangle,fill=none,draw=none]
\tikzstyle{hid}   =[graphnode,fill=gray,draw=gray,text=white]
\tikzstyle{edge}  =[draw=white,double=black,thick,-]
\newtheorem*{p1}{Proposition 1}
\newtheorem*{p2}{Proposition 2}
\icmltitlerunning{Causal Inference by Identification of Vector Autoregressive Processes with Hidden Components}
\begin{document} 

\twocolumn[
\icmltitle{Causal Inference by Identification of Vector Autoregressive Processes with Hidden Components}

\icmlauthor{Philipp Geiger\textsuperscript{a}}{pgeiger@tuebingen.mpg.de}
\icmlauthor{Kun Zhang\textsuperscript{a,b}}{kzhang@tuebingen.mpg.de}
\icmlauthor{Mingming Gong\textsuperscript{c}}{gongmingnju@gmail.com}
\icmlauthor{Dominik Janzing\textsuperscript{a}}{janzing@tuebingen.mpg.de}
\icmlauthor{Bernhard Sch\"olkopf\textsuperscript{a}}{bs@tuebingen.mpg.de}
\icmladdress{\textsuperscript{a}Empirical Inference Department, Max Planck Institute for Intelligent Systems, T\"ubingen, Germany\\
\textsuperscript{b}Information Sciences Institute, University of Southern California, USA\\
\textsuperscript{c}Centre for Quantum Computation and Intelligent Systems, University of Technology, Sydney, Australia}

\icmlkeywords{causality, causal inference, Granger, vector, autoregressive, VAR, process, hidden, latent, components, variables, channels}

\vskip 0.3in
]



\begin{abstract} 
A widely applied approach to causal inference from a non-experimental time series $X$, often referred to as ``(linear) Granger causal analysis'', is to regress present on past and interpret the regression matrix $\hat{B}$ causally.
However, if there is an unmeasured time series $Z$ that influences $X$, then this approach can lead to wrong causal conclusions, i.e., distinct from those one would draw if one had additional information such as $Z$.
In this paper we take a different approach: We assume that $X$ together with some hidden $Z$ forms a first order vector autoregressive (VAR) process with transition matrix $A$, and argue why it is more valid to interpret $A$ causally instead of $\hat{B}$.
Then we examine under which conditions the most important parts of $A$ are identifiable or almost identifiable from only $X$.
Essentially, sufficient conditions are (1) non-Gaussian, independent noise or (2) no influence from $X$ to $Z$.
We present two estimation algorithms that are tailored towards conditions (1) and (2), respectively, and evaluate them on synthetic and real-world data.
We discuss how to check the model using $X$.
\end{abstract}

\section{Introduction}
\label{sec::intro}

Inferring the causal structure of a stochastic dynamical system from a non-experimental time series of measurements is an important problem in many fields such as economics \citep{Luetkepohl2006} and neuroscience \citep{Goebel,Besserve2010}.

In the present paper, we approach this problem as follows: 
We assume that the measurements are a finite sample from a random process $X=(X_t)_{t \in \Z}$ which, together with another random process $Z=(Z_t)_{t \in \Z}$, forms a first order vector autoregressive (VAR) process. That is, $(X,Z)^\top$ obeys
\begin{align*}
 \left( \begin{array}{c} X_t \\ Z_t \end{array} \right)  = \left( \begin{array}{cc} B & C \\ D & E \end{array} \right) \left( \begin{array}{c} X_{t-1} \\ Z_{t-1} \end{array} \right) + N_t, \label{eqn::intro_VAR}
\end{align*}
for all $t \in \Z$, some matrices $B,C,D,E$, and some i.i.d.\ $N_i, i \in \Z$.  
So far this is a purely statistical model. 
Now we additionally assume that the variables in $Z$ correspond to real properties of the underlying system that are in principle measurable and intervenable.
Based on this we consider $B,C,D,E$ to have a \emph{causal meaning}. 
More precisely, we assume that $B$'s entries express the direct causal influences between the respective variables in $X$.
And more generally, we assume that for all variables in $(X,Z)^\top$ the matrices $B,C,D,E$ capture the respective direct and indirect causal influences.
Note that in this sense $C$ is particularly interesting because it tells which components of $X$ are jointly influenced by an unmeasured quantity, i.e., have a \defi{hidden confounder}, and how strong the influence is.

This way causal inference on $X$ is reduced to a \emph{statistical} problem: examining to what extent, i.e., under which assumptions, $B$ as well as $C,D,E$ are identifiable from the distribution of the process $X$, and how they can be estimated from a sample of $X$. 
It is worth mentioning that this approach can be justified in two different ways, following either \cite{Granger1969} or \cite{Pearl2000, Spirtes2000}.
We will briefly elaborate on this later (Section \ref{sec::link_causal_models}).

The first and main contribution of this paper is on the theoretical side: we present several results that show under which conditions $B$ and $C$ are identifiable or almost (i.e.\ up to a small number of possibilities) identifiable from only the distribution of $X$. 
Generally we assume that $Z$ has at most as many components as $X$. 
Theorem \ref{thm::idr} shows that if the noise terms are non-Gaussian and independent, and an additional genericity assumption holds true, then $B$ is uniquely identifiable.
Theorem \ref{thm::id_C} states that under the same assumption, those columns of $C$ that have at least two non-zero entries are identifiable up to scaling and permutation indeterminacies (because scale and ordering of the components of $Z$ are arbitrary). 
Theorem \ref{thm::almost_id} shows that regardless of the noise distribution (i.e., also in the case of Gaussian noise), if there is no influence from $X$ to $Z$ and an additional genericity assumption holds, then $B$ is identifiable from the covariance structure of $X$ up to a small finite number of possibilities. 
In Propositions \ref{prop::genericity1} and \ref{prop::genericity2} we prove that the additional assumptions we just called generic do in fact only exclude a Lebesgue null set from the parameter space.

The second contribution is a first examination of how the above identifiability results can be translated into estimation algorithms on finite samples of $X$.
We propose two algorithms. 
Algorithm \ref{alg::vem_algo}, which is tailored towards the conditions of Theorems \ref{thm::idr} and \ref{thm::id_C}, estimates $B$ and $C$ by approximately maximizing the likelihood of a parametric VAR model with a mixture of Gaussians as noise distribution.
Algorithm \ref{alg::cov_algo}, which is tailored towards the conditions of Theorem \ref{thm::almost_id}, estimates the matrix $B$ up to finitely many possibilities by solving a system of equations somewhat similar to the Yule-Walker equations \citep{Luetkepohl2006}. 
Furthermore, we briefly examine how the model assumptions that we make can to some extent be checked just based on the observed sample of $X$.
We examine the behavior of the two proposed algorithms on synthetic and real-world data.

It should be mentioned that probably the most widely applied approach to causal inference from time series data so far \citep{Luetkepohl2006}, which we refer to as \emph{practical Granger causal analysis} in this paper (often just called ``(linear) Granger causality''), is to simply perform a linear regression of present on past on the observed sample of $X$ and then interpret the regression matrix causally.
While this method may yield reasonable results in certain cases, it obviously can go wrong in others (see Section \ref{sec::pracitical_granger} for details).
We believe that the approach presented in this paper may in certain cases lead to more valid causal conclusions.

The remainder of this paper is organized as follows. 
In Section \ref{sec::related_work} we discuss related work.
In Section \ref{sec::notation} we introduce notation and definitions for time series.
In Section \ref{sec::prereq} we state the statistical and causal model that we assume throughout the paper. 
In Section \ref{sec::genres} we introduce the so-called generalized residual. 
Section \ref{sec::all_id_results} contains the three main results on identifiability (Theorems \ref{thm::idr} to \ref{thm::almost_id}) as well as arguments for the genericity of certain assumptions we need to make (Propositions \ref{prop::genericity1} and \ref{prop::genericity2}).
In Section \ref{sec::algorithms} we present the two estimation algorithms and discuss model checking.
Section \ref{sec::exp} contains experiments for Algorithms \ref{alg::vem_algo} and \ref{alg::cov_algo}. 
We conclude with Section \ref{sec::conclusion}.

\section{Related Work}
\label{sec::related_work}

We briefly discuss how the present work is related to previous papers in similar directions.



\stress{Inference of properties of processes with hidden components:}
The work \citep{Jalali2012} also assumes a VAR model with hidden components and tries to identify parts of the transition matrix.
However their results are based on different assumptions: they assume a ``local-global structure'', i.e., connections between observed components are sparse and each latent series interacts with many observed components, to achieve identifiability.
The authors of \citep{Boyen1999} - similar to us - apply a method based on expectation maximization (EM) to infer properties of partially observed Markov processes.
Unlike us, they consider finite-state Markov processes and do not provide a theoretical analysis of conditions for identifiability. 
The paper \citep{Etesami2012} examines identifiability of partially observed processes that have a certain tree-structure, using so-called discrepancy measures.

\stress{Harnessing non-Gaussian noise for causal inference:}
The paper \citep{Hyv} uses non-Gaussian noise to infer instantaneous effects.
In \citep{Hoyer2008}, the authors use the theory underlying overcomplete independent component analysis (ICA) \citep[Theorem 10.3.1]{Kagan73} to derive identifiability (up to finitely many possibilities) of linear models with hidden variables, which is somewhat similar to our Theorem \ref{thm::idr}. 
However, there are two major differences: First, they only consider models which consist of finitely many observables which are mixtures of finitely many noise variables. 
Therefore their results are not directly applicable to VAR models.
Second, they show identifiability only up to a finite number of possibilities, while we (exploiting the autoregressive structure) prove unique identifiability.

\stress{Integrating several definitions of causation:}
The work \citep{Eichler2012} provides an overview over various definitions of causation w.r.t.\ time series, somewhat similar to but more comprehensive than our brief discussion in Sections \ref{sec::link_causal_models} and \ref{sec::pracitical_granger}.

\section{Time Series: Notation and Definitions}
\label{sec::notation}

Here we introduce notation and definitions w.r.t.\ time series.
We denote multivariate \defi{time series}, i.e., families of random vectors over the index set $\Z$, by upper case letters such as $X$.
As usual, $X_t$ denotes the $t$-th member of $X$, and $X_t^k$ denotes the $k$-th component of the random vector $X_t$.
Slightly overloading terminology, we call the univariate time series $X^k=(X^k_t)_{t \in \Z}$ the \defi{$k$-th component of $X$}.
By $P_X$ we denote the distribution of the random process $X$, i.e., the joint distribution of all $X_t$, $t \in \Z$.

Given a $K_X$-variate time series $X$ and a $K_Z$-variate time series $Z$, $( X, Z)^\top$ denotes the $(K_X + K_Z)$-variate series 
\[ \left( (X_t^1, \ldots, X_t^{K_X}, Z_t^1, \ldots, Z_t^{K_Z})^\top \right)_{t \in \Z} . \]

A $K$-variate time series $W$ is a \defi{vector autoregressive process (of order $1$), or VAR process for short, with VAR transition matrix $A$ and noise covariance matrix $\Sigma$}, if it allows a \defi{VAR representation}, i.e.,
\begin{align}
 W_t = A W_{t-1} + N_t, \label{eqn:complete_VAR}
\end{align}
the absolute value of all eigenvalues of $A$ is less than\footnote{We require all VAR processes to be stable \citep{Luetkepohl2006}.} 1,
and $N$ is an i.i.d. noise time series such that $\cov(N_0) = \Sigma$. 
We say $W$ is a \defi{diagonal-structural VAR process} if in the above definition the additional condition is met that $N_0^1, \ldots, N_0^K$ are jointly independent.\footnote{Note that the notion ``diagonal-structural'' is a special case of the more general notion of ``structural'' in e.g., \citep{Luetkepohl2006}.}

\section{Statistical and Causal Model Assumptions}
\label{sec::prereq}

In this section we introduce the statistical model that we consider throughout the paper and discuss based on which assumptions its parameters can be interpreted causally.
Moreover, we give an example for how practical Granger causal analysis can go wrong.

\subsection{Statistical Model}
\label{sec::model}

Let $K_X$ be arbitrary but fixed. Let $X$ be a $K_X$-variate time series. 
As stated in Section \ref{sec::intro}, $X$ is the random process from which we assume we measured a sample. 
In particular, the random variables in $X$ have a meaning in reality (e.g., $X^1_3$ is the temperature measured in room 1 at time 3) and we are interested in the causal relations between these variables.
Let $X$ be related to a $K$-variate VAR process $W$, with transition matrix $A$, noise time series $N$, and noise covariance matrix $\Sigma$, and a $K_Z$-variate time series $Z$, as follows:
$W = (X,Z)^\top$ and $K_Z \leq K_X$.
Furthermore, let
\begin{align}
 A =: \left( \begin{array}{cc} B & C \\ D & E \end{array} \right), \label{eqn::var}
\end{align}
with $B$ a $K_X \times K_X$ matrix. 
We call $B$, the most interesting part of $A$, the \emph{structural matrix underlying $X$}. 
Furthermore, in case $C \neq 0$, we call $Z$ a \defi{hidden confounder}.

\subsection{Causal Assumptions} 
\label{sec::link_causal_models}

As already mentioned in Section \ref{sec::intro}, throughout this paper we assume that there is an underlying system such that all variables in $W$ correspond to actual properties of that system which are in principle measurable and intervenable.
While we assume that a finite part of $X$ was in fact measured (Section \ref{sec::model}), $Z$ is completely unmeasured.
Furthermore we assume that the entries of $A$, in particular the submatrix $B$, capture the actual non-instantaneous causal influences between the variables in $W$.
We also mentioned that there are two lines of thought that justify this assumption. 
We briefly elaborate on this here.

On the one hand, \citep{Granger1969} proposed a definition of causation between observables which we will refer to as \defi{Granger's ideal definition}.
Assume the statistical model for the observed sample of $X$ specified in Section \ref{sec::model}.
If we additionally assume that $Z$ correctly models the whole rest of the universe or the ``relevant'' subpart of it, then according to Granger's ideal definition the non-instantaneous (direct) causal influences between the components of $X$ are precisely given by the entries of $B$.
But this implies that everything about $B$ that we can infer from $X$ can be interpreted causally, if one accepts Granger's ideal definition and the additional assumptions that are necessary (such as $K_Z \leq K_X$, which in fact may be a quite strong assumption of course).
This is one way to justify our approach.

On the other hand, \citep{Pearl2000} does not define causation based on measurables alone but instead formalizes causation by so-called structural equation models (SEMs) and links them to observable distributions via additional assumptions.
In this sense, let us assume that $W$ forms a causally sufficient set of variables, whose correct structural equations are given by the VAR equations (\ref{eqn:complete_VAR}), i.e., these equations represent actual causal influences from the r.h.s.\ to the l.h.s.\footnote{Note that here we ignore the fact that Pearl generally only considers models with finitely many variables while the process $W$ is a family of infinitely many (real-valued) variables.}
In particular these equations induce the correct (temporal) causal directed acyclic graph (DAG) for $(X,Z)^\top$.
Then, essentially following the above mentioned author, everything about $B$ that we can infer from the distribution of $X$ can be interpreted causally.
This is the other way to justify our approach (in case the requirement $K_Z \leq K_X$ and the other assumptions are met).
It is important to mention that the usual interpretation of SEMs is that they model the mechanisms which generate the data and that they predict the outcomes of randomized experiments w.r.t.\ the variables contained in the equations.


\subsection{Relation to Practical Granger Causal Analysis and How It Can Go Wrong}
\label{sec::pracitical_granger}

The above ideal definition of causation by Granger (Section \ref{sec::link_causal_models}) needs to be contrasted with what we introduced as ``practical Granger causal analysis'' in Section \ref{sec::intro}.
In practical Granger causal analysis, one just performs a linear regression of present on past on the observed $X$ and then interprets the regression matrix causally.\footnote{We are aware that nonlinear models~\cite{Chu08} and nonparametric estimators~\cite{Schreiber00} have been used to find temporal causal relations. In this paper we focus on the linear case.}
While making the ideal definition practically feasible, this may lead to wrong causal conclusions in the sense that it does not comply with the causal structure that we would infer given we had more information.\footnote{Obviously, if one is willing to assume that $X$ is causally sufficient already, then the practical Granger causation can be justified along the lines of Section \ref{sec::link_causal_models}.}

Let us give an example for this. 
Let $X$ be bivariate and $Z$ be univariate. 
Moreover, assume
$$ \newcommand*{\temp}{\multicolumn{1}{r|}{}}
 A = \left( \begin{array}{cccc} 0.9 & 0 &\temp & 0.5 \\ 0.1 & 0.1 &\temp & 0.8 \\ \hline 0 & 0 &\temp & 0.9 \\ \end{array} \right), $$
and let the covariance matrix of $N_t$ be the identity matrix. 
To perform practical Granger causal analysis, we proceed as usual: we fit a VAR model on \emph{only} $X$, in particular compute, w.l.o.g.\ assuming zero mean, the transition matrix by 
\begin{align}
 B_{\text{pG}} := \E(X_t X_{t-1}^\top) \E(X_t X_t^\top)^{-1} = \left( \begin{array}{cc} 0.89 & 0.35\\ 0.08 & 0.65\\ \end{array} \right) \label{eqn::BEE}
\end{align} 
(up to rounding) and interpret the coefficients of $B_{\text{pG}}$ as causal influences.
Although, based on $A$, $X^2_{t}$ does in fact not cause $X^1_{t+1}$, $B_{\text{pG}}$ suggests that there is a strong causal effect $X^2_{t} \rightarrow X^1_{t+1}$ with the strength $0.35$. 
It is even stronger than the relation $X^1_{t} \rightarrow X^2_{t+1}$, which actually exists in the complete model with the strength $0.1$.



\section{The Generalized Residual: Definition and Properties}
\label{sec::genres}

In this section we define the generalized residual and discuss some of its properties.
The generalized residual is used in the proofs of the three main results of this paper, Theorems \ref{thm::idr} to \ref{thm::almost_id}.


For any $K_X \times K_X$ matrices $U_1, U_2$ let
\begin{align*}
 R_t(U_1,U_2) &:= X_t - U_1 X_{t-1} - U_2 X_{t-2} .
\end{align*}
We call this family of random vectors \emph{generalized residual}.
Furthermore let
\begin{align*}
 M_1 
 &:= \E \left[ W_t \cdot ( X_t^\top , X_{t-1}^\top ) \right] .
\end{align*}


In what follows, we list some simple properties of the generalized residual. 
Proofs can be found in Section~\ref{sec::genres_proofs}.

\begin{Lemma}
\label{lem::repr_R}
We have
\begin{align}
 R_t(U_1,U_2) &= (B^2 + C D - U_1 B - U_2 ) X_{t-2} \notag \\
 &\phantom{=} + (B C + C E - U_1 C) Z_{t-2} \notag \\
 &\phantom{=} + (B - U_1) N^X_{t-1} + C N^Z_{t-1} + N^X_{t} , \label{eqn::gres}
\end{align}
if $K > K_X$. In case $K = K_X$, the same equation holds except that one sets $C:=D:=E:=0$. 

\end{Lemma}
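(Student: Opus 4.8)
The plan is to obtain the claimed representation by repeated substitution of the VAR recursion (\ref{eqn:complete_VAR}), written in block form via (\ref{eqn::var}), into the definition $R_t(U_1,U_2) = X_t - U_1 X_{t-1} - U_2 X_{t-2}$, and then to collect terms by the variable they multiply. First I would unroll $X_t$ two steps back. From $W_t = A W_{t-1} + N_t$ and the block decomposition, the $X$-block of the recursion reads $X_t = B X_{t-1} + C Z_{t-1} + N^X_t$, and similarly $Z_t = D X_{t-1} + E Z_{t-1} + N^Z_t$. Substituting the one-step expansion of $X_{t-1}$ gives $X_t = B(B X_{t-2} + C Z_{t-2} + N^X_{t-1}) + C Z_{t-1} + N^X_t$, and then replacing $Z_{t-1} = D X_{t-2} + E Z_{t-2} + N^Z_{t-1}$ yields
\begin{align*}
 X_t = (B^2 + C D) X_{t-2} + (B C + C E) Z_{t-2} + B N^X_{t-1} + C N^Z_{t-1} + N^X_t .
\end{align*}

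Next I would substitute this into $R_t(U_1,U_2)$ together with the one-step expansion $X_{t-1} = B X_{t-2} + C Z_{t-2} + N^X_{t-1}$, leaving $X_{t-2}$ untouched, and then group the coefficients of $X_{t-2}$, $Z_{t-2}$, $N^X_{t-1}$, $N^Z_{t-1}$ and $N^X_t$ separately. The $X_{t-2}$ coefficient becomes $B^2 + C D - U_1 B - U_2$, the $Z_{t-2}$ coefficient becomes $B C + C E - U_1 C$, the $N^X_{t-1}$ coefficient becomes $B - U_1$, and $N^Z_{t-1}$, $N^X_t$ keep coefficients $C$ and the identity, respectively. This is exactly (\ref{eqn::gres}). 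The case $K = K_X$ is handled by observing that then there is no $Z$-block at all, so $X_t = B X_{t-1} + N^X_t$ is the full recursion; formally setting $C := D := E := 0$ in the derivation above kills every term involving $Z$ and reduces the expression to $R_t(U_1,U_2) = (B^2 - U_1 B - U_2) X_{t-2} + (B - U_1) N^X_{t-1} + N^X_t$, as claimed.

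There is essentially no hard part here: the statement is a bookkeeping identity, and the only thing to be careful about is the order of substitutions (expand $X_t$ two steps, expand $X_{t-1}$ one step, keep $X_{t-2}$ and the intervening noise terms as the ``atoms'') and the consistent use of the block notation $N^X$, $N^Z$ for the two sub-blocks of $N$ induced by $W = (X,Z)^\top$. I would simply present the two successive substitutions as displayed equations and then read off the coefficients; the reduction to the $K = K_X$ case is a one-line remark.
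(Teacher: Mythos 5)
Your proposal is correct and follows essentially the same route as the paper's proof: both derive the two-step expansion $X_t = (B^2+CD)X_{t-2} + (BC+CE)Z_{t-2} + B N^X_{t-1} + C N^Z_{t-1} + N^X_t$ (the paper via the block form of $A^2$, you via direct substitution of the block recursions, which is the same computation) together with the one-step expansion of $X_{t-1}$, and then collect coefficients in the definition of $R_t(U_1,U_2)$. The treatment of the $K=K_X$ case also matches the paper's (which simply calls it obvious).
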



\begin{Lemma}
\label{lem::zero_impl}
If $(U_1,U_2)$ satisfies the equation
\begin{align}
 \left( U_1 , U_2 \right) \left( \begin{array}{cc}  B &  C  \\ \I & 0 \end{array} \right) = \left( B^2 + C D , B C + C E \right) , \label{eqn::coeff_zero}
\end{align}
then $R_t(U_1,U_2)$ is independent of $( X_{t-2-j} )_{j=0}^\infty$, and in particular, for $j \geq 0$,
\begin{align}
\cov(R_t(U_1,U_2) , X_{t-2-j}) = 0 . \label{eqn::cov_zero}
\end{align}
\end{Lemma}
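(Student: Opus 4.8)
The plan is to read off everything from the explicit representation of the generalized residual given in Lemma~\ref{lem::repr_R}. First I would substitute the hypothesis~\eqref{eqn::coeff_zero} into~\eqref{eqn::gres}. The matrix equation $\left( U_1 , U_2 \right) \left( \begin{smallmatrix} B & C \\ \I & 0 \end{smallmatrix}\right) = \left( B^2 + C D , B C + C E \right)$ splits into two blocks: $U_1 B + U_2 = B^2 + C D$ and $U_1 C = B C + C E$. These are exactly the conditions that make the $X_{t-2}$-coefficient $(B^2 + C D - U_1 B - U_2)$ and the $Z_{t-2}$-coefficient $(B C + C E - U_1 C)$ in~\eqref{eqn::gres} vanish. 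Hence under the hypothesis,
\begin{align*}
 R_t(U_1,U_2) = (B - U_1) N^X_{t-1} + C N^Z_{t-1} + N^X_{t},
\end{align*}
a fixed linear combination of the noise vectors $N_{t-1}$ and $N_t$ only. (In the case $K = K_X$ one sets $C = D = E = 0$ and the argument is the same, with $R_t(U_1,U_2) = (B-U_1)N_{t-1} + N_t$.)

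Next I would invoke the autoregressive structure to conclude independence. Since $W$ is a stable VAR process driven by the i.i.d.\ noise $N$, each $W_s$ — and in particular each $X_s = (W_s)_{1:K_X}$ — is a measurable function of $(N_{s-j})_{j \geq 0}$; equivalently, $W_s$ lies in the $\sigma$-algebra generated by $\{N_r : r \leq s\}$. Therefore $(X_{t-2-j})_{j=0}^\infty$ is a function of $\{N_r : r \leq t-2\}$. On the other hand, by the display above, $R_t(U_1,U_2)$ is a function of $N_{t-1}$ and $N_t$ alone, which are independent of $\{N_r : r \leq t-2\}$ because the $N_i$ are jointly independent. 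Independence of $R_t(U_1,U_2)$ from $(X_{t-2-j})_{j=0}^\infty$ follows, and~\eqref{eqn::cov_zero} is then immediate since independent (square-integrable) random vectors have zero cross-covariance.

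The only mild subtlety — which I expect to be the one point needing a sentence of care rather than a real obstacle — is the passage from the one-step VAR recursion $W_s = A W_{s-1} + N_s$ to the statement that $W_s$ is measurable w.r.t.\ the past noise $\sigma$-algebra. This is the standard MA($\infty$) expansion $W_s = \sum_{j \geq 0} A^j N_{s-j}$, valid and convergent precisely because all eigenvalues of $A$ have modulus less than $1$ (the stability assumption built into the definition of a VAR process in Section~\ref{sec::notation}); I would cite this rather than rederive it. Everything else is bookkeeping: matching the two blocks of the matrix equation to the two offending coefficients, and using joint independence of the $N_i$.
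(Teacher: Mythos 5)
Your proposal is correct and follows essentially the same route as the paper's own proof: substitute the block equations from \eqref{eqn::coeff_zero} into the representation \eqref{eqn::gres} to reduce $R_t(U_1,U_2)$ to a function of $N_{t-1}$ and $N_t$, then use the MA($\infty$) expansion $W_s = \sum_{j\geq 0} A^j N_{s-j}$ (justified by stability) to see that $(X_{t-2-j})_{j=0}^\infty$ depends only on noise up to time $t-2$, whence independence and zero covariance follow. No gaps.
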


Let $\Gamma^X_i := \cov(X_t, X_{t-i})$ for all $i$. That is, $\Gamma^X_i$ are the \defi{autocovariance matrices} of $X$.
Note that equation (\ref{eqn::cov_zero}), for $j=0,1$, can equivalently be written as the single equation
\begin{align}
 \left( U_1 , U_2 \right) \left( \begin{array}{cc}  \Gamma^X_{1} &  \Gamma^X_{2}  \\ \Gamma^X_{0} & \Gamma^X_{1} \end{array} \right) = \left( \Gamma^X_{2} , \Gamma^X_{3} \right) . \label{eqn::coeff_zero_ver}
\end{align}


Keep in mind that, as usual, we say a $m \times n$ matrix has \defi{full rank} if its (row and column) rank equals $\min \{m,n\}$.

\begin{Lemma}
\label{lem::uncorrimpl}
Let $M_1$ have full rank.
If $(U_1,U_2)$ satisfies equation (\ref{eqn::cov_zero}) for $j=0,1$, then it satisfies equation (\ref{eqn::coeff_zero}).
\end{Lemma}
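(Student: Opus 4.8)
The plan is to write out the two covariances $\cov(R_t(U_1,U_2),X_{t-2})$ and $\cov(R_t(U_1,U_2),X_{t-3})$ using the explicit form of the generalized residual from Lemma~\ref{lem::repr_R}, notice that together they amount to a single matrix equation of the form $(\,\cdot\,)\,M_1 = 0$, and then use that $M_1$ has full rank to conclude that the coefficient blocks of $X_{t-2}$ and $Z_{t-2}$ in (\ref{eqn::gres}) vanish --- which is exactly the content of (\ref{eqn::coeff_zero}).

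Concretely, I would abbreviate $P := B^2 + CD - U_1 B - U_2$ and $Q := BC + CE - U_1 C$, so that by Lemma~\ref{lem::repr_R} we have $R_t(U_1,U_2) = P X_{t-2} + Q Z_{t-2} + (B-U_1)N^X_{t-1} + C N^Z_{t-1} + N^X_t$ (with $Q$, the $C N^Z_{t-1}$ term, and $Z$ absent when $K=K_X$). The point is that for $j \in \{0,1\}$ the noise vectors $N_{t-1}$ and $N_t$ are independent of $X_{t-2-j}$, since $X_{t-2-j}$ is a function of $\{N_s : s \le t-2-j\}$ and $N$ is i.i.d.; hence these terms drop out of $\cov(R_t(U_1,U_2),X_{t-2-j})$. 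Writing $\Gamma^{ZX}_i := \cov(Z_t,X_{t-i})$ and using stationarity, this gives $\cov(R_t(U_1,U_2),X_{t-2-j}) = P\,\Gamma^X_j + Q\,\Gamma^{ZX}_j$ for $j=0,1$.

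The hypothesis is that these vanish for $j=0,1$, i.e.\ $P\Gamma^X_0 + Q\Gamma^{ZX}_0 = 0$ and $P\Gamma^X_1 + Q\Gamma^{ZX}_1 = 0$. Stacking the two autocovariance pairs, these amount to the single identity $(P,\,Q)\,M_1 = 0$, because $M_1 = \E[W_t(X_t^\top,X_{t-1}^\top)]$ has (under the w.l.o.g.\ zero-mean convention) top block row $(\Gamma^X_0,\ \Gamma^X_1)$ and bottom block row $(\Gamma^{ZX}_0,\ \Gamma^{ZX}_1)$. Now $M_1$ is a $K \times 2K_X$ matrix with $K = K_X + K_Z \le 2K_X$, so ``$M_1$ has full rank'' means $M_1$ has full row rank $K$; hence its only left null (row) vector is $0$, and applying this to each row of the matrix $(P,\,Q)$ yields $P = 0$ and $Q = 0$. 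Finally, $P=0$ reads $U_1 B + U_2 = B^2 + CD$ and $Q=0$ reads $U_1 C = BC + CE$, and these are exactly the two block columns of equation (\ref{eqn::coeff_zero}) (in the case $K=K_X$ one sets $C=D=E=0$ and only the first block remains, consistent with Lemma~\ref{lem::repr_R}).

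I do not expect a genuine obstacle here: the argument is a dimension count plus bookkeeping. The only points that need care are (i) getting the direction of the full-rank conclusion right --- we need $M_1$ to have full \emph{row} rank so that $v \mapsto v M_1$ is injective on row vectors, which is precisely what ``full rank'' means here since $K \le 2K_X$ --- and (ii) checking that the noise terms really do drop out of $\cov(R_t(U_1,U_2),X_{t-2-j})$ for $j=0,1$, which uses only the causal/stationary structure of the VAR representation.
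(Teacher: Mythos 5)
Your proposal is correct and follows essentially the same route as the paper: expand $\cov(R_t(U_1,U_2),X_{t-2-j})$ for $j=0,1$ via the representation in Lemma~\ref{lem::repr_R}, drop the noise terms by independence, stack the two resulting equations into $(P,Q)\,M_1=0$, and invoke full row rank of $M_1$ to kill both blocks. The only difference is cosmetic — you make the independence of $N_{t-1},N_t$ from $X_{t-2-j}$ and the row-rank direction explicit, which the paper leaves implicit.
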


\begin{Lemma}
\label{lem::U_exist}
If $K=K_X$ or if $C$ has full rank, then there exists $(U_1,U_2)$ that satisfies equation (\ref{eqn::coeff_zero}).
\end{Lemma}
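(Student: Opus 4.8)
The plan is to write equation~(\ref{eqn::coeff_zero}) out block-column by block-column and exhibit an explicit solution. Multiplying the left-hand side of~(\ref{eqn::coeff_zero}) out, the equation is equivalent to the two simultaneous conditions
\begin{align*}
 U_1 B + U_2 &= B^2 + C D, \\
 U_1 C &= B C + C E ,
\end{align*}
where the first corresponds to the leftmost $K_X$ columns and the second to the remaining $K_Z$ columns. The key observation is that the first condition imposes no real restriction: whatever $U_1$ we choose, setting $U_2 := B^2 + C D - U_1 B$ satisfies it. So the whole lemma reduces to producing a $K_X \times K_X$ matrix $U_1$ with $U_1 C = B C + C E$.

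In the case $K = K_X$ we use the convention from Lemma~\ref{lem::repr_R} that $C = D = E = 0$; then the second condition reads $0 = 0$ and is trivially met, so e.g.\ $U_1 := B$ together with $U_2 := 0$ works.

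In the case $K > K_X$ with $C$ of full rank, note that $C$ is $K_X \times K_Z$ with $K_Z \leq K_X$, so full rank means $\mathrm{rank}(C) = K_Z$; hence $C^\top C$ is an invertible $K_Z \times K_Z$ matrix. I would then simply set
\[
 U_1 := (B C + C E)\,(C^\top C)^{-1} C^\top, \qquad U_2 := B^2 + C D - U_1 B ,
\]
and verify by direct substitution that $U_1 C = (B C + C E)(C^\top C)^{-1}(C^\top C) = B C + C E$, so that both conditions hold and $(U_1,U_2)$ satisfies~(\ref{eqn::coeff_zero}).

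There is no serious obstacle: the argument is essentially linear algebra once~(\ref{eqn::coeff_zero}) is split into its two column blocks. The only points requiring a little care are the dimension bookkeeping --- $U_1$ and $U_2$ are $K_X \times K_X$, the block matrix in~(\ref{eqn::coeff_zero}) is $2K_X \times (K_X + K_Z)$, and both sides of the equation are $K_X \times (K_X + K_Z)$ --- and the elementary fact that full (column) rank of $C$ makes $C^\top C$ invertible, which is what makes the above closed form for $U_1$ legitimate. One could equivalently argue non-constructively: solvability of $U_1 C = B C + C E$ is just the statement that each row of $B C + C E$ lies in the row space of $C$, which is all of $\R^{K_Z}$ when $C$ has rank $K_Z$.
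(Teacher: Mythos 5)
Your proof is correct. It turns on the same key fact as the paper's argument --- that ``full rank'' for the $K_X \times K_Z$ matrix $C$ with $K_Z \leq K_X$ means full \emph{column} rank --- but you package it differently. The paper keeps equation (\ref{eqn::coeff_zero}) as a single linear system and observes that the $2K_X \times (K_X + K_Z)$ coefficient matrix $\bigl(\begin{smallmatrix} B & C \\ \I & 0 \end{smallmatrix}\bigr)$ has full column rank (the bottom block $(\I,0)$ handles the first $K_X$ columns, and injectivity of $C$ handles the remaining $K_Z$), so a solution $(U_1,U_2)$ exists for an arbitrary right-hand side; this is a pure existence argument. You instead split the equation into its two block columns, observe that the first block condition $U_1 B + U_2 = B^2 + CD$ is satisfiable for free once $U_1$ is chosen, and solve the second block condition explicitly with the right pseudo-inverse $U_1 = (BC + CE)(C^\top C)^{-1}C^\top$. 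What your route buys is a constructive closed-form solution (of the kind one would actually need when solving the empirical analogue in Algorithm \ref{alg::cov_algo}) and an explicit treatment of the trivial $K = K_X$ case, which the paper's written proof leaves implicit; what the paper's route buys is brevity. Your dimension bookkeeping and the invertibility of $C^\top C$ under full column rank are both correct.
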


\section{Theorems on Identifiability and Almost Identifiability}
\label{sec::all_id_results}

This section contains the main results of the present paper.
We present three theorems on identifiability and almost identifiability of $B$ and $C$ (defined in Section \ref{sec::model}), respectively, given $X$ and briefly argue why certain assumptions we have to make can be considered as generic.
Recall the definition of the matrix $M_1$ in Section \ref{sec::genres}.
Note that the following results show (almost) identifiability of $B$ for all numbers $K_Z$ of hidden components \emph{simultaneously}, as long as $0 \leq K_Z \leq K_X$ (which contains the case of no hidden components as a special case).

\subsection{Assuming Non-Gaussian, Independent Noise}
\label{sec::idr}


We will need the following assumptions for the theorems. 
\begin{Assumptions}
We define the following abbreviations for the respective subsequent assumptions.
\begin{itemize}
\item[\emph{A1}:] All noise terms $N^k_t$, $k=1,\ldots,K, t \in \Z$, are non-Gaussian. 
\item[\emph{A2}:] $W$ is a diagonal-structural VAR process (as defined in Section \ref{sec::notation}).
\item[\emph{G1}:] $C$ (if it is defined, i.e., if $K > K_X$) and $M_1$ have full rank.
\end{itemize}
\end{Assumptions}
(We will discuss the genericity of G1 in Section \ref{sec::genericity}.)

The following definition of $F_1$ is not necessary for an intuitive understanding, but is needed for a precise formulation of the subsequent identifiability statements. 
Let $F_1$ denote the set of all $K'$-variate VAR processes $W'$ with $K_X \leq K' \leq 2 K_X$ (i.e.\ $W$ has at most as many hidden components as observed ones), which satisfy the following properties w.r.t. $N', C', M_1'$ (defined similarly to $N, C, M_1$ in Section \ref{sec::prereq}): 
assumptions A1, A2 and G1 applied to $N', C', M_1'$ (instead of $N, C, M_1$) hold true.


\begin{Theorem}
\label{thm::idr}
If assumptions A1, A2 and G1 hold true, then $B$ is uniquely identifiable from only $P_X$.

That is: There is a map $f$ such that for each $W' \in F_1$, and $X'$ defined as the first $K_X$ components of $W'$, $f(P_{X'})=B'$ iff $B'$ is the structural matrix underlying $X'$.
\end{Theorem}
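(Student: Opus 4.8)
\emph{Proof idea.} The plan is to recover $B$ in three stages: (i) turn $P_X$ into a computable handle on the pairs $(U_1,U_2)$ that solve equation (\ref{eqn::coeff_zero}); (ii) use Lemma \ref{lem::repr_R} to see that the associated generalized residuals are linear mixtures of independent non-Gaussian noises; and (iii) read $B$ off such a mixture by exploiting the autoregressive structure to kill the usual scaling/permutation ambiguity of overcomplete ICA.

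For (i), recall the autocovariances $\Gamma^X_0,\dots,\Gamma^X_3$ are determined by $P_X$. By Lemma \ref{lem::U_exist} (whose hypothesis ``$C$ full rank'' is part of G1, the alternative being $K=K_X$) equation (\ref{eqn::coeff_zero}) has a solution; by Lemma \ref{lem::zero_impl} every solution of (\ref{eqn::coeff_zero}) satisfies (\ref{eqn::cov_zero}) for $j=0,1$, i.e.\ the linear system (\ref{eqn::coeff_zero_ver}); and by Lemma \ref{lem::uncorrimpl} (using ``$M_1$ full rank'', the other half of G1) the converse holds as well. Hence the solution set of (\ref{eqn::coeff_zero_ver}) in $(U_1,U_2)$ is nonempty, computable from $P_X$, and coincides with that of (\ref{eqn::coeff_zero}); fix, say, its minimum-norm element $(U_1,U_2)$.

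For (ii)--(iii), consider the consecutive residuals $R_t(U_1,U_2)$ and $R_{t+1}(U_1,U_2)$, whose joint law is a function of $P_X$ (they are linear in $X_{t-2},\dots,X_{t+1}$, and $(U_1,U_2)$ was fixed by $P_X$). Since (\ref{eqn::coeff_zero}) holds, Lemma \ref{lem::repr_R} gives $R_t(U_1,U_2)=(B-U_1)N^X_{t-1}+CN^Z_{t-1}+N^X_t$ and, shifting the index, $R_{t+1}(U_1,U_2)=(B-U_1)N^X_t+CN^Z_t+N^X_{t+1}$ (the $C$-terms absent if $K=K_X$). By A2 all noise components occurring here are jointly independent and by A1 all are non-Gaussian, so the stacked vector $\binom{R_t(U_1,U_2)}{R_{t+1}(U_1,U_2)}$ obeys an overcomplete linear ICA model whose mixing matrix has columns of the forms $\binom{(B-U_1)e_i}{0}$, $\binom{Ce_j}{0}$, $\binom{e_i}{(B-U_1)e_i}$, $\binom{0}{Ce_j}$, $\binom{0}{e_i}$. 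The decisive point is that $N^X_t$ enters the top block through $\I$ and the bottom through $B-U_1$: the columns whose top \emph{and} bottom $K_X$-blocks are both nonzero are precisely the $\binom{e_i}{(B-U_1)e_i}$ with $i$-th column of $B-U_1$ nonzero, and these are pairwise non-collinear and non-collinear with every other column (each of which has a vanishing block), hence unaffected by the merging of collinear sources and recovered --- up to scaling --- from the law of this vector by the overcomplete ICA identifiability result underlying \citep{Hoyer2008} (cf.\ \citep[Thm.~10.3.1]{Kagan73}). Rescaling each such recovered column so that its top block is a coordinate vector $e_i$ fixes the scale and the index $i$, after which its bottom block is the $i$-th column of $B-U_1$; the columns of $B-U_1$ not seen this way must be zero. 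Adding back the known $U_1$ yields $B$, and composing these operations --- all well-defined functions of $P_X$ --- gives the claimed map $f$, with $f(P_{X'})=B'$ for all $W'\in F_1$.

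I expect the main obstacle to be making stage (iii) rigorous: pinning down the exact overcomplete ICA identifiability statement and the reduction to pairwise non-collinear columns (including that, by Cram\'er's theorem and A1, merged sources remain non-Gaussian), together with the elementary but finicky bookkeeping that the ``both blocks nonzero'' columns are exactly the claimed ones and are correctly indexed, and the handling of degenerate subcases (a vanishing column of $B-U_1$, the case $B=U_1$, or $K=K_X$). Stages (i)--(ii) should be routine given Lemmas \ref{lem::repr_R}--\ref{lem::U_exist}, with G1 entering only through the solvability of (\ref{eqn::coeff_zero}) and the equivalence (\ref{eqn::coeff_zero_ver})$\,\Leftrightarrow\,$(\ref{eqn::coeff_zero}).
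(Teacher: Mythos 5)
Your proposal is correct and follows essentially the same route as the paper's proof: choose $(U_1,U_2)$ via Lemmas \ref{lem::U_exist}--\ref{lem::uncorrimpl} and G1, stack two consecutive generalized residuals into an overcomplete ICA model, and use \citep[Thm.~10.3.1]{Kagan73} to isolate the columns with both blocks non-zero, which after normalization yield $B-U_1$ (the paper stacks $(R_t,R_{t-1})$ and normalizes the bottom block to $e_j$, whereas you stack $(R_t,R_{t+1})$ and normalize the top block --- a mirror-image of the same bookkeeping). Your explicit minimum-norm choice of $(U_1,U_2)$ and the Cram\'er-theorem remark about merged collinear sources are, if anything, slightly more careful than the paper on two points it leaves implicit.
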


A detailed proof can be found in Section~\ref{sec::thm1}.
%
%
The idea is to chose $U_1,U_2$ such that $R_t(U_1,U_2)$ is a linear mixture of only \emph{finitely} many noise terms, which is possible based on Lemmas \ref{lem::repr_R} to \ref{lem::U_exist}.
Then, using the identifiability result underlying overcomplete ICA \citep[Theorem 10.3.1]{Kagan73}, the structure of the mixing matrix of $(R_t(U_1,U_2), R_{t-1}(U_1,U_2))^\top$ allows to uniquely determine $B$ from it.

Again using \citep[Theorem 10.3.1]{Kagan73}, one can also show the following result.
For a matrix $M$ let $S(M)$ denote the set of those columns of $M$ that have at least two non-zero entries, and if $M$ is not defined, let $S(M)$ denote the empty set.
A proof can be found in Section~\ref{sec::thm2}.
\begin{Theorem}
\label{thm::id_C}
If assumptions A1, A2 and G1 hold true, then the set of columns of $C$ with at least two non-zero entries is identifiable from only $P_X$ up to scaling of those columns.

In other words: There is a map $f$ such that for each $W' \in F_1$ with $K'$ components, $X'$ defined as the first $K_X$ components of $W'$, and $C'$ defined as the upper right $K_X \times (K' - K_X)$ submatrix of the transition matrix of $W'$, $f(P_{X'})$ coincides with $S(C')$ up to scaling of its elements.

\end{Theorem}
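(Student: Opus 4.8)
The plan is to follow the proof of Theorem \ref{thm::idr} and to read $S(C)$ off the very same overcomplete ICA decomposition of the stacked generalized residual. First I would, using only $P_X$, form the autocovariance matrices $\Gamma^X_0,\dots,\Gamma^X_3$ and solve the linear system (\ref{eqn::coeff_zero_ver}) for a pair $(U_1,U_2)$ of $K_X\times K_X$ matrices. Such a pair exists: by Lemma \ref{lem::U_exist} (applicable since $K=K_X$, or, when $K>K_X$, since $C$ has full rank by G1) there is a solution of (\ref{eqn::coeff_zero}), and by Lemma \ref{lem::zero_impl} any solution of (\ref{eqn::coeff_zero}) also solves (\ref{eqn::coeff_zero_ver}); conversely, since $M_1$ has full rank by G1, Lemma \ref{lem::uncorrimpl} shows that \emph{every} solution of (\ref{eqn::coeff_zero_ver}) solves (\ref{eqn::coeff_zero}). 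I fix any such pair. Then by Lemma \ref{lem::repr_R} the coefficients of $X_{t-2}$ and $Z_{t-2}$ in $R_t(U_1,U_2)$ vanish, and (writing $C_{\cdot i}$ for the $i$-th column of $C$) the stacked residual decomposes as
\[
\begin{pmatrix} R_t(U_1,U_2)\\ R_{t-1}(U_1,U_2)\end{pmatrix}=\mathcal{A}\begin{pmatrix} N^X_{t}\\ N^X_{t-1}\\ N^Z_{t-1}\\ N^X_{t-2}\\ N^Z_{t-2}\end{pmatrix},\qquad \mathcal{A}:=\begin{pmatrix}\I & B-U_1 & C & 0 & 0\\ 0 & \I & 0 & B-U_1 & C\end{pmatrix},
\]
where the two $C$-blocks and the $N^Z$ entries are dropped when $K=K_X$. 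The distribution of the left-hand side is a fixed image of $P_X$; by A2 the noise vector on the right has jointly independent entries, all non-Gaussian by A1; and columns of $\mathcal{A}$ that vanish (precisely those coming from a zero column of $C$ or of $B-U_1$) carry no mass, so $\mathcal{A}$ may be assumed to have no zero column.

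Next, exactly as in the proof of Theorem \ref{thm::idr}, I would invoke the identifiability result underlying overcomplete ICA \citep[Theorem 10.3.1]{Kagan73}: it implies that the set of directions of the columns of $\mathcal{A}$ (columns up to nonzero scaling, with collinear columns merged) is determined by the distribution of $(R_t(U_1,U_2),R_{t-1}(U_1,U_2))^\top$, hence by $P_X$. From this recovered set of directions I keep exactly those that (i) are supported within the first $K_X$ coordinates and (ii) have at least two nonzero entries, and claim that the result equals $S(C)$ up to scaling of its elements. The justification is the block structure of $\mathcal{A}$: the only columns supported within the top $K_X$ coordinates are the $K_X$ columns of the first $\I$-block (each with a single nonzero entry, hence excluded by (ii)) and the columns $\binom{C_{\cdot i}}{0}$; and a column $\binom{C_{\cdot i}}{0}$ with at least two nonzero entries is never collinear with a column from another block, since the first $\I$-block has single-nonzero-entry columns and every column of the remaining three blocks has a nonzero entry among the last $K_X$ coordinates. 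Hence the kept set is $\{\,[\,C_{\cdot i}\,] : C_{\cdot i}\text{ has at least two nonzero entries}\,\}$. When $K>K_X$, G1 makes $C$ of full column rank, so its columns are pairwise non-collinear and every element of $S(C)$ is recovered individually; when $K=K_X$ there is no $C$-block and the kept set is empty, matching the convention $S(C)=\emptyset$. Since this kept set is the same for every admissible choice of $(U_1,U_2)$, the resulting map $f$ is well defined, which completes the argument.

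The step I expect to be the main obstacle is this last bit of combinatorial bookkeeping on $\mathcal{A}$: one must verify that no column of $\mathcal{A}$ other than a genuine column of $C$ can end up supported within the top $K_X$ coordinates with two or more nonzero entries, and conversely that every such column of $C$ survives the ICA ambiguities (scaling, and the merging of collinear columns), so that the ``top-block / at least two nonzero entries'' filter extracts precisely $S(C)$ up to scaling. Everything else is a direct reuse of Lemmas \ref{lem::repr_R}--\ref{lem::U_exist} and of the overcomplete ICA input already set up for Theorem \ref{thm::idr}.
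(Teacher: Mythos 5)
Your proposal is correct and follows essentially the same route as the paper: both reuse the stacked-residual representation and the overcomplete-ICA identifiability of its mixing matrix from the proof of Theorem \ref{thm::idr}, and then extract $S(C)$ by keeping exactly the recovered column directions supported in the top $K_X$ coordinates with at least two nonzero entries. The combinatorial bookkeeping you flag as the main obstacle is exactly the (short) argument the paper gives, and your treatment of collinearity and of the $K=K_X$ case is consistent with it.
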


\subsection{Assuming $D=0$}
\label{sec::id_D_zero}

In this section we present a theorem on the almost identifiability of $B$ under different assumptions. In particular, we drop the non-Gaussianity assumption. 
Instead, we make the assumption that $Z$ is not influenced by $X$, i.e., $D=0$.


Given $U=(U_1, U_2)$, let 
\begin{align}
T_U(Q) := Q^2 - U_1 Q - U_2 ,
\end{align}
for all square matrices $Q$ that have the same dimension as $U_1$.
Slightly overloading notation, we let $T_U(\alpha) := T_U(\alpha \I)$ for all scalars $\alpha$.
Note that $\det(T_U(\alpha))$ is a univariate polynomial in $\alpha$.


We will need the following assumptions for the theorem.  
\begin{Assumptions}
We define the following abbreviations for the respective subsequent assumptions.
\begin{itemize}
\item[\defi{A3}:] $D = 0$.
\item[\defi{G2}:] The transition matrix $A$ is such that there exists $U=(U_1, U_2)$ such that equation (\ref{eqn::coeff_zero}) is satisfied and $\det(T_U(\alpha))$ has $2 K_X$ distinct roots.
\end{itemize}
\end{Assumptions}
(We will discuss the genericity of G2 in Section \ref{sec::genericity}.)

The following definition of $F_2$ is not necessary for an intuitive understanding, but is needed for a precise formulation of the subsequent identifiability statement. 
Let $F_2$ denote the set of all $K'$-variate VAR processes $W'$ with $K_X \leq K' \leq 2 K_X$, which satisfy the following properties w.r.t. $N', A', C', D', M_1'$ (defined similarly to $N, A, C, D, M_1$ in Section \ref{sec::prereq}): 
assumptions A3, G1 and G2 applied to $N', A', C', D', M_1'$ (instead of $N, A, C, D, M_1$) hold true.

\begin{Theorem}
\label{thm::almost_id}
If assumptions A3, G1 and G2 hold true, then $B$ is identifiable from only the covariance structure of $X$ up to $\binom{2 K_X}{K_X}$ possibilities.

In other words: There is a map $f$ such that for each $W' \in F_2$, and $X'$ defined as the first $K_X$ components of $W'$, $f(X')$ is a set of at most $\binom{2 K_X}{K_X}$ many matrices, and $B' \in f(P_{X'})$ for $B'$ the structural matrix underlying $X'$.
\end{Theorem}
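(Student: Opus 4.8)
The plan is to extract a matrix $U=(U_1,U_2)$ from the covariance structure of $X$, show that the true structural matrix $B'$ solves the quadratic matrix equation $T_U(B')=0$, and then bound the number of solutions of this equation via the invariant-subspace theory of its companion linearization. First I would note that $P_{X'}$ determines the autocovariance matrices $\Gamma^{X'}_0,\dots,\Gamma^{X'}_3$ and hence the linear system~(\ref{eqn::coeff_zero_ver}) in the unknown $U$. By Lemmas~\ref{lem::zero_impl} and~\ref{lem::uncorrimpl} together with G1 ($M_1$ has full rank), the solution set of~(\ref{eqn::coeff_zero_ver}) coincides with the solution set of~(\ref{eqn::coeff_zero}); by Lemma~\ref{lem::U_exist} and G1 this set is nonempty, and by G2 it contains some $U$ for which $\det(T_U(\alpha))$ has $2K_X$ distinct roots. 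Fixing any such $U$ by a rule depending only on $P_{X'}$, the first block-column of~(\ref{eqn::coeff_zero}) reads $U_1B'+U_2=(B')^2+CD=(B')^2$, because $D=0$ by A3; that is, $T_U(B')=0$.

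Next I would linearize. Consider the $2K_X\times 2K_X$ companion matrix $\mathcal{A}:=\begin{pmatrix}U_1&U_2\\ \I&0\end{pmatrix}$. A Schur-complement computation gives $\det(\alpha\I-\mathcal{A})=\det(\alpha^2\I-\alpha U_1-U_2)=\det(T_U(\alpha))$, so by the choice of $U$ the matrix $\mathcal{A}$ has $2K_X$ distinct eigenvalues with one-dimensional eigenspaces spanned by vectors $v_1,\dots,v_{2K_X}\in\C^{2K_X}$. Moreover $T_U(B')=0$ is equivalent to $\mathcal{A}\begin{pmatrix}B'\\ \I\end{pmatrix}=\begin{pmatrix}B'\\ \I\end{pmatrix}B'$ (the bottom block is an identity, the top block says $U_1B'+U_2=(B')^2$), so the column space of $\begin{pmatrix}B'\\ \I\end{pmatrix}$ is a $K_X$-dimensional $\mathcal{A}$-invariant subspace which projects isomorphically onto the last $K_X$ coordinates.

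Then I would count. Since $\mathcal{A}$ has simple spectrum, every $K_X$-dimensional $\mathcal{A}$-invariant subspace is of the form $\mathcal{V}_I:=\mathrm{span}\{v_i:i\in I\}$ for a unique $I\subseteq\{1,\dots,2K_X\}$ with $|I|=K_X$, and there are $\binom{2K_X}{K_X}$ such subspaces. Writing $v_i=(g_i,h_i)^\top$ with $g_i,h_i\in\C^{K_X}$ and collecting $\tilde{G}_I=[g_i]_{i\in I}$, $\tilde{H}_I=[h_i]_{i\in I}$, one sets $B_I:=\tilde{G}_I\tilde{H}_I^{-1}$ whenever $\tilde{H}_I$ is invertible, so that $\mathcal{V}_I$ is the column space of $\begin{pmatrix}B_I\\ \I\end{pmatrix}$. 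Define $f(P_{X'})$ to be the set of those $B_I$ that are real matrices; it has at most $\binom{2K_X}{K_X}$ elements (and the whole construction depends only on $P_{X'}$, since $W'\in F_2$ guarantees A3, G1, G2). Finally, for the true $B'$, the $\mathcal{A}$-invariant subspace equal to the column space of $\begin{pmatrix}B'\\ \I\end{pmatrix}$ is $\mathcal{V}_I$ for some such $I$; since this subspace projects onto all of $\C^{K_X}$ in the last $K_X$ coordinates, $\tilde{H}_I$ is invertible, and since two matrices of the form $\begin{pmatrix}\cdot\\ \I\end{pmatrix}$ with equal column space coincide, $B'=B_I\in f(P_{X'})$.

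The main obstacle is the counting step: the key realization is that $T_U(B')=0$ is a quadratic matrix equation whose solutions are the ``solvents'' of a matrix polynomial, and that G2 is precisely the genericity condition forcing the companion matrix $\mathcal{A}$ to be diagonalizable with a simple spectrum, which both yields the clean bijection between $K_X$-dimensional invariant subspaces and candidate solvents and caps their number at $\binom{2K_X}{K_X}$. Two secondary points need care: well-definedness of $f$ despite $U$ possibly being non-unique --- which is fine because $T_U(B')=0$ holds for \emph{every} solution $U$ of~(\ref{eqn::coeff_zero}), so any fixed selection rule works --- and the possible occurrence of non-real eigenvalues of $\mathcal{A}$, which merely discards candidates (those $I$ with non-real $B_I$) and hence cannot break the bound.
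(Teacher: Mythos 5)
Your proposal is correct, and its overall architecture coincides with the paper's: extract $U=(U_1,U_2)$ from the autocovariance equations (\ref{eqn::coeff_zero_ver}), use G1 with Lemmas \ref{lem::zero_impl}--\ref{lem::U_exist} to identify that solution set with the solution set of (\ref{eqn::coeff_zero}) and to guarantee (via G2) a choice of $U$ with $\det(T_U(\alpha))$ having $2K_X$ distinct roots, observe that A3 turns the first block column of (\ref{eqn::coeff_zero}) into $T_U(B)=0$, and then bound the number of solvents. The one place you genuinely diverge is the counting step: the paper invokes Theorem \ref{thm::matrixpoly}, a version of a result of Dennis et al.\ proved by counting latent pairs of the matrix polynomial and matching them to eigenpairs of solvents, whereas you prove the same bound directly by linearizing to the companion matrix $\mathcal{A}=\left( \begin{smallmatrix} U_1 & U_2 \\ \I & 0 \end{smallmatrix} \right)$, noting that $\det(\alpha\I-\mathcal{A})=\det(T_U(\alpha))$ forces a simple spectrum, and putting solvents in correspondence with $K_X$-dimensional $\mathcal{A}$-invariant subspaces whose projection onto the last $K_X$ coordinates is onto. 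These are two faces of the same underlying fact (latent pairs of $T_U$ are exactly eigenpairs of $\mathcal{A}$), but your version is self-contained and constructive --- it enumerates the candidate set $\{B_I\}$ explicitly rather than only bounding $|L|$ --- which is essentially what Algorithm \ref{alg::cov_algo} needs in practice. Your two closing remarks (any selection rule for $U$ works because $T_U(B)=0$ holds for every solution of (\ref{eqn::coeff_zero}); complex eigenvalues only discard candidates) correctly dispose of the remaining well-definedness issues.
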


A detailed proof can be found in Section~\ref{sec::thm3}.
The proof idea is the following: Let $L$ denote the set of all $(U, \tilde{B})$, with $U = (U_1, U_2)$, that satisfy equation (\ref{eqn::cov_zero}) for $j=0,1$, as well as the equation
\begin{align}
\label{eqn::T_zero}
 T_{U}(\tilde{B}) &= 0, 
\end{align}
and meet the condition that $\det(T_U(\alpha))$ has $2 K_X$ distinct roots.
$L$ is non-empty and $(U,B)$ is an element of it, for the true $B$ and some $U$, due to Lemmas \ref{lem::zero_impl} to \ref{lem::U_exist}.
But $L$ is only defined based on the covariance of $X$ and has at most $\binom{2 K_X}{K_X}$ elements (based on \citep{Dennis1976}).

Note the similarity between equation (\ref{eqn::cov_zero}), or its equivalent, equation (\ref{eqn::coeff_zero_ver}), and the well-known Yule-Walker equation \citep{Luetkepohl2006}. The Yule-Walker equation (which is implicitly used in equation (\ref{eqn::BEE})) determines $B$ uniquely under some genericity assumption and given $C=0$.

\subsection{Discussion on the Genericity of Assumptions G1 and G2}
\label{sec::genericity}

In this section we want to briefly argue why the assumptions G1 and G2 are generic.
A detailed elaboration with precise definitions and proofs can be found in Section~\ref{sec::genericity_proofs}. 
The idea is to define a natural parametrization of $(A,\Sigma)$ and to show that the restrictions that assumptions G1 and G2, respectively, impose on $(A,\Sigma)$ just exclude a Lebesgue null set in the natural parameter space and thus can be considered as generic.

In this section, let $K$ such that $K_X \leq K \leq 2K_X$ be arbitrary but fixed.
Let $\lambda_k$ denote the $k$-dimensional Lebesgue measure on $\R^k$.

Let $\Theta_1$ denote the set of all possible parameters $(A',\Sigma')$ for a $K$-variate VAR processes $W'$ that additionally satisfy assumption A2, i.e., correspond to structural $W'$.
Let $S_1$ denote the subset of those $(A',\Sigma') \in \Theta_1$ for which also assumption G1 is satisfied.
And let $g$ denote the natural parametrization of $\Theta_1$ which is defined in Section~\ref{sec::genericity1}.

\begin{Proposition}
\label{prop::genericity1}
We have $\lambda_{K^2 + K} \left( g^{-1}(\Theta_1 \setminus S_1) \right) = 0$.
\end{Proposition}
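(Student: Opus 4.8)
The plan is to show that $S_1$ is the complement (within $\Theta_1$) of the zero set of a polynomial that does not vanish identically, and then to pull this back through the natural parametrization $g$. I would proceed in three steps.

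\textbf{Step 1: Reduce G1 to the non-vanishing of polynomials.} Assumption G1 asks that $C$ (when $K > K_X$) and $M_1$ have full rank. Full-rankness of a matrix is equivalent to the non-vanishing of at least one of its maximal minors, hence to the non-vanishing of a sum of squares of those minors, which is a single polynomial in the matrix entries. For $C$ this is immediate since $C$ is literally a block of $A$. For $M_1 = \E[W_t (X_t^\top, X_{t-1}^\top)]$ I would first express $M_1$ explicitly in terms of $(A,\Sigma)$: writing $\Gamma_0^W = \cov(W_t)$, which solves the discrete Lyapunov equation $\Gamma_0^W = A \Gamma_0^W A^\top + \Sigma$, the entries of $\Gamma_0^W$ are rational functions of $(A,\Sigma)$ (by Cramer's rule applied to the vectorized Lyapunov equation $(\I - A\otimes A)\vect(\Gamma_0^W) = \vect(\Sigma)$, whose coefficient matrix is invertible because the eigenvalues of $A$ have modulus $<1$). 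Then $M_1$ is obtained from $\Gamma_0^W$ by multiplication with $A$ and projection onto the $X$-block, so each entry of $M_1$ is a rational function of $(A,\Sigma)$ with a denominator ($\det(\I - A\otimes A)$ to some power) that never vanishes on the stable region. Clearing denominators, the maximal minors of $M_1$ become polynomials in $(A,\Sigma)$, and G1 is equivalent to one of these (a fixed sum of squares) being nonzero.

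\textbf{Step 2: Exhibit one point where G1 holds.} A polynomial that vanishes on a set of positive Lebesgue measure vanishes identically, so it suffices to produce a single $(A^\ast, \Sigma^\ast) \in \Theta_1$ at which both rank conditions hold. I would take $\Sigma^\ast = \I$ (which is consistent with A2, since the noise components are then even independent) and choose $A^\ast$ with small spectral radius; for instance a generic small perturbation of a nilpotent or diagonal matrix. For such $A^\ast$, $\Gamma_0^W \approx \I$ and $M_1 \approx (A^\ast$ restricted to the relevant blocks$)$ plus lower-order terms, and one checks directly that a generic choice makes $M_1$ full rank; similarly the $C$-block can be chosen to have full rank. (If one prefers a cleaner argument: pick $A^\ast = 0$ is not allowed if we want $M_1$ full rank, so a small generic $A^\ast$ is the right choice — one can verify full rank symbolically for $K_X=1,2$ and invoke the same perturbation idea in general.) Establishing this one witness point shows the relevant polynomial is not identically zero on $\Theta_1$, hence its zero set $\Theta_1 \setminus S_1$ has $\lambda$-measure zero \emph{within the affine variety} $\Theta_1$.

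\textbf{Step 3: Pull back through $g$.} The natural parametrization $g$ (defined in Section~\ref{sec::genericity1}) is a map from $\R^{K^2+K}$ onto $\Theta_1$; I expect it to be a polynomial (or real-analytic) diffeomorphism onto its image, essentially recording the $K^2$ entries of $A$ together with the $K$ free parameters of a diagonal $\Sigma$ (under A2 the noise is structural, and the natural parametrization presumably only fixes the diagonal scaling of $\Sigma$). Since $\Theta_1 \setminus S_1$ is the intersection of $\Theta_1$ with the zero set of a polynomial $p$ not identically zero on $\Theta_1$, the preimage $g^{-1}(\Theta_1 \setminus S_1)$ is the zero set of $p \circ g$, a real-analytic function on $\R^{K^2+K}$ that is not identically zero (because $g$ is onto $\Theta_1$ and $p$ does not vanish on all of $\Theta_1$). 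A real-analytic function on $\R^n$ that is not identically zero has zero set of $\lambda_n$-measure zero, giving $\lambda_{K^2+K}(g^{-1}(\Theta_1 \setminus S_1)) = 0$.

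The main obstacle is Step 2: one must be sure that a point with $M_1$ of full rank actually exists inside $\Theta_1$, i.e. that the full-rank requirement on $M_1$ is not secretly incompatible with stability plus A2. I would resolve this by the explicit small-$A$ expansion sketched above, where $M_1$'s leading behavior is governed by a freely chosen block of $A$; the bookkeeping of which entries of $A$ control which entries of $M_1$ (through $\Gamma_0^W$ and the projection onto the $X$-coordinates) is the only genuinely fiddly part, and it is routine once the Lyapunov-equation formula for $\Gamma_0^W$ is in hand.
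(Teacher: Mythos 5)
Your proposal is correct and follows essentially the same route as the paper's proof: express $M_1$ (via the vectorized Lyapunov/Yule--Walker equation and Cramer's rule) as a rational function of $(A,\Sigma)$ with non-vanishing denominator on the stable region, reduce G1 to the non-vanishing of a polynomial, exhibit a single witness point in $\Theta_1$, and conclude via the fact that the zero set of a non-zero polynomial (pulled back through the linear parametrization $g$) is Lebesgue null. The only substantive difference is in the witness step: the paper (Lemma \ref{lem::theta_ex}) computes $M_2(\phi)$ exactly for an explicit block-diagonal $\tilde A$ with a Jordan-type block coupling $X$ and $Z$, whereas you sketch a small-$\|A\|$ perturbation argument; your sketch does go through (with $\Sigma=\I$ and $A=\epsilon A_0$ one gets $M_1 \approx \bigl( \begin{smallmatrix} \I & \epsilon B_0 \\ 0 & \epsilon D_0 \end{smallmatrix} \bigr)$, which has full row rank whenever $D_0$ does), but you would still need to write out that expansion to close the argument.
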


A proof can be found in Section~\ref{sec::genericity1}. 
The proof idea is that $g^{-1}(\Theta_1 \setminus S_1)$ is essentially contained in the union of the root sets of finitely many multivariate polynomials and hence is a Lebesgue null set.


Let $\Theta_2$ denote the set of all possible parameters $(A',\Sigma')$ for the $K$-variate VAR processes $W$ that additionally satisfy assumption A3, i.e., are such that the submatrix $D$ of $A$ is zero.
Let $S_2$ denote the subset of those $(A',\Sigma') \in \Theta_2$ for which also assumptions G1 and G2 are satisfied.
Let $h$ denote the natural parametrization of $\Theta_1$ which is defined in Section~\ref{sec::genericity2}.
A proof for the following proposition (which is based on a similar idea as that of Proposition~\ref{prop::genericity1}) can also be found in Section~\ref{sec::genericity2}. 

\begin{Proposition}
\label{prop::genericity2}
We have $\lambda_{2 K^2 - K_X K_Z} \left( h^{-1}(\Theta_2 \setminus S_2) \right) = 0$.
\end{Proposition}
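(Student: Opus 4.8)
\textbf{Proof plan for Proposition \ref{prop::genericity2}.}
The plan is to mimic the structure of the proof of Proposition \ref{prop::genericity1}, but with a parameter space adapted to the constraint $D=0$. First I would make the natural parametrization $h$ explicit: a VAR process $W'$ satisfying A3 has transition matrix $A' = \left(\begin{smallmatrix} B' & C' \\ 0 & E' \end{smallmatrix}\right)$, so the free entries of $A'$ are the $K_X^2$ entries of $B'$, the $K_X K_Z$ entries of $C'$ and the $K_Z^2$ entries of $E'$, which together number $K_X^2 + K_X K_Z + K_Z^2$. Since $K = K_X + K_Z$, this equals $K^2 - K_X K_Z$. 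Because A3 does \emph{not} impose structurality, the noise covariance $\Sigma'$ is an arbitrary symmetric positive-definite $K \times K$ matrix, contributing $K(K+1)/2$ further parameters; however, to match the stated dimension $2K^2 - K_X K_Z$ one should parametrize $\Sigma'$ instead by its Cholesky-type factor or simply by an arbitrary $K \times K$ matrix $R'$ with $\Sigma' = R' R'^\top$ (as is presumably done in Section \ref{sec::genericity2}), giving $K^2$ noise parameters; then the total ambient dimension is $(K^2 - K_X K_Z) + K^2 = 2K^2 - K_X K_Z$, matching the statement. So $h$ maps an open subset of $\R^{2K^2 - K_X K_Z}$ (those $(B',C',E',R')$ with $A'$ stable) onto $\Theta_2$.

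Next I would show that $h^{-1}(\Theta_2 \setminus S_2)$ is contained in the zero set of finitely many nontrivial polynomials in these $2K^2 - K_X K_Z$ coordinates, which immediately gives Lebesgue measure zero (a proper algebraic subvariety of $\R^n$ is $\lambda_n$-null, and it suffices that each polynomial be not identically zero). There are two conditions to handle, corresponding to G1 and G2. For G1: the requirement that $C'$ have full rank fails only on the vanishing locus of the $K_Z \times K_Z$ minors of $C'$ (note $K_Z \leq K_X$), and at least one such minor is a nonzero polynomial in the entries of $C'$; the requirement that $M_1'$ have full rank is, as in Proposition \ref{prop::genericity1}, a polynomial condition in $(A',\Sigma')$ — here one must exhibit one concrete choice of parameters (with $D'=0$) for which $M_1'$ has full rank, so that the relevant minor polynomial is not identically zero on $\Theta_2$. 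For G2: by Lemma \ref{lem::U_exist} some $U=(U_1,U_2)$ solving equation (\ref{eqn::coeff_zero}) exists whenever $C'$ has full rank (or $K'=K_X$), and on the full-rank locus $U$ can be written as a rational — hence, after clearing denominators, polynomial — function of $(A',\Sigma')$; then $\det(T_U(\alpha))$ is a polynomial in $\alpha$ whose coefficients are polynomials in the parameters, and "having $2K_X$ distinct roots" is the non-vanishing of its discriminant, again a polynomial condition. So $h^{-1}(\Theta_2 \setminus S_2)$ lies in the union of the zero sets of: the gcd of the $K_Z$-minors of $C'$, a suitable minor of $M_1'$, and the discriminant of $\det(T_U(\alpha))$.

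The main obstacle, as in Proposition \ref{prop::genericity1}, is the last point: verifying that the discriminant polynomial for $\det(T_U(\alpha))$ is \emph{not identically zero} on $\Theta_2$, i.e., exhibiting at least one stable transition matrix with $D=0$ and one noise matrix for which $C$ has full rank, $M_1$ has full rank, \emph{and} $\det(T_U(\alpha))$ genuinely has $2K_X$ distinct roots. I would construct such a witness by hand, e.g.\ taking $B'$ diagonal with $K_X$ distinct small eigenvalues, $E'$ diagonal with $K_Z$ distinct small eigenvalues chosen disjoint from those of $B'$, $D'=0$, and $C'$ a generic full-rank block; for $D'=0$ the eigenvalues of $A'$ are just those of $B'$ together with those of $E'$, so $A'$ is stable, and one checks (or appeals to the computation in Section \ref{sec::genericity2}) that the resulting $U$ makes $\det(T_U(\alpha))$ have $2K_X$ distinct roots. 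Everything else — that minors and discriminants are polynomials, that $U$ depends rationally on the parameters on the full-rank locus, and that finite unions of proper subvarieties are null — is routine and can be cited from the analogous steps in Section \ref{sec::genericity1}.
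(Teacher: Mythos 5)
Your plan matches the paper's proof essentially step for step: the paper's $h$ does parametrize $A'$ by its $K^2-K_XK_Z$ free entries and $\Sigma'$ by a full $K\times K$ matrix, reduces $\Theta_2\setminus S_2$ to the zero sets of a $K_Z\times K_Z$ minor of $C'$, a minor of $M_1'$, and the Sylvester-matrix discriminant of $\det(T_U(\alpha))$ with $U$ chosen rationally in the parameters on the full-rank locus of $C'$, and then exhibits a diagonal witness ($\tilde B=\diag(1,3,\ldots)$, $\tilde C=\I$, $\tilde E=\diag(2,4,\ldots)$, padded to square blocks when $K_Z<K_X$) for which the roots are distinct. The only cosmetic differences are that the paper's witness need not be stable (the polynomial non-vanishing argument is over all of $\R^{2K^2-K_XK_Z}$) and that the $K_Z<K_X$ case requires the explicit padding of $\tilde C,\tilde E$ to square matrices, which you gesture at but do not spell out.
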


\section{Estimation Algorithms}
\label{sec::algorithms}

In this section we examine how the identifiability results in Section \ref{sec::all_id_results} can be translated into estimators on finite data.
We propose two algorithms.

\subsection{Algorithm Based on Variational EM}
\label{sec::vem_algo}

\begin{algorithm}[tb]
   \caption{Estimate $B,C$ using variational EM}
   \label{alg::vem_algo}
\begin{algorithmic}[1]
   \STATE {\bfseries Input:} Sample $x_{1:L}$ of $X_{1:L}$.
   \STATE {Initialize the transition matrix and the parameters of the Gaussian mixture model, denoted as $\theta^0$, set $j\leftarrow 0$.}
   \REPEAT
   \STATE {{\bfseries E step:} Evaluate
   \begin{align*}q^j(z_{1:L},v^X_{1:L},v^Z_{1:L})=q^j(z_{1:L})q^j(v^X_{1:L}) q^j(v^Z_{1:L}) ,\end{align*} 
   which is the variational approx. to the true posterior $q^j(z_{1:L},v^X_{1:L},v^Z_{1:L}|x_{1:L})$, by maximizing the variational lower bound, i.e., $q^j = \text{arg}\max_{q}\mathcal{L}(q,\theta^j)$}.
   \STATE {{\bfseries M step:} Evaluate $\theta^{j+1}=\text{arg}\max_{\theta}\mathcal{L}(q^j,\theta)$.}
   \STATE {$j\leftarrow j+1$.}
   \UNTIL{convergence}
   \STATE {{\bfseries Output:} The final $\theta^j$, containing the estimated $B,C$.} 
\end{algorithmic}
\end{algorithm}

Here we present an algorithm for estimating $B$ and $C$ which is closely related to Theorems \ref{thm::idr} and \ref{thm::id_C}.
Keep in mind that the latter theorem in fact only states identifiability for $S(C)$ (defined in Section \ref{sec::id_D_zero}), up to scaling, not for the exact $C$. 
The idea is the following: 
We transform the model of $X$ underlying these theorems 
(i.e.\ the general model from Section \ref{sec::model} together with assumptions A1, A2 and G1 from Section \ref{sec::idr}) 
into a parametric model by assuming the noise terms $N^k_t$ to be mixtures of Gaussians.\footnote{Obviously, Theorems \ref{thm::idr} and \ref{thm::id_C} also imply identifiability of $B$ and (up to scaling) $S(C)$ for this parametric model.
We conjecture that this implies consistency of the (non-approximate) maximum likelihood estimator for that model under appropriate assumptions.}
Then we estimate all parameters, including $B$ and $C$, by approximately maximizing the likelihood of the given sample of $X$ using a variational expectation maximization (EM) approach similar to the one in \citep{Oh05avariational}.
(Directly maximizing the likelihood is intractable due to the hidden variables ($Z$ and mixture components) that have to be marginalized out.)
Let $y_{1:L}$ be shorthand for $(y_1,\ldots,y_L)$.
The estimator is outlined by Algorithm \ref{alg::vem_algo}, where $(V^X_t, V^Z_t)$ with values $(v^X_t,v^Z_t)$ denote the vectors of mixture components for $N^X_t$ and $N^Z_t$, respectively; $q^j(z_{1:L},v^X_{1:L},v^Z_{1:L} | x_{1:L})$ the true posterior of $Z_{1:L},V^X_{1:L},V^Z_{1:L}$ under the respective parameter vector $\theta^j$ (which comprises $A, \Sigma$ as well as the Gaussian mixture parameters) at step $j$; and $\mathcal{L}$ the variational lower bound.
The detailed algorithm can be found in Section~\ref{sec::vem_algo_details}.
Note that, if needed, one may use cross validation as a heuristic to determine $K_Z$ and the number of Gaussian mixture components.



\subsection{Algorithm Based on the Covariance Structure}
\label{sec::cov_algo}

\begin{algorithm}[tb]
   \caption{Estimate $B$ using covariance structure}
   \label{alg::cov_algo}
\begin{algorithmic}[1]
   \STATE {\bfseries Input:} Sample $x_{1:L}$ of $X_{1:L}$.
   \STATE Solve the equation (\ref{eqn::coeff_zero_ver}), with $\Gamma^X_i$ replaced by $\hat{\Gamma}^X_i$. Let $(\hat{U}_1,\hat{U}_2)$ denote the solution.
   \STATE Solve equation (\ref{eqn::T_zero}) with $U := (\hat{U}_1,\hat{U}_2)$ for $\tilde{B}$. Let $\hat{B}_1, \ldots, \hat{B}_n$ denote the solvents. 
   \STATE {\bfseries Output:} $\hat{B}_1, \ldots, \hat{B}_n$.
\end{algorithmic}
\end{algorithm}

Now we present an algorithm, closely related to Theorem \ref{thm::almost_id}, for estimating $B$ up to finitely many possibilities. 
It relies on the proof idea of that theorem, as we outlined it at the end of Section \ref{sec::id_D_zero}, and it is meant to be applied for cases where the conditions of that theorem are met.
It uses only the estimated autocovariance structure of $X$. 
Keep in mind that $\hat{\Gamma}^X_i$ denote the sample autocovariance matrices (similar to the true autocovariances $\Gamma^X_i$ defined in Section \ref{sec::genres}). 
The estimation algorithm is given by Algorithm \ref{alg::cov_algo}.

\subsection{Model Checking}
\label{sec::model_checking}


Ideally we would like to know whether the various model assumptions we make in this paper, most importantly the one that the entries of $B$ can in fact be interpreted causally, are appropriate. 
Obviously, this is impossible to answer just based on the observed sample of $X$.
Nonetheless one can check these assumptions to the extent they imply testable properties of $X$. 

For instance, to check (to a limited extent) the assumptions underlying Theorems \ref{thm::idr} and \ref{thm::id_C} and Algorithm \ref{alg::vem_algo}, i.e., the general statistical and causal model assumptions from Sections \ref{sec::model} and \ref{sec::link_causal_models} together with A1, A2 and G1 from Section \ref{sec::idr}, we propose the following two tests:
First, test whether $R_t(\hat{U}_1,\hat{U}_2)$ is independent of $( X_{t-2-j} )_{j=0}^J$, for $(\hat{U}_1,\hat{U}_2)$ as defined in Algorithm \ref{alg::cov_algo}, and for say $J=2$. (If Algorithm \ref{alg::cov_algo} finds no $(\hat{U}_1,\hat{U}_2)$ then the test is already failed.)
Second, check whether all components of $X_t$ are non-Gaussian using e.g.\ the Kolmogorov-Smirnov test \citep{conover1971} for Gaussianity.

Note that under the mentioned assumptions, both properties of $X$ do in fact hold true.
Regarding the independence statement, this follows from Lemmas \ref{lem::U_exist} and \ref{lem::zero_impl}.
W.r.t.\ the non-Gaussianity statement, this follows from the fact \citep[Theorem 7.8]{Ramachandran1967} that the distribution of an infinite weighted sum of non-Gaussian random variables is again non-Gaussian.
It should be mentioned that the first test can also be used to check (to a limited extent) the assumptions underlying Theorem \ref{thm::almost_id} and Algorithm \ref{alg::cov_algo}.

\section{Experiments}
\label{sec::exp}

In this section we evaluate the two algorithms proposed in Section \ref{sec::algorithms} on synthetic and real-world data and compare them to the practical Granger causation estimator. 
Keep in mind that the latter is defined by replacing the covariances in equation (\ref{eqn::BEE}) by sample covariances.

\subsection{Synthetic Data}

We empirically study the behavior of Algorithms \ref{alg::vem_algo} and \ref{alg::cov_algo} on simulated data, in dependence on the sample length.
Note that, based on theoretical considerations (see Section \ref{sec::pracitical_granger}), it can be expected that the error of the practical Granger estimator is substantially bounded away from zero in the generic case.

\subsubsection{Algorithm \ref{alg::vem_algo}}

Here we evaluate Algorithm \ref{alg::vem_algo}.

{\bf Experimental setup:} We consider the case of a 2-variate $X$ and a 1-variate $Z$, i.e., $K_X = 2, K_Z = 1$.
We consider sample lengths $L= 100, 500, 1000, 5000$ and for each sample length we do 20 runs.
In each run we draw the matrix $A$ uniformly at random from the stable matrices and then randomly draw a sample of length $L$ from a VAR process $W=(X,Z)^\top$ with $A$ as transition matrix and noise $N_t^k$ distributed according to a super-Gaussian mixtures of Gaussians.
Then we apply Algorithm \ref{alg::vem_algo} and the practical Granger causation estimator on the sample of \emph{only $X$}.

{\bf Outcome:} We calculated the root-mean-square error (RMSE) of Algorithm \ref{alg::vem_algo}, i.e., 
$ \frac{1}{20} \sum_{n=1}^{20} ( B^{\text{est}}_n - B^{\text{true}}_n )^2$,
where $B^{\text{est}}_n, B^{\text{true}}_n$ denotes the output of Algorithm \ref{alg::vem_algo} and the true $B$, respectively, for each run $n$.
The RMSE as a function of the sample length $L$ is depicted in Figure \ref{fig::exp2}, along with the RMSE of the practical Granger algorithm.
 
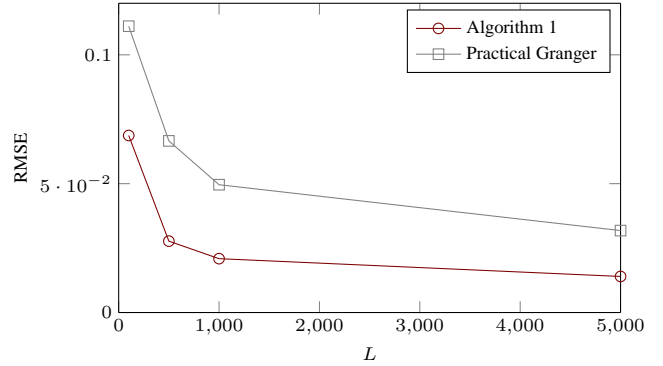
\begin{figure}
  \centering
  \setlength{\figwidth}{.85\columnwidth}
  \setlength{\figheight}{.18\textheight}
  {\scriptsize %
   \beginpgfgraphicnamed{figure2_20_runs-external}%
%
%
\definecolor{mycolor1}{rgb}{0.49060,0.00000,0.00000}%
\begin{tikzpicture}

\begin{axis}[%
width=0.95092\figwidth,
height=\figheight,
at={(0\figwidth,0\figheight)},
scale only axis,
separate axis lines,
every outer x axis line/.append style={black},
every x tick label/.append style={font=\color{black}},
xmin=0,
xmax=5000,
xlabel={$L$},
every outer y axis line/.append style={black},
every y tick label/.append style={font=\color{black}},
ymin=0,
ymax=0.12,
ylabel={RMSE},
legend style={legend cell align=left,align=left,draw=black}
]
\addplot [color=mycolor1,solid,mark=o,mark options={solid}]
  table[row sep=crcr]{%
100	0.0687\\
500	0.0277\\
1000	0.0209\\
5000	0.014\\
};
\addlegendentry{Algorithm 1};

\addplot [color=gray,solid,mark=square,mark options={solid}]
  table[row sep=crcr]{%
100	0.1111\\
500	0.0666\\
1000	0.0496\\
5000	0.0318\\
};
\addlegendentry{Practical Granger};

\end{axis}
\end{tikzpicture}
   \endpgfgraphicnamed%
 }
  \caption{RMSE of Algorithm \ref{alg::vem_algo} and the practical Granger estimator as a function of sample length $L$.\label{fig::exp2}} 
\end{figure}

{\bf Discussion:} This suggests that for $L \to \infty$ the error of Algorithm \ref{alg::vem_algo} is negligible, although it may not converge to zero. 
The error of the practical Granger estimator for $L \to \infty$ is still small but substantially bigger than that of Algorithm \ref{alg::vem_algo}.

\subsubsection{Algorithm \ref{alg::cov_algo}}
\label{sec::synth_cov_algo}


Here we empirically establish the error of Algorithm \ref{alg::cov_algo}, more precisely the deviation between the true $B$ and the best out of the several estimates that Algorithm \ref{alg::cov_algo} outputs. 
Obviously in general it is unknown which of the outputs of Algorithm \ref{alg::cov_algo} is the best estimate. However here we rather want to establish that asymptotically, the output of Algorithm \ref{alg::cov_algo} in fact contains the true $B$.
Also we compare Algorithm \ref{alg::cov_algo} to the practical Granger estimator, although it needs to be said, that the latter is usually not applied to univariate time series.

{\bf Experimental setup:} We consider the case of 1-variate $X$ and $Z$, i.e., $K_X = K_Z = 1$.
We consider sample lengths $L= 10^1, 10^2, \ldots, 10^7$ and for each sample length we do 20 runs.
In each run we draw the matrix $A$ uniformly at random from the stable matrices with the constraint that the lower left entry is zero and then randomly draw a sample of length $L$ from a VAR process $W=(X,Z)^\top$ with $A$ as transition matrix and standard normally distributed noise $N$.
Then we apply Algorithm \ref{alg::cov_algo} and the practical Granger causation estimator on the sample of only $X$.

{\bf Outcome:} We calculated the root-mean-square error (RMSE) of Algorithm \ref{alg::cov_algo}, i.e., 
$\frac{1}{20} \sum_{n=1}^{20} ( B^{\text{best est}}_n - B^{\text{true}}_n )^2$,
where $B^{\text{best est}}_n, B^{\text{true}}_n$ denotes the best estimate for $B$ returned by Algorithm \ref{alg::cov_algo} (i.e., the one out of the two outputs that minimizes the RMSE) and true $B$ for each run $n$, respectively. The RMSE as a function of the sample length $L$ is depicted in Figure~\ref{fig::exp1}, along with the RMSE of the practical Granger estimator.

\begin{figure}
  \centering
  \setlength{\figwidth}{.85\columnwidth}
  \setlength{\figheight}{.18\textheight}
  {\scriptsize %
   \beginpgfgraphicnamed{figure1-external}%
%
%
\definecolor{mycolor1}{rgb}{0.49060,0.00000,0.00000}%
\begin{tikzpicture}

\begin{axis}[%
width=0.95092\figwidth,
height=\figheight,
at={(0\figwidth,0\figheight)},
scale only axis,
separate axis lines,
every outer x axis line/.append style={black},
every x tick label/.append style={font=\color{black}},
xmode=log,
xmin=10,
xmax=10000000,
xminorticks=true,
xlabel={$L$},
every outer y axis line/.append style={black},
every y tick label/.append style={font=\color{black}},
ymin=0,
ymax=0.5,
ylabel={RMSE},
legend style={legend cell align=left,align=left,draw=black}
]
\addplot [color=mycolor1,solid,mark=o,mark options={solid}]
  table[row sep=crcr]{%
10	0.486000461104325\\
100	0.199910513660451\\
1000	0.077014759328329\\
10000	0.0564719472428577\\
100000	0.0620831156830964\\
1000000	0.0150246754235297\\
10000000	0.00992261981206805\\
};
\addlegendentry{Algorithm 2};

\addplot [color=gray,solid,mark=square,mark options={solid}]
  table[row sep=crcr]{%
10	0.420624246437358\\
100	0.21184208648182\\
1000	0.136000618507603\\
10000	0.237569786717835\\
100000	0.118459902431219\\
1000000	0.144878270554243\\
10000000	0.188257268958586\\
};
\addlegendentry{Practical Granger};

\end{axis}
\end{tikzpicture}
   \endpgfgraphicnamed%
 }
  \caption{RMSE of Algorithm \ref{alg::cov_algo} and the practical Granger estimator as a function of sample length $L$.\label{fig::exp1}} 
\end{figure}
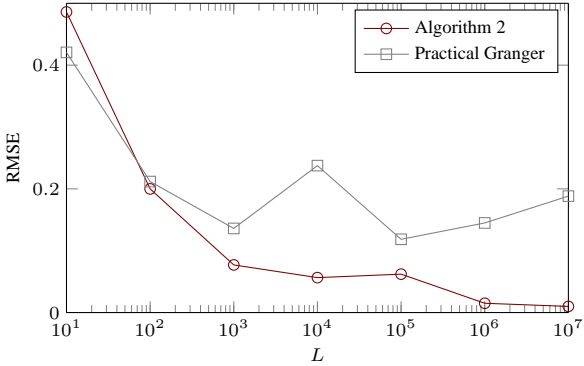

{\bf Discussion:} This empirically shows that the set of two outputs of Algorithm \ref{alg::cov_algo} asymptotically seem to contain the true $B$.
However, it takes at least 1000 samples to output reasonable estimates.
As expected, the practical Granger estimator does not seem to converge against the true $B$.

\subsection{Real-World Data}
Here we examine how Algorithm \ref{alg::vem_algo} performs on a real-world data set.

{\bf Experimental setup:} We consider a time series $Y$ of length 340 and the three components: cheese price $Y^1$, butter price $Y^2$, milk price $Y^3$, recorded monthly from January 1986 to April 2014\footnote{The data was retrieved from \url{http://future.aae.wisc.edu/tab/prices.html} on 29.05.2014.}.
We used the following estimators:
We applied practical Granger estimation to the full time series $Y$ (i.e.,\ considering $X = Y$) and denote the outcome by $A_{\text{fG}}$.
We applied practical Granger estimation to the reduced time series $(Y^1, Y^2)^\top$ (i.e.,\ considering $X = (Y^1, Y^2)^\top$) and denote the outcome by $B_{\text{pG}}$.
We applied Algorithm \ref{alg::vem_algo} to the full time series $Y$ (i.e.,\ considering $X = Y$), while assuming an additional hidden univariate $Z$, and denote the outcome by $\bar{A}_{\text{fA}}$.
We applied Algorithm \ref{alg::vem_algo} to the reduced time series $(Y^1, Y^2)^\top$ (i.e.,\ considering $X = (Y^1, Y^2)^\top$), while assuming an additional hidden univariate $Z$, and denote the outcome by $\tilde{A}_{\text{pA}}$.
Furthermore we do a model check as suggested in Section \ref{sec::model_checking}, although the sample size may be too small for the independence test to work reliably.

{\bf Outcome:} The outputs are:
\begin{align*}
 A_{\text{fG}} &= \left( \begin{array}{ccc} 0.8381 & 0.0810 & 0.0375 \\ 0.0184 & 0.9592 & -0.0473 \\ 0.2318 & 0.0522 & 0.7446 \end{array} \right) , \\
 B_{\text{pG}} &= \left( \begin{array}{cc} 0.8707 & 0.0837 \\ -0.0227 & 0.9559  \end{array} \right) , \\
 \bar{A}_{\text{fA}} &= \left( \begin{array}{cccc} 0.8809 & 0.1812 &  0.1016 & -0.1595 \\ 0.0221 & 1.0142 & -0.0290 & -0.0492 \\ 0.2296 & 0.1291 & 0.8172 & -0.1143 \\ 1.0761 & 0.6029 & -0.7184 & 0.4226 \end{array} \right) , \\
 \tilde{A}_{\text{pA}} &= \left( \begin{array}{ccc} 0.9166 & 0.0513 & -0.0067 \\ -0.0094 & 0.9828 & -0.0047 \\ -0.0031 & 0.1441 & -0.2365 \end{array} \right) .
\end{align*}
The outcome of the model check, based on a significance level of 5\%, is the following: the hypothesis of Gaussianity is rejected. Also the independence hypothesis stated in Section \ref{sec::model_checking} is rejected. The latter implies that the model assumptions underlying Algorithm \ref{alg::vem_algo} are probably wrong.

{\bf Discussion:} 
We consider $A_{\text{fG}}$ as ground truth. Intuitively, non-zero entries at positions $(i, 3)$ can be explained by the milk price influencing cheese/butter prices via production costs, while non-zero entries at positions $(3, j)$ can be explained by cheese/butter prices driving the milk price via demand for milk. The explanation of non-zero entries at positions (1, 2) an (2, 1) is less clear.
One can see that the upper left $2 \times 2$ submatrix  of $\tilde{A}_{\text{pA}}$ is quite close to that of $A_{\text{fG}}$ (the RMSE over all entries is 0.0753), which shows that Algorithm \ref{alg::vem_algo} works well in this respect. 
Note that $B_{\text{pG}}$ is even a bit closer (the RMSE is 0.0662).
However, the upper right $2 \times 1$ matrix of $\tilde{A}_{\text{pA}}$ is not close to a scaled version of the upper right $2 \times 1$ submatrix of $A_{\text{fG}}$ (which corresponds to $C$).
This is in contrast to what one could expect based on Theorem \ref{thm::id_C}.
$\bar{A}_{\text{fA}}$ can be seen as an alternative ground truth. 
It is important to mention that the estimated order (lag length) of the full time series $Y$ is 3, according to Schwarz's criterion (SC) \citep{Luetkepohl2006}, which would violate our assumption of a VAR process of order 1 (Section \ref{sec::model}).
The model check seems to detect this violation of the model assumptions.

\section{Conclusions}
\label{sec::conclusion}

We considered the problem of causal inference from observational time series data.
Our approach consisted of two parts: 
First, we examined possible conditions for identifiability of causal properties of the underlying system from the given data.
Second, we proposed two estimation algorithms and showed that they work on simulated data under the respective conditions from the first part.

\section*{Acknowledgements}

Kun Zhang was supported in part by DARPA grant No. W911NF-12-1-0034.

\appendix
\onecolumn
\section*{Appendices}

\section{Proofs for Section \ref{sec::genres}}
\label{sec::genres_proofs}

For this section keep in mind the definitions of $W,X,Z, N, N^X, N^Z$ and $A, B, C, D, E$ from Section \ref{sec::model} as well as $M_1$ from Section \ref{sec::genres}.

%

\begin{proof}[Proof of Lemma \ref{lem::repr_R}]
The case $K = K_X$ is obvious, so we only prove the case $K > K_X$.

In particular, keep in mind that 
\begin{align*}
\left( \begin{array}{c} X_t \\ Z_t \end{array} \right) &= A \left( \begin{array}{c} X_{t-1} \\ Z_{t-1} \end{array} \right) + N_{t}  ,
\end{align*}
and
\begin{align}
 A = \left( \begin{array}{cc} B & C \\ D & E \end{array} \right) .
\end{align}

Hence based on
\begin{align*}
\left( \begin{array}{c} X_t \\ Z_t \end{array} \right) &= A^2 \left( \begin{array}{c} X_{t-2} \\ Z_{t-2} \end{array} \right) + A N_{t-1} + N_{t} ,
\end{align*}
and
\begin{align*}
A^2 &= \left( \begin{array}{cc} B^2 + C D & B C + C E \\ D B + E D & D C + E^2 \end{array} \right) .
\end{align*}
we get
\begin{align}
X_t &= (B^2 + C D) X_{t-2} + (B C + C E) Z_{t-2} + B N^X_{t-1} + C N^Z_{t-1} + N^X_{t} , \label{eqn::X_t} \\
X_{t-1} &= B X_{t-2} + C Z_{t-2} + N^X_{t-1} . \label{eqn::X_t-1}
\end{align}

Based on the definition of the generalized residual $R_t(U_1,U_2)$ in Section \ref{sec::genres} and equations (\ref{eqn::X_t}) and (\ref{eqn::X_t-1}), we have
\begin{align}
 &R_t(U_1,U_2) \notag \\
 &= X_t - U_1 X_{t-1} - U_2 X_{t-2} \notag \\ 
 &= (B^2 + C D) X_{t-2} + (B C + C E) Z_{t-2} + B N^X_{t-1} + C N^Z_{t-1} + N^X_{t} \notag \\
 &\phantom{=} - U_1 (B X_{t-2} + C Z_{t-2} + N^X_{t-1}) - U_2 X_{t-2} \notag \\
 &= (B^2 + C D - U_1 B - U_2 ) X_{t-2} + (B C + C E - U_1 C) Z_{t-2} \notag \\
 &\phantom{=} + (B - U_1) N^X_{t-1} + C N^Z_{t-1} + N^X_{t} . \notag
\end{align}

\end{proof}


\begin{proof}[Proof of Lemma \ref{lem::zero_impl}]

Equation (\ref{eqn::coeff_zero}) together with equation (\ref{eqn::gres}) implies
\begin{align*}
 &R_t(U_1,U_2) = (B - U_1) N^X_{t-1} + C N^Z_{t-1} + N^X_{t} .
\end{align*}

Based on $\|A\| < 1$, we have \citep{Luetkepohl2006}
\begin{align*}
\left( \begin{array}{c}  X_t  \\ Z_t \end{array} \right) = W_t = \sum_{i=0}^\infty A^i N_{t-i} = \sum_{i=0}^\infty A^i \left( \begin{array}{c}  N^X_{t-i}  \\ N^Z_{t-i} \end{array} \right) . 
\end{align*}

This implies that
\[ ( X_{t-2-j} )_{j=0}^\infty \ind N^X_{t-1}, N^Z_{t-1}, N^X_{t} .\]
\end{proof}



\begin{proof}[Proof of Lemma \ref{lem::uncorrimpl}]
Keep in mind that 
\begin{align*}
 M_1 &= \E \left[ \left( \begin{array}{c} X_{t}   \\  Z_{t} \end{array} \right) ( X_t^T , X_{t-1}^T ) \right] \\
&= \left( \begin{array}{cc} \E[ X_{t} X_{t}^T] & \E[ X_{t} X_{t-1}^T]  \\ \E[ Z_{t} X_{t}^T] & \E[ Z_{t} X_{t-1}^T] \end{array} \right) .\\
 \end{align*}

Based on equation (\ref{eqn::gres}), we have for $j=0,1$
\begin{align}
0 &= \cov(R_t(U_1,U_2) , X_{t-2-j}) \notag \\
&= (B^2 + C D - U_1 B - U_2 ) \cov( X_{t-2}, X_{t-2-j})  \\
&\phantom{=} + (B C + C E - U_1 C) \cov(Z_{t-2}, X_{t-2-j}) \notag \\
&= (B^2 + C D - U_1 B - U_2 ) \E[ X_{t} X_{t-j}^T]  + (B C + C E - U_1 C) \E[ Z_{t} X_{t-j}^T] . \label{eqn::l2_1}
\end{align}


We can write equation (\ref{eqn::l2_1}) as the following system of linear equations
\begin{align*}
 \left( B^2 + C D - U_1 B - U_2 , B C + C E - U_1 C \right) \left( \begin{array}{cc} \E[ X_{t} X_{t}^T] & \E[ X_{t} X_{t-1}^T]  \\ \E[ Z_{t} X_{t}^T] & \E[ Z_{t} X_{t-1}^T] \end{array} \right) = 0,
\end{align*}
that is 
\begin{align*}
 \left( B^2 + C D - U_1 B - U_2 , B C + C E - U_1 C \right) M_1 = 0 .
\end{align*}

Since we assumed that $M_1$ has full rank, we can conclude
\begin{align*}
B^2 + C D - U_1 B - U_2 = 0 \quad\wedge\quad B C + C E - U_1 C = 0 . 
\end{align*}

\end{proof}



\begin{proof}[Proof of Lemma \ref{lem::U_exist}]


$C$ is a $K_X \times K_Z$ matrix of full rank, with $K_Z \leq K_X$, hence $C$ has full row rank. Hence $\left( \begin{array}{cc}  B &  C  \\ \I & 0 \end{array} \right)$ has full row rank. 
Thus, there is a $(U_1,U_2)$ which solves equation (\ref{eqn::coeff_zero}).

\end{proof}

\section{Proofs for Sections \ref{sec::idr} and \ref{sec::id_D_zero}}
\label{sec::id_proofs}

Recall assumptions A1, A2, A3, G1, G2 and the definition of $F_1, F_2$ in Sections \ref{sec::idr} and \ref{sec::id_D_zero} and the definition of $W,X,Z$ and $A,B,C,D,E$ from Section \ref{sec::model}.


\subsection{Proof of Theorem \ref{thm::idr}}
\label{sec::thm1}

Keep in mind that by a \emph{representation} of a random vector $Y$ we mean a matrix $Q$ together with a random vector $F = (f_1, \ldots, f_r)$ with independent components, such that $Y = Q F$.


To prove Theorem \ref{thm::idr} we need the following seminal result which is contained in \citep[Theorem 10.3.1]{Kagan73}.
It allows to exploit non-Gaussianity of noise terms to achieve a certain kind of identifiability.
The theorem will be at the core of the proof of Theorem \ref{thm::idr}.

\begin{Theorem}
\label{thm::overcomp_ica}
 Let $Y = Q F$ and $Y = R G$ be two representations of a $p$-dimensional random vector, where $Q$ and $R$ are constant matrices of order $p \times r$ and $p \times s$ respectively, and $F = (f_1, \ldots, f_r)$ and $G = (g_1, \ldots, g_s)$ are random vectors with independent components. Then the following assertion holds.
 If the $i$-th column of $Q$ is not proportional to any column of $R$, then $F_i$ is normal.
\end{Theorem}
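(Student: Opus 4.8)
The plan is to pass to characteristic functions and reduce the statement to two classical facts, the theorems of Marcinkiewicz and of Cram\'er. Writing $\varphi$ for the characteristic function of $Y$ and $q_j,r_k$ for the columns of $Q,R$, the two representations give, for every $t \in \R^p$,
\[ \varphi(t) = \prod_{j=1}^r \varphi_{f_j}(\la q_j, t\ra) = \prod_{k=1}^s \varphi_{g_k}(\la r_k, t\ra), \]
using independence of the components of $F$ and of $G$. Since characteristic functions are continuous and equal $1$ at the origin, all factors are nonzero on some neighborhood $U$ of $0$, so I may take logarithms there. Setting $\psi_j := \log \varphi_{f_j}$ and $\chi_k := \log \varphi_{g_k}$, I obtain the functional equation
\[ \sum_{j=1}^r \psi_j(\la q_j, t\ra) - \sum_{k=1}^s \chi_k(\la r_k, t\ra) = 0, \qquad t \in U. \]

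First I would group the $r+s$ terms by direction. Let $a_1, \ldots, a_m$ be pairwise non-proportional representatives of the distinct lines spanned by the columns $q_j$ and $r_k$, and for each $\ell$ collect all terms whose column lies on the line through $a_\ell$ into a single continuous function $\Psi_\ell$ of the scalar $\la a_\ell, t\ra$, reparametrizing each $\psi_j$ or $\chi_k$ by the scalar ratio of its column to $a_\ell$ (zero columns contribute a constant and may be dropped). The equation becomes $\sum_{\ell=1}^m \Psi_\ell(\la a_\ell, t\ra) = 0$ on $U$. The crux is a finite-difference lemma forcing each $\Psi_\ell$ to be a polynomial near $0$. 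To isolate $\Psi_{\ell_0}$ I would, for each $\ell \neq \ell_0$, pick a direction $h_\ell$ with $\la a_\ell, h_\ell\ra = 0$ but $\la a_{\ell_0}, h_\ell\ra \neq 0$ — possible precisely because $a_{\ell_0}$ is not proportional to $a_\ell$ — and apply the first-order difference operators $\Delta_{h_\ell}$ for all $\ell \neq \ell_0$ to the equation. Each factor $\Delta_{h_\ell}$ annihilates its own term $\Psi_\ell$ and turns $\Psi_{\ell_0}$ into a first difference in its scalar argument with nonzero step $\la a_{\ell_0}, h_\ell\ra$; hence the resulting $(m-1)$-fold mixed difference of $\Psi_{\ell_0}$ vanishes identically near $0$. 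Because $\la a_{\ell_0},\cdot\ra$ restricted to the hyperplane $a_\ell^{\perp}$ is a nonzero functional, the steps can be set to a common arbitrary small nonzero $c$, so $\Delta_c^{\,m-1}\Psi_{\ell_0}$ vanishes for all such $c$, and the classical characterization of polynomials by vanishing iterated differences makes $\Psi_{\ell_0}$ a polynomial near $0$.

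Finally I would specialize to the line $\ell_0$ spanned by $q_i$. Since $q_i$ is proportional to no column $r_k$, no $\chi_k$ enters the group $\ell_0$, so $\Psi_{\ell_0}(u) = \sum_{j \in J} \psi_j(\lambda_j u)$ with $J = \{ j : q_j \parallel q_i\} \ni i$, $a_{\ell_0}=q_i$, and $\lambda_i = 1$. Exponentiating, $\prod_{j \in J} \varphi_{f_j}(\lambda_j u) = \exp(\Psi_{\ell_0}(u))$ is simultaneously the characteristic function of the independent sum $S := \sum_{j \in J} \lambda_j f_j$ and equal to $e^{P(u)}$ for a polynomial $P$ near $0$; equivalently, all but finitely many cumulants of $S$ vanish. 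By Marcinkiewicz's theorem this forces $S$ to be normal, and then by Cram\'er's decomposition theorem every independent summand of $S$, in particular $\lambda_i f_i$ and hence $f_i$, is normal, which is the assertion.

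I expect the main obstacle to be the finite-difference lemma of the second paragraph: making the direction-grouping precise when several columns coincide up to scaling, and rigorously deducing genuine polynomiality (not merely vanishing of one fixed iterated difference) from the freedom to vary the steps, all while tracking that every identity holds only on a neighborhood of the origin. Transferring the local conclusion ``$\log\varphi_S$ is a polynomial'' into the global Marcinkiewicz/Cram\'er conclusion, via the finitely-many-nonzero-cumulants reformulation, is the other point requiring care.
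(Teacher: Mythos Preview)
The paper does not supply its own proof of this theorem; it simply quotes the result from \citep[Theorem 10.3.1]{Kagan73} and uses it as a black box inside the proofs of Theorems~\ref{thm::idr} and~\ref{thm::id_C}. So there is no in-paper argument to compare against.

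That said, your sketch is the classical route used in Kagan--Linnik--Rao: pass to log-characteristic functions, group terms by the line spanned by their column, kill all but one group with finite-difference operators along directions orthogonal to the other lines, conclude that each grouped log-characteristic function is a polynomial near the origin, then invoke Marcinkiewicz to get normality of the sum over the group containing $q_i$, and finally Cram\'er to extract normality of $f_i$ itself. The two places you flag as delicate are exactly the places where the textbook proof spends its effort: (i) the passage from ``$(m-1)$-fold differences vanish for all sufficiently small step vectors'' to ``polynomial of degree at most $m-1$ on a neighborhood of $0$'' for a merely continuous function, and (ii) upgrading the local statement about $\log\varphi_S$ to the global Marcinkiewicz conclusion. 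Both are handled in the cited reference; your plan is sound and matches it.
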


%

We proceed with the proof of Theorem \ref{thm::idr}.

\begin{proof}[Proof of Theorem \ref{thm::idr}]
\stress{Ansatz:}

We prove that given $P_X$, the structural matrix $B$ underlying $X$ is determined uniquely. 

\stress{Choosing $(U_1,U_2)$:}

Based on assumption G1 and Lemmas \ref{lem::U_exist} and \ref{lem::zero_impl}, there always exists $(U_1,U_2)$ such that
 \begin{align}
\cov(R_t(U_1,U_2) , X_{t-2-j}) = 0 .
\end{align}
Pick one such $(U_1,U_2)$.

\stress{Deriving a representation for $\left( \begin{array}{c}  R_t(U_1,U_2)  \\ R_{t-1}(U_1,U_2) \end{array} \right)$:}


Based on Lemma \ref{lem::uncorrimpl}, we know that 
\begin{align*}
B^2 + C D - U_1 B - U_2 = 0 \quad\wedge\quad B C + C E - U_1 C = 0 ,
\end{align*}
and thus, based on equation (\ref{eqn::gres}),
\begin{align*}
 R_t(U_1,U_2) = N^X_{t} +  C N^Z_{t-1} +  (B - U_1) N^X_{t-1} .
\end{align*}

Observe that

\begin{align*}
 &\left( \begin{array}{c}  R_t(U_1,U_2)  \\ R_{t-1}(U_1,U_2) \end{array} \right) \\
 &= \left( \begin{array}{ccc}  N^X_{t} +  C N^Z_{t-1} + & (B - U_1)  N^X_{t-1} &  \\ & N^X_{t-1} & +  C N^Z_{t-2} +  (B - U_1) N^X_{t-2} \end{array} \right) \\
 &= \left( \begin{array}{ccccc}  \I & C & (B - U_1) & 0 & 0  \\ 0 & 0 & \I & C & (B - U_1) \end{array} \right) \left( \begin{array}{c} N^X_{t} \\  N^Z_{t-1} \\  N^X_{t-1}  \\  N^X_{t-2}   \\  N^Z_{t-2}  \end{array} \right) \\
 &=: Q \tilde{N}_t .
\end{align*}

This is one representation of $\left( \begin{array}{c}  R_t(U_1,U_2)  \\ R_{t-1}(U_1,U_2) \end{array} \right)$. 

Based on Theorem \ref{thm::overcomp_ica} and the structure of $Q$, $B - U_1$ is identifiable from $\left( \begin{array}{c}  R_t(U_1,U_2)  \\ R_{t-1}(U_1,U_2) \end{array} \right)$.
This can be seen as follows. 

\stress{Identifying $B - U_1$ from $\left( \begin{array}{c}  R_t(U_1,U_2)  \\ R_{t-1}(U_1,U_2) \end{array} \right)$:}

Knowing $P_X$, we also know $P_{(R_t(U_1,U_2), R_{t-1}(U_1,U_2))}$ which in particular determines the class of all possible representations of $\left( \begin{array}{c}  R_t(U_1,U_2)  \\ R_{t-1}(U_1,U_2) \end{array} \right)$.
Pick one representation $\left( \begin{array}{c}  R_t(U_1,U_2)  \\ R_{t-1}(U_1,U_2) \end{array} \right) = Q' \tilde{N}_t'$ out of this class.
W.l.o.g. let $Q'$ be such that all its columns are pairwise linearly independent.

Theorem \ref{thm::overcomp_ica} implies that each column of $Q'$ is a scaled version of some column of $Q$ and vice versa.

Now define the $K_X \times K_X$ matrix $V:=(v_1, \ldots, v_{K_X})$ as follows.

For each $j=1, \ldots, K_X$: 

If $Q'$ has a column with a non-zero entry at position $K_X + j$ and a non-zero entry in the upper half, let this column be denoted by $q_j$ and define
\[ v_j := \left[ \frac{1}{[q_j]_{K_X + j}}  q_j \right]_{1:K_X} , \]
where $[ q ]_{k_1, \ldots, k_l}$ denotes the $l$-dimensional vector consisting of $k_1$st to $k_l$th entry of a vector $q$, and $k:l$ is shorthand for $k, k+1, \ldots, l$.
Otherwise, if $Q$ has no such column, then set
\[ v_j := 0. \]

We have $V = B - U_1$. This can be seen as follows:

Let $w_j$ denote the $j$th column of $B - U_1$.

For each $j=1, \ldots, K_X$:

Either we have $w_j \neq 0$. Then the corresponding column in $Q$, i.e. $\left( \begin{array}{c}  w_j  \\ e_j \end{array} \right)$, where $e_j$ denotes the $j$th unit vector, is the only column with a non-zero entry at position $K_X + j$ and a non-zero entry in the upper half. 
Thus $Q'$ contains a scaled version of $\left( \begin{array}{c}  w_j  \\ e_j \end{array} \right)$ and no other column with a non-zero entry at position $K_X + j$ and a non-zero entry in the upper half.
We denoted this column by $q_j$ and defined $v_j = \left[ \frac{1}{[q_j]_{K_X + j}}  q_j \right]_{1:K_X}$.
Since $\left[ \frac{1}{[q_j]_{K_X + j}}  q_j \right]_{K_X + j} = 1 = \left[ \left( \begin{array}{c}  w_j  \\ e_j \end{array} \right) \right]_{K_X + j}$, we know that $\frac{1}{[q_j]_{K_X + j}}  q_j = \left( \begin{array}{c}  w_j  \\ e_j \end{array} \right)$ and hence $v_j = w_j$.

Or we have $w_j = 0$. Then $Q$ and hence also $Q'$ contains no column with a non-zero entry at position $K_X + j$ and a non-zero entry in the upper half.
Then by definition we have $v_j = 0$ and thus again $v_j = w_j$.

Hence $V = B - U_1$.

\stress{Putting all together:}

We defined $U_1$ solely based on $P_X$ and an arbitrary choice and then, for the fixed $U_1$, uniquely determined $B-U_1$, again only based on $P_X$. 
Hence $B = U_1 + (B - U_1)$ is uniquely determined by $P_X$.

\end{proof}

\subsection{Proof of Theorem \ref{thm::id_C}}
\label{sec::thm2}

%
%

Here we prove Theorem \ref{thm::id_C}.

\begin{proof}[Proof of Theorem \ref{thm::id_C}]
Keep in mind the proof of Theorem \ref{thm::idr}. 
There we showed that the matrix
\begin{align*}
 Q = \left( \begin{array}{ccccc}  \I & C & (B - U_1) & 0 & 0  \\ 0 & 0 & \I & C & (B - U_1) \end{array} \right) 
\end{align*}
is identifiable from $P_X$ up to scaling and permutation of its columns, for some $U_1$.
This implies that we can identify the matrix
\begin{align*}
 Q_1 = \left( \begin{array}{cc}  \I & C  \end{array} \right) 
\end{align*}
up to scaling and permutation of its columns, simply by picking those columns of any scaled and permuted version of $Q_1$, that only have non-zero entries in the upper half.

But this in turn implies that we can identify the set of columns of $C$ with at least two non-zero entries up to scaling of those columns.
Just pick from any scaled and permuted version of $Q_1$ those columns with at least two non-zero entries.

\end{proof}

\subsection{Proof of Theorem \ref{thm::almost_id}}
\label{sec::thm3}


Following standard terminology \citep{Dennis1976}, for any $n \times n$-matrices $F_1, \ldots, F_m$ and $Y$ we call 
\[ M(Y) := F_0 Y^m + F_1 Y^{m-1} + \ldots + F_m \]
a \defi{matrix polynomial of degree $m$}.
We say a matrix $Y_0$ is a \defi{right solvent} or simply \defi{solvent} of $M(Y)$, if $M(Y_0) = 0$.
We say $\lambda \in \C$ is a \defi{latent root} of $M(Y)$, if, slightly overloading notation, $M(\lambda) := M(\lambda \I)$ is not invertible.

To prove Theorem \ref{thm::almost_id} we need the following result which is a version of \citep[Corollary 4.1]{Dennis1976}.

\begin{Theorem}
\label{thm::matrixpoly}
Let $M(Y) := F_0 Y^m + F_1 Y^{m-1} + \ldots + F_m$ be any matrix polynomial, where $F_1, \ldots, F_m$ are $n \times n$ square matrices.
If $M(\lambda)$ has $m n$ distinct latent roots, then it has at most $\binom{m n}{n}$ different right solvents.
\end{Theorem}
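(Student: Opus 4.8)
The plan is to reduce the statement to the cited result \citep[Corollary 4.1]{Dennis1976} by checking that our formulation is literally a special case of theirs, and to reconstruct the short combinatorial argument in case the normalization differs. First I would note that without loss of generality we may assume $F_0 = \I$: if $F_0$ is singular the claim is typically vacuous or handled by the conventions in \citep{Dennis1976}, and if $F_0$ is invertible we pass to $\tilde M(Y) := F_0^{-1} M(Y) = Y^m + F_0^{-1}F_1 Y^{m-1} + \ldots + F_0^{-1}F_m$, which has the same right solvents and the same latent roots (since $\det(\tilde M(\lambda)) = \det(F_0)^{-1}\det(M(\lambda))$ vanishes for the same $\lambda$). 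So it suffices to treat a monic matrix polynomial.

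The key step is the standard correspondence between solvents and invariant subspaces of the companion linearization. Form the $mn \times mn$ companion matrix $\mathcal{C}$ associated to the monic polynomial $Y^m + G_1 Y^{m-1} + \ldots + G_m$ (with $G_i = F_0^{-1}F_i$). One shows: a matrix $Y_0$ is a right solvent of $M$ if and only if its "Vandermonde-type" block column span, i.e.\ the column space of $\begin{pmatrix} \I \\ Y_0 \\ Y_0^2 \\ \vdots \\ Y_0^{m-1}\end{pmatrix}$, is an $n$-dimensional $\mathcal{C}$-invariant subspace on which $\mathcal{C}$ acts by $Y_0$; moreover distinct solvents give distinct invariant subspaces. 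Here the hypothesis that $M(\lambda)$ has $mn$ distinct latent roots enters decisively: the latent roots are exactly the eigenvalues of $\mathcal{C}$, so $\mathcal{C}$ has $mn$ distinct eigenvalues and is therefore diagonalizable with one-dimensional eigenspaces $\{v_1,\dots,v_{mn}\}$. Every $n$-dimensional $\mathcal{C}$-invariant subspace is then a direct sum of a choice of $n$ of these lines, so there are at most $\binom{mn}{n}$ such subspaces, hence at most $\binom{mn}{n}$ solvents.

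The main obstacle I expect is not conceptual but bookkeeping: verifying carefully the two halves of the solvent/invariant-subspace correspondence — in particular that the block-Vandermonde column space really is invariant precisely when $M(Y_0)=0$, using the bottom block row of $\mathcal{C}$ to recover the solvent equation, and that the map from solvents to subspaces is injective (this follows because the top block of the generating matrix is $\I$, so $Y_0$ is read off as the action of $\mathcal{C}$ pushed through the top block). Since all of this is exactly \citep[Corollary 4.1]{Dennis1976}, the cleanest route is simply to cite it after confirming the monic reduction; I would include the companion-matrix sketch above only as a remark so the paper is self-contained. No genericity or measure-theoretic input is needed here — the distinctness of the $mn$ latent roots is assumed outright, and its genericity in our application is what Propositions \ref{prop::genericity1} and \ref{prop::genericity2} address separately.
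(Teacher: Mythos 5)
Your argument is correct, and it rests on the same underlying fact as the paper's proof --- the block companion matrix $C^B$ of the (monic) polynomial has characteristic polynomial $\pm\det(M(\lambda))$, so the hypothesis of $mn$ distinct latent roots makes it diagonalizable with $mn$ one-dimensional eigenspaces --- but the bookkeeping is genuinely different. The paper never speaks of invariant subspaces: it first invokes the division identity $M(\lambda) = Q(\lambda)(\I\lambda - S)$ for a solvent $S$ to conclude that $\det(\I\lambda - S)$ divides a polynomial with only simple roots, hence every solvent has $n$ distinct eigenvalues and is determined by its $n$ eigenpairs; it then notes that each eigenpair of a solvent is a latent pair of $M(\lambda)$, and uses the eigenvector correspondence for $C^B$ (Theorems 3.1 and 3.2 of \citep{Dennis1976}) to show there are at most $mn$ latent pairs, giving the $\binom{mn}{n}$ count. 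Your route --- injecting solvents into $n$-dimensional $C^B$-invariant subspaces via the block-Vandermonde column with identity top block, and decomposing every such subspace as a sum of $n$ of the $mn$ eigenlines --- avoids the division identity entirely and makes injectivity immediate, so it is arguably the cleaner self-contained argument; the paper's version stays closer to the latent-pair language of the statements actually printed in \citep{Dennis1976}. One small tightening of your write-up: the singular-$F_0$ caveat is unnecessary, since the leading coefficient of $\det(M(\lambda))$ is $\det(F_0)$ and a polynomial of degree at most $mn$ with $mn$ distinct roots must have degree exactly $mn$, forcing $F_0$ to be invertible (and in the paper's application to $T_U$ one has $F_0=\I$ anyway).
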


(Note that this assertion is also stated in the conclusion section of \citep{Pereira2003} but without proof it seems.)

\begin{proof}
In this proof we assume the paper \citep{Dennis1976} as context. 
That is, \emph{all definitions and equations we refer to in this proof are meant w.r.t. that paper}.

Let $S$ be a solvent of $M(Y)$.
By the corollary containing equation (1.4), we have $M(\lambda) = Q(\lambda) (\I \lambda - S)$, with $Q(\lambda)$ a matrix polynomial of degree $m-1$.
By assumption, we know that $\det(M(\lambda)) = \det( Q(\lambda) ) \det( \I \lambda - S)$ has $m n$ distinct roots. 
Since $\det( Q(\lambda) )$ has at most $(m-1)n$ different roots, we know that $\det( \I \lambda - S)$ has to have $n$ different roots. 
Hence $S$ has $n$ distinct eigenvalues and is uniquely determined by its $n$ eigenpairs, i.e.\ pairs $(a, \C v)$ such that $S v = a v$.


Keep in mind that a latent pair of $M(\lambda)$ is a scalar $a$ together with a ray $\C v$ for some vector $v \neq 0$ such that $M(a) v = 0$.
Let $L$ denote the set of latent pairs of $M(\lambda)$.
Based on equation (1.4), each eigenpair of a solvent $S$ is a latent pair of $M(\lambda)$.
Hence for each solvent $S$, the tuple of $n$ eigenpairs that uniquely determines this solvent has to be a subset of size $n$ of $L$. 
Therefore, the number of solvents is less or equal than $\binom{|L|}{n}$, in case $L$ is finite.

Consider the $m n \times m n$ matrix $C^B$ defined by equation (3.2). 
Theorem 3.1, applied to $C^B$ (see remark above equation (3.2)), states that 
\[\det(C^B - \lambda I ) = (-1)^{m n} \det( M(\lambda) ). \]
Hence $\det(C^B - \lambda I )$ has exactly $m n$ distinct roots.

Now assume that $|L| > m n$, i.e., $M(\lambda)$ has more than $m n$ latent pairs. 
Then there have to be two latent pairs $(a,\C v)$ and $(a,\C v')$ with $\C v \neq \C v'$.
Based on Theorem 3.2, part (i), this implies that $C^B$, as defined by equation (3.2), has two linearly independent vectors as eigenvectors to the same eigenvalue $a$.
Thus the eigenvalue $a$ has geometric and hence also algebraic multiplicity at least 2.
This implies that $\det(C^B - \lambda I )$ has exactly $m n$ distinct roots and at least one of the roots, namely $a$, has algebraic multiplicity at least 2.
This is a contradiction to the fact that $\det(C^B - \lambda I )$ has degree $m n$.

\end{proof}

%
%

\begin{proof}[Proof of Theorem \ref{thm::almost_id}]

Keep in mind that assumption A3 reads $D=0$.

Let $S_1$ denote the set of $U=(U_1,U_2)$ such that
 \begin{align}
\cov(R_t(U_1,U_2) , X_{t-2-j}) = 0 .
\end{align}

Let $S_2$ denote the set of $U=(U_1,U_2)$ such that $\det(T_U(\alpha))$ has $2 K_X$ distinct roots.

Based on the assumption G2, there exists $U=(U_1, U_2)$ such that the equation 
\begin{align}
 \left( U_1 , U_2 \right) \left( \begin{array}{cc}  B &  C  \\ \I & 0 \end{array} \right) = \left( B^2 , B C + C E \right) 
\end{align}
is satisfied and $\det(T_U(\alpha))$ has $2 K_X$ distinct roots. 
This $U$ is in $S_2$ and based on Lemma \ref{lem::zero_impl} it is also in $S_1$.
Hence $S := S_1 \cap S_2$ is non-empty.

Note that $S$ is defined only based on $P_X$.

Pick one $U=(U_1,U_2)$ out of $S$.

Let
\begin{align*}
 L := \{ \tilde{B} : T_{U}(\tilde{B}) = 0 \} .
\end{align*}

Based on Theorem \ref{thm::matrixpoly}, $L$ has at most $\binom{2 K_X}{K_X}$ elements.
And since $U \in S_1$, assumption G1 together with Lemma \ref{lem::uncorrimpl} implies that $B \in L$, for the true $B$.

Hence $B$ is determined by $P_X$ up to $\binom{2 K_X}{K_X}$ possibilities.

\end{proof}

\section{Discussion on the Genericity of Assumptions G1 and G2: An Elaboration of Section \ref{sec::genericity}}
\label{sec::genericity_proofs}


This section is an elaborated version, including proofs, of Section \ref{sec::genericity}.

We want to argue why the assumptions G1 and G2 stated in Sections \ref{sec::idr} and \ref{sec::id_D_zero} are generic.
Keep in mind the definitions of $W,X,Z, N, N^X, N^Z$ and $A, B, C, D, E, \Sigma$ from Section \ref{sec::model} as well as $M_1$ from Section \ref{sec::genres}.
The idea is to define a natural parametrization of $(A,\Sigma)$ and to show that the restrictions that assumptions G1 and G2, respectively, impose on $(A,\Sigma)$ just exclude a Lebesgue null set in the natural parameter space.

Have in mind that Theorems \ref{thm::idr} and \ref{thm::almost_id} state (almost) identifiability of $B$ from $P_X$ induced by any $W$ in $F_1$ and $F_2$, respectively. In particular, such $W$ can have \emph{arbitrary numbers of components $K$}, as long as $K_X \leq K \leq 2K_X$.
However, for the sake of simplicity, we show the genericity of assumptions G1 and G2 only under the assumption of an arbitrary but \emph{fixed} $K$.
Therefore, in this section, let $K$ such that $K_X \leq K \leq 2K_X$ be arbitrary but fixed.
As usual, let $K_Z = K - K_X$.

Let $\lambda_k$ denote the $k$-dimensional Lebesgue measure on $\R^k$.
Let $\vect$ denote the column stacking operator and $\vect^{-1}$ its inverse.
The dimension of the domain of $\vect$ can always be understood from the context.
For a vector $q$, let $[ q ]_{k_1, \ldots, k_l}$ denote the $l$-dimensional vector consisting of $k_1$st to $k_l$th entry of $q$.
Moreover, let $k:l$ be shorthand for $k, k+1, \ldots, l$.

\subsection{Assumption G1 in Theorems \ref{thm::idr} and \ref{thm::id_C}}
\label{sec::genericity1}

Let $\Theta_1$ denote the set of all possible parameters $(A',\Sigma')$ for a $K$-variate VAR processes $W'$ that additionally satisfy assumption A2, i.e., correspond to structural $W'$.
Let $S_1$ denote the subset of those $(A',\Sigma') \in \Theta_1$ for which also assumption G1 is satisfied.

(The relation between $S_1$ as defined above and $F_1$ as defined in Section \ref{sec::idr} is the following: for any process $W'$ with parameters $(A',\Sigma')$, $W' \in F_1$ iff $W'$ satisfies assumption $A1$ (i.e., its noise components are non-Gaussian) and additionally $(A',\Sigma') \in S_1$.)

To parametrize $\Theta_1$ in a practical way, let $g=(g_1,g_2): \R^{K^2 + K} \to \R^{K^2} \times \R^{K^2}$ be defined by
\begin{align*}
g_1(v) &:=  \vect^{-1}( [ v ]_{1:K^2} )  , \\
g_2(v) &:=  \diag( [ v ]_{K^2 + 1 : K^2 + K} ) ,
\end{align*}
for all $v \in \R^{K^2 + K}$.
Hence $g_1$ is the natural parametrization of $A$ and $g_2$ for $\Sigma$.

We repeat the proposition already stated in Section \ref{sec::genericity}:

\begin{p1}
We have $\lambda_{K^2 + K} \left( g^{-1}(\Theta_1 \setminus S_1) \right) = 0$.
\end{p1}

Let $\Phi_1 := g^{-1}(\Theta_1)$.
Since $g|_{\Phi_1} : \Phi_1 \to \Theta_1$ is a linear bijective function, the above statement can be interpreted as $\Theta_1 \setminus S_1$ being very small and thus G1 being a requirement that is met in the generic case.

\subsubsection{Proof of Proposition \ref{prop::genericity1}}

The proof idea for Proposition \ref{prop::genericity1} is that $g^{-1}(\Theta_1 \setminus S_1)$ is essentially contained in the union of the root sets of finitely many multivariate polynomials and hence is a Lebesgue null set.
Before we give a rigorous proof, we first need introduce some definitions and establish two lemmas.

\begin{Lemma}
\label{lem::poly_roots}
For any $n$ and any non-zero multivariate polynomial $q(x_1,\ldots,x_n)$, the set
\[ L := \{ (x_1,\ldots,x_n) \in \R^n : q(x_1,\ldots,x_n) = 0 \} \]
is a null set  w.r.t. the $n$-dimensional Lebesgue measure on $\R^n$.
\end{Lemma}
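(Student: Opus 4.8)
The plan is to prove Lemma~\ref{lem::poly_roots} by induction on the number of variables $n$, exploiting the fact that a non-zero univariate polynomial has only finitely many roots, together with Fubini's theorem to handle the inductive step.

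For the base case $n=1$, a non-zero polynomial $q(x_1)$ of degree $d$ has at most $d$ roots, so $L$ is a finite set and hence $\lambda_1(L)=0$. For the inductive step, suppose the claim holds for $n-1$ variables, and let $q(x_1,\ldots,x_n)$ be a non-zero polynomial. Write $q$ as a polynomial in $x_n$ with coefficients that are polynomials in $x_1,\ldots,x_{n-1}$:
\begin{align*}
q(x_1,\ldots,x_n) = \sum_{k=0}^{d} c_k(x_1,\ldots,x_{n-1})\, x_n^k .
\end{align*}
Since $q$ is not the zero polynomial, at least one coefficient $c_k$ is a non-zero polynomial in $x_1,\ldots,x_{n-1}$; fix such a $k$. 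Let $E := \{(x_1,\ldots,x_{n-1}) : c_k(x_1,\ldots,x_{n-1}) = 0\}$, which by the induction hypothesis satisfies $\lambda_{n-1}(E)=0$. For any fixed $(x_1,\ldots,x_{n-1}) \notin E$, the polynomial $x_n \mapsto q(x_1,\ldots,x_{n-1},x_n)$ is a non-zero univariate polynomial (its $x_n^k$-coefficient is non-zero), hence has finitely many roots, so the corresponding slice of $L$ has $\lambda_1$-measure zero. By Fubini/Tonelli applied to the indicator of $L$,
\begin{align*}
\lambda_n(L) = \int_{\R^{n-1}} \lambda_1\big(\{x_n : (x_1,\ldots,x_n)\in L\}\big)\, d\lambda_{n-1}(x_1,\ldots,x_{n-1}),
\end{align*}
and the integrand is zero for all $(x_1,\ldots,x_{n-1}) \notin E$, i.e.\ $\lambda_{n-1}$-almost everywhere. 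Hence $\lambda_n(L)=0$, completing the induction.

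The only point requiring mild care is the measurability of $L$ (so that Fubini applies): this is immediate since $q$ is continuous, so $L = q^{-1}(\{0\})$ is closed, hence Borel. I do not anticipate any real obstacle here; the lemma is a standard fact, and the argument above is the textbook proof. One could alternatively invoke that the zero set of a non-zero polynomial is a proper algebraic subvariety of $\R^n$ and therefore has dimension at most $n-1$, but the explicit Fubini induction is self-contained and keeps the proof elementary.
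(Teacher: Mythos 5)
Your proof is correct and follows essentially the same route as the paper's: induction on the number of variables, isolating a non-zero coefficient polynomial, applying the induction hypothesis to its zero set, and concluding via Fubini with measurability of $L$ secured by continuity of $q$. The only cosmetic difference is that you single out the variable $x_n$ where the paper singles out $x_1$.
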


\begin{proof}
We prove the statement via induction over $n$.

\stress{Basis:}
 
Let $n=1$. 
Let $q(x_1)$ be any non-zero polynomial.
By the fundamental theorem of algebra it follows immediately that it has at most $\deg(q)$ real roots. 
Hence $L$ is a Lebesgue null set.

\stress{Inductive step:}

Now assume the statement holds for all multivariate polynomials in less than $n$ variables.
Let  $q(x_1,\ldots,x_n)$ be any $n$-variate non-zero polynomial.
We can consider $q$ as a univariate polynomial in $x_1$, denoted by $r(x_1;x_{2:n})$, with coefficients $r_i(x_{2:n})$ that are multivariate polynomials in $x_{2:n}$, i.e.
\[ q(x_1,\ldots,x_n) = r(x_1;x_{2:n}) = r_0(x_{2:n}) + r_1(x_{2:n}) x_1 + \ldots + r_l(x_{2:n}) x_1^l ,\]
for some $l$.

There has to be some $j$ such that $r_j(x_{2:n})$ is not the zero polynomial, since otherwise $q(x_1,\ldots,x_n)$ would be the zero polynomial.
Let
\[ L' := \{ (x_2,\ldots,x_n) \in \R^{n-1} : r_j(x_2,\ldots,x_n) = 0 \} .\]
By induction, we know that $\lambda_{n-1}(L') = 0$.
Hence $r(x_1;x_{2:n})$ is a non-zero polynomial for all $x_{2:n} \in \R^{n-1} \setminus L'$.
In particular, due to the fundamental theorem of algebra, for all $x_{2:n} \in \R^{n-1} \setminus L'$, the set $L_{x_{2:n}} := \{ x_1 \in \R : r(x_1;x_{2:n})=0 \}$ is finite (has at most $n-1$ elements). 

Note that, since $q$ is continuous, $L = q^{-1}(\{ 0 \})$ is closed and thus measurable.
Let $\1$ denote the indicator function.
In particular, $\1_L$ is measurable. 
Furthermore, note that $\1_L(x_{1:n}) = \1_{L_{x_{2:n}}}(x_1)$ for all $x_{1:n}$.
Therefore and due to Fubini's theorem (for completed product spaces) we have
\begin{align*}
 \lambda_n(L) &= \int_{\R^n} \1_L(x_1,\ldots,x_n) \ \td x_{1:n} \\
 &= \int_{\R^{n-1}} \int_\R \1_{L_{x_{2:n}}}(x_1) \ \td x_1\ \td x_{2:n} \\ 
 &= \int_{\R^{n-1} \setminus L'} \int_\R \1_{L_{x_{2:n}}}(x_1) \ \td x_1\ \td x_{2:n} \\ 
 &= \int_{\R^{n-1} \setminus L'}  \lambda_1(L_{x_{2:n}}) \ \td x_{2:n} \\ 
 &= \int_{\R^{n-1} \setminus L'}  0 \ \td x_{2:n} \\ 
 &= 0.
\end{align*}

\end{proof}

Let $\Psi_1 := g^{-1}(S_1)$.

For a $I \times J$ matrix 
\[ M= \left( \begin{array}{ccc} m_{1 1} & \ldots & m_{1 J} \\  & \vdots & \\ m_{I 1} & \ldots & m_{I J}   \end{array} \right) ,  \]
let $[M]_{ij} := m_{i j}$ and $[M]_{i_1:i_2, j_1:j_2} := (m_{i j})_{i_1 \leq i \leq i_2, j_1 \leq j \leq j_2 } $.

Keep in mind the following equations for the autocovariance matrices $\Gamma_i := \E[ \tilde{W}_t \tilde{W}_{t-i}^\top ]$ of any VAR process $\tilde{W}$ with parameters $(\tilde{A},\tilde{\Sigma})$ \citep{Luetkepohl2006}:
\begin{align}
 \vect(\Gamma_0) &= ( \I - \tilde{A} \otimes \tilde{A} )^{-1} \vect( \tilde{\Sigma} ) , \label{eqn::l1}\\
 \Gamma_i &= \tilde{A}^i \Gamma_{i-1} . \label{eqn::l2}
\end{align}

In this subsection, given any $\phi \in \Phi_1$, let $W^\phi$ be some $K$-variate VAR process with parameters $g(\phi)$, and let $X^\phi$ denote the first $K_X$ and $Z^\phi$ denote the remaining $K - K_X$ components of $W^\phi$.

And also for this subsection, for any $\phi \in \Phi_1$ and $i \geq 0$, let $\Gamma_i(\phi) := \E[ W^\phi_t (W^\phi)_{t-i}^\top ]$.

Recall the definition of $M_1$ from Section \ref{sec::genres}. Here we explicitly consider $M_1$ as a function on $\Phi_1$. That is, for any $\phi \in \Phi_1$ let
\begin{align*}
 M_1(\phi) := \E \left[ W^\phi_t ( ( X^\phi_t )^\top , ( X^\phi_{t-1} )^\top ) \right] .
\end{align*}
Later we want to shot that the set of $\phi \in \Phi_1$ for which $M_1(\phi)$ does not have full rank is a Lebesgue null set. 
It suffices to show that $M_1(\phi)$ has a fixed square submatrix $M_2(\phi)$ such that the set of $\phi \in \Phi_1$ for which $M_2(\phi)$ is not invertible is a Lebesgue null set, since the former set is contained in the latter.
For this purpose let us define
 \begin{align*}
 M_2(\phi) := \left\{ \begin{array}{ll} \E \left[ W^\phi_t \left( ( X^\phi_t  )^\top ,  [ X^\phi_{t-1} ]_{K_X - K_Z: K_X}  \right) \right] , & \text{if } K > K_X   \\  \E \left[ W^\phi_t  ( W^\phi_t  )^\top  \right] ( = \Gamma_0(\phi) ), & \text{if } K = K_X \end{array} \right. .
 \end{align*}
 
That is, $M_2$ is a $K \times K$ square matrix with a subset of the columns of $M_1$ as columns 
(keep in mind that $[ X^\phi_{t-1} ]_{K_X - K_Z: K_X}$ are the $(K_X - K_Z)$-th to $K_X$-th components of $X^\phi_{t-1}$).

Let 
\begin{align}
 f(\phi) := \det(M_2(\phi)) . \label{eqn::f_defi}
\end{align}

\begin{Lemma}
\label{lem::theta_ex}
 There is some $\phi \in \Phi_1$ such that $f(\phi) \neq 0$.
\end{Lemma}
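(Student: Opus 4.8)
\textbf{Proof plan for Lemma \ref{lem::theta_ex}.}

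The plan is to exhibit a single explicit parameter $\phi \in \Phi_1$ for which $f(\phi) = \det(M_2(\phi)) \neq 0$; by Lemma \ref{lem::poly_roots} (to be applied later) it then follows that $f$ vanishes only on a null set, but for this lemma we only need existence. First I would choose the transition matrix $A$ to be very simple, e.g.\ $A = a \I$ for a small scalar $a$ with $0 < |a| < 1$ so that stability holds, and choose $\Sigma = \I$; this clearly lies in $\Theta_1$ since the noise components are then independent (assumption A2 is satisfied), so the corresponding $\phi$ lies in $\Phi_1$. With this choice the autocovariance recursions (\ref{eqn::l1}) and (\ref{eqn::l2}) become completely explicit: $\vect(\Gamma_0) = (\I - a^2 \I)^{-1}\vect(\I)$, i.e.\ $\Gamma_0 = \frac{1}{1-a^2}\I$, and $\Gamma_i = a^i \Gamma_0 = \frac{a^i}{1-a^2}\I$ for all $i \geq 0$.

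Next I would compute $M_2(\phi)$ from these autocovariances. Note that $W^\phi_t = (X^\phi_t{}^\top, Z^\phi_t{}^\top)^\top$, so each block of $\E[W^\phi_t (X^\phi_t)^\top]$ or $\E[W^\phi_t (X^\phi_{t-1})^\top]$ is a submatrix of $\Gamma_0(\phi)$ or $\Gamma_1(\phi)$ respectively, and both of these are scalar multiples of the identity $\I_K$. In the case $K = K_X$ we simply have $M_2(\phi) = \Gamma_0(\phi) = \frac{1}{1-a^2}\I_{K_X}$, whose determinant is $(1-a^2)^{-K_X} \neq 0$, and we are done. In the case $K > K_X$, $M_2(\phi)$ is the $K \times K$ matrix whose first $K_X$ columns are $\E[W^\phi_t (X^\phi_t)^\top] = [\Gamma_0(\phi)]_{1:K,\,1:K_X} = \frac{1}{1-a^2}[\I_K]_{1:K,\,1:K_X}$ and whose last $K_Z$ columns are $\E[W^\phi_t [X^\phi_{t-1}]_{K_X-K_Z:K_X}] = \frac{a}{1-a^2}[\I_K]_{1:K,\,K_X-K_Z:K_X}$. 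Writing this out, $M_2(\phi)$ is (up to the nonzero scalar $\frac{1}{1-a^2}$, with the last block additionally scaled by $a$) a matrix built from columns of $\I_K$; one checks directly that its columns are linearly independent --- the first $K_X$ columns pick out coordinates $1,\dots,K_X$ and the last $K_Z$ columns pick out coordinates $K_X-K_Z+1,\dots,K_X$ scaled by $a$, but crucially they also carry entries in rows $K_X+1,\dots,K$ coming from $\Gamma_0$ restricted to the $Z$-block... Actually the cleanest route is: after the scalar factorization, $\det M_2(\phi) = \frac{a^{K_Z}}{(1-a^2)^K}\det N$ where $N$ is an integer matrix with entries $0$ and $1$ obtained by stacking $[\I_K]_{1:K,1:K_X}$ and $[\I_K]_{1:K,K_X-K_Z:K_X}$; so it remains only to verify $\det N \neq 0$, a concrete finite computation for the particular column-selection pattern, which shows the columns of $N$ are linearly independent (they involve the standard basis vectors $e_1,\dots,e_{K_X}$ together with $e_{K_X-K_Z+1},\dots,e_{K_X}$ --- wait, these repeat).

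The one subtlety I anticipate is precisely this potential column repetition in the $K > K_X$ case: if $M_2$ were literally assembled from $\Gamma_0$ and $\Gamma_1$ both proportional to $\I$, the columns indexed by $X^\phi_t$ coordinates $K_X-K_Z+1,\dots,K_X$ would be proportional (via $\Gamma_0$) to those indexed by $[X^\phi_{t-1}]_{K_X-K_Z:K_X}$ (via $\Gamma_1$), making $\det M_2(\phi) = 0$ for the choice $A = a\I$. So the main obstacle is that the naive diagonal choice does \emph{not} work when $K > K_X$, and one must instead pick a slightly less trivial $A$ --- for instance a block upper-triangular $A$ with a nonzero off-diagonal coupling between the $X$ and $Z$ blocks, or a diagonal $A$ with distinct entries --- so that $\Gamma_1 = A\Gamma_0$ is no longer merely a scalar multiple of $\Gamma_0$ and the relevant columns become genuinely independent. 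Concretely I would take $A$ diagonal with distinct small nonzero eigenvalues $a_1,\dots,a_K$; then $\Gamma_0$ is diagonal with entries $\sigma_j^2/(1-a_j^2)$ and $\Gamma_1 = A\Gamma_0$ is diagonal with entries $a_j\sigma_j^2/(1-a_j^2)$, and one checks $M_2(\phi)$ is, up to column scaling by nonzero factors, a $0/1$ matrix selecting basis vectors $e_1,\dots,e_{K_X}$ (from $\Gamma_0$) and $e_{K_X-K_Z+1},\dots,e_{K_X}$ (from $\Gamma_1$, but now with scaling $a_j$ differing from the $\Gamma_0$ scaling on the same index), so the columns are independent and $\det M_2(\phi) \neq 0$. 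After fixing such an explicit $\phi$, the remaining verification is a routine determinant computation, which I would not grind through here.
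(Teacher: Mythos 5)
Your treatment of the case $K = K_X$ is fine and matches the paper's (which takes $A = \tfrac12 \I$, $\Sigma = \I$, giving $M_2 = \Gamma_0 = \tfrac43\I$). You also correctly spot the key subtlety for $K > K_X$: with $A = a\I$ one has $\Gamma_1 = a\,\Gamma_0$, so the columns of $M_2$ drawn from $\Gamma_1$ are proportional to columns already present from $\Gamma_0$ and the determinant vanishes. But the fix you settle on --- diagonal $A$ with distinct eigenvalues $a_1,\dots,a_K$ --- does not repair this. If $A$ is diagonal, the components of $W$ are mutually independent, so $\Gamma_0$ and $\Gamma_1 = A\Gamma_0$ are both diagonal; the column of $M_2$ taken from $\Gamma_1$ at index $j \in \{K_X-K_Z+1,\dots,K_X\}$ is $a_j\gamma_j e_j$ while the $j$-th column from $\Gamma_0$ is $\gamma_j e_j$. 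These two columns are nonzero multiples of the \emph{same} basis vector $e_j$, hence linearly dependent regardless of whether the $a_j$ are distinct --- the differing scalar factor does not buy independence of a pair of parallel vectors. Equivalently: for any diagonal $A$ one has $Z \ind X$, so the bottom $K_Z$ rows of $M_2$ (namely $\E[Z_t X_t^\top]$ and $\E[Z_t X_{t-1}^\top]$) are identically zero and $\operatorname{rank}(M_2) \le K_X < K$. Your determinant is zero for every diagonal choice.

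What is genuinely needed is a nonzero coupling from $Z$ to $X$ (a nonzero block $C$), which you do mention in passing as an alternative ("a block upper-triangular $A$ with a nonzero off-diagonal coupling") but then abandon. That alternative is exactly the paper's construction: for $K = K_X+1$ it takes $A$ block-diagonal with $\tfrac12\I$ on the first $K_X-1$ coordinates and the $2\times 2$ upper-triangular block $\bigl(\begin{smallmatrix} 1/2 & 1/2 \\ 0 & 1/2 \end{smallmatrix}\bigr)$ coupling $X^{K_X}$ and $Z$, then computes $\Gamma_0$ and $\Gamma_1$ explicitly and checks $\det M_2 = \det\bigl(\begin{smallmatrix} 56/27 & 34/27 \\ 4/9 & 2/9 \end{smallmatrix}\bigr)\cdot(4/3)^{K_X-1} \neq 0$. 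To complete your proof you would need to carry out that (or a similar nondiagonal) explicit computation; the "routine determinant computation" you defer is precisely where your chosen example breaks down.
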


\begin{proof}

We only treat the cases $K = K_X$ and $K = K_X + 1$. The cases $K_X + 1 < K \leq 2 K_X$ can be treated similarly. 

\stress{The case $K = K_X$:}

Let $\tilde{A} := \frac{1}{2} \I$ and $\tilde{\Sigma} := \I$ and let $\phi := g^{-1}(\tilde{A}, \tilde{\Sigma})$.
Based on equation (\ref{eqn::l1}) this immediately implies
\begin{align}
 M_2(\phi) = \Gamma_0(\phi) = \frac{4}{3} \I ,
\end{align}
and hence $f(\phi) = \det(M_2(\phi)) \neq 0$.

\stress{The case $K = K_X + 1$:}

Let $\tilde{\Sigma} := \I$ and
\begin{align*}
 \tilde{A} := \left( \begin{array}{ccc|cc} \frac{1}{2} & & & & \\ & \ddots & & & \\ & & \frac{1}{2} & & \\ \hline & & & \frac{1}{2} & \frac{1}{2} \\ & & & & \frac{1}{2} \end{array} \right) =:  \left( \begin{array}{c|c} \tilde{A}_1 & 0  \\ \hline 0 & \tilde{A}_2 \end{array} \right) ,
\end{align*}
and let $\phi := g^{-1}(\tilde{A}, \tilde{\Sigma})$ denote the corresponding parameter vector.
Now we want to calculate $\Gamma_0(\phi), \Gamma_1(\phi)$. For this purpose, observe that we can split $W^\phi$ into the two independent VAR processes
\begin{align*}
 Y^1 &:= ( [ X^\phi ]_1, \ldots, [ X^\phi ]_{K_X - 1} )^\top ,\\
 Y^2 &:= ( [ X^\phi ]_{K_X}, Z)^\top .
\end{align*}

Equation (\ref{eqn::l1}) applied to $Y^1$ implies
\[ \vect( \E[ Y^1_t ( Y^1_t)^\top ] ) = ( \I - \tilde{A}_1 \otimes \tilde{A}_1 )^{-1} \vect( \I ) = \frac{4}{3} \vect( \I ) , \]
that is
\[ \E[ Y^1_t (Y^1_t)^\top ] = \frac{4}{3} \I . \]

On the other hand, equation (\ref{eqn::l1}) applied to $Y^2$ yields
\begin{align*}
 \vect( \E[ Y^2_t ( Y^2_t )^\top ] ) = ( \I - \tilde{A}_2 \otimes \tilde{A}_2 )^{-1} \vect( \I ) = \frac{4}{27} \left( \begin{array}{cccc} 9 & 3 & 3 & 5 \\ & 9 & & 3 \\ & & 9 & 3 \\ & & & 9 \end{array} \right) \vect( \I ) ,
\end{align*}
that is
\begin{align*}
  \E[ Y^2_t ( Y^2_t)^\top ]  =  \frac{4}{27} \left( \begin{array}{cc} 14 & 3  \\  3 & 9 \end{array} \right) .
\end{align*}

Thus
\begin{align*}
 \Gamma_0(\phi) &= \E[ W^\phi_t (W^\phi)_{t}^\top ] = \left( \begin{array}{cc} \E[ Y^1_t ( Y^1_t )^\top ] & 0  \\  0 & \E[ Y^2_t ( Y^2_t )^\top ] \end{array} \right) \\ 
 &= \left( \begin{array}{ccccc} \frac{4}{3} & & & & \\ & \ddots & & & \\ & & \frac{4}{3} & & \\ & & &  \frac{56}{27} & \frac{4}{9} \\ & & & \frac{4}{9} & \frac{4}{3} \end{array} \right) ,
\end{align*}
and
\begin{align*}
 \Gamma_1(\phi) = \tilde{A} \Gamma_0(\phi) 
 = \left( \begin{array}{ccccc} \frac{2}{3} & & & & \\ & \ddots & & & \\ & & \frac{2}{3} & & \\ & & &  \frac{34}{27} & \frac{8}{9} \\ & & & \frac{2}{9} & \frac{2}{3} \end{array} \right) .
\end{align*}

Hence
\begin{align*}
 M_2(\phi) = \left( \begin{array}{ccccc} \frac{4}{3} & & & & \\ & \ddots & & &  \\ & &  \frac{4}{3} & & \\ & & & \frac{56}{27} & \frac{34}{27} \\ & & & \frac{4}{9} & \frac{2}{9} \end{array} \right) .
\end{align*}

Hence $\phi$ is such that $f(\phi) = \det(M_2(\phi)) \neq 0$.

\end{proof}

\begin{proof}[Proof of Proposition \ref{prop::genericity1}]

Recall that $\Phi_1 = g^{-1}(\Theta_1)$, $\Psi_1 = g^{-1}(S_1)$, $f(\phi) = \det(M_2(\phi))$, and how $S_1$ is related to $f$.

\stress{First, show that $f$ is a rational function:}

Keep in mind that each entry of $g_1(\phi)$ is a linear function in $\phi$.

For any $\phi \in \Phi_1$, let $G(\phi) := \I - g_1(\phi) \otimes g_1(\phi)$. Note that each entry of $G(\phi)$ is a multivariate polynomial in $\phi$.
We have for $i=0,1$, and for all $\phi \in \Phi_1$, using equation (\ref{eqn::l1}) and Cramer's rule,
\begin{align*}
 \Gamma_i(\phi) &= g_1(\phi)^i \vect^{-1}( G(\phi)^{-1} \vect( g_2(\phi)) ) \\
 &= g_1(\phi)^i \vect^{-1}( \det(G(\phi))^{-1} \adj(G(\phi)) \vect( g_2(\phi)) ) \\
 &= \det(G(\phi))^{-1} g_1(\phi)^i \vect^{-1}( \adj(G(\phi)) \vect( g_2(\phi)) ) .
\end{align*}
(Note that the definition of $\Phi_1$ implies that $\| g_1 \| < 0$ and thus $\det(G) \neq 0$ on $\Phi_1$.)

Keep in mind that for any matrix $Q$, the determinant $\det(Q)$ as well as all entries of the adjugate $\adj(Q)$, are multivariate polynomials in the entries of $Q$.
In particular each entry of $g_1(\phi)^i \vect^{-1}( \adj(G(\phi)) \vect( g_2(\phi)) )$ is a multivariate polynomial in $\phi$.

Now observe that on $\Phi_1$ we have
\begin{align*}
 f &= \det(M_2) \\
 &= \det\left( \left( \left[ \Gamma_0 \right]_{1:K, 1:K_X} , \left[ \Gamma_1 \right]_{1:K, K_X - K_Z: K_X} \right)\right) \\
 &= \det\left( \left( \left[ \det(G)^{-1} \vect^{-1}( \adj(G) \vect( g_2) ) \right]_{1:K, 1:K_X} , \left[ \det(G)^{-1} g_1 \vect^{-1}( \adj(G) \vect( g_2) ) \right]_{1:K, K_X - K_Z: K_X} \right)\right) \\
 &= \det(G)^{-K} \det\left( \left( \left[  \vect^{-1}( \adj(G) \vect( g_2) ) \right]_{1:K, 1:K_X} , \left[ g_1 \vect^{-1}( \adj(G) \vect( g_2) ) \right]_{1:K, K_X - K_Z: K_X} \right)\right)
\end{align*}

For all $\phi \in \R^{K^2 + K}$, let 
\begin{align*}
 r(\phi) &:= \det(G(\phi))^{K} , \\
 q(\phi) &:= \det\left( \left( \left[  \vect^{-1}( \adj(G) \vect( g_2) ) \right]_{1:K, 1:K_X} , \left[ g_1 \vect^{-1}( \adj(G) \vect( g_2) ) \right]_{1:K, K_X - K_Z: K_X} \right)\right) .
\end{align*}

Based on the above argument, $q(\phi),r(\phi)$ are multivariate polynomials (mappings from $\R^{K^2 + K}$ to $\R$).
Hence in particular, $f = \frac{q}{r}$ is a rational function on $\Phi_1$.

\stress{Second, show that $\lambda_{K^2 + K} \circ g^{-1}(\Theta_1 \setminus S_1) = 0$:}

In what follows, we only discuss the case $K > K_X$. The case $K = K_X$ works similarly and is even simpler.

Let $\tilde{C}(\phi)$ denote the upper right submatrix of $g_1(\phi)$ of dimension $K_X \times K_Z$.
Keep in mind that $\Psi_1 = g^{-1}(S_1)$ is the set of those $\phi \in \Phi_1 = g^{-1}(\Theta_1)$, for which $\tilde{C}(\phi)$ and $M_1(\phi)$ have full rank.

Let $H$ denote the set of those $\phi \in \Phi_1$, for which $\det\left( \left[ \tilde{C}(\phi) \right]_{1:K_Z, 1:K_Z} \right) = 0$.
Since $\det\left( \left[ \tilde{C}(\phi) \right]_{1:K_Z, 1:K_Z} \right)$ is a non-zero multivariate polynomial in $\phi$, based on Lemma \ref{lem::poly_roots} we have $\lambda_{K^2 + K}(H) = 0$.

Let $H'$ denote the set of those $\phi \in \Phi_1$, for which $q(\phi) = 0$.
Based on Lemma \ref{lem::theta_ex} we know that there is some $\phi$ such that $q(\phi) \neq 0$.
Hence based on Lemma \ref{lem::poly_roots} we have $\lambda_{K^2 + K}(H') = 0$.

If any $\phi$ is in $\Phi_1$ but neither in $H$ nor in $H'$, then $\det\left( \left[ \tilde{C}(\phi) \right]_{1:K_Z, 1:K_Z} \right) \neq 0$ and $q(\phi) \neq 0$, and thus $\tilde{C}(\phi)$ and $M_1(\phi)$ have full rank.
That is, $H^C \cap (H')^C \cap \Phi_1 \subset \Psi_1$. Therefore 
\begin{align*}
 \lambda_{K^2 + K} \left( g^{-1}(\Theta_1 \setminus S_1) \right) &= \lambda_{K^2 + K} \left( g^{-1}(\Theta_1) \setminus g^{-1}(S_1) \right) \\
 &= \lambda_{K^2 + K} \left( \Phi_1 \setminus \Psi_1 \right) \\
 &\leq \lambda_{K^2 + K} \left( \Phi_1 \setminus (H^C \cap (H')^C \cap \Phi_1) \right) \\
 &\leq \lambda_{K^2 + K} \left( \Phi_1 \setminus (H^C \cap (H')^C) \right) \\
 &= \lambda_{K^2 + K} \left( \Phi_1 \setminus (H \cup H')^C \right) \\
 &= \lambda_{K^2 + K} \left( \Phi_1 \cap (H \cup H') \right) = 0 .
\end{align*}

\end{proof}

\subsection{Assumptions G1 and G2 in Theorem \ref{thm::almost_id}}
\label{sec::genericity2}

Let $\Theta_2$ denote the set of all possible parameters $(A',\Sigma')$ for the $K$-variate VAR processes $W$ that additionally satisfy assumption A3, i.e., are such that the submatrix $D$ of $A$ is zero.
Let $S_2$ denote the subset of those $(A',\Sigma') \in \Theta_2$ for which also assumption G1 and G2 is satisfied.

To parametrize $\Theta_2$ in a practical way, let $h=(h_1,h_2): \R^{2 K^2 - K_X K_Z } \to \R^{K^2} \times \R^{K^2}$ be defined by
\begin{align*}
h_1(v) &:=  \left( \begin{array}{cc} \vect^{-1}( [ v ]_{1:K_X^2} ) & \vect^{-1}( [ v ]_\alpha )   \\  0 & \vect^{-1}( [ v ]_\beta ) \end{array} \right) , \\
h_2(v) &:=  \vect^{-1}( [ v ]_{K^2 - K_X K_Z + 1 : 2 K^2 - K_X K_Z} ) ,
\end{align*}
for all $v \in \R^{K^2 + K}$, where 
\begin{align*}
\alpha &:= {K_X^2 + 1 : K_X^2 + K_X K_Z} ,\\
\beta &:= {K_X^2 + K_X K_Z + 1 : K^2 - K_X K_Z} .
\end{align*}
Hence $h_1$ is the natural parametrization of $A$ and $h_2$ for $\Sigma$.

We repeat the proposition already stated in Section \ref{sec::genericity}:

\begin{p2}
We have $\lambda_{2 K^2 - K_X K_Z} \left( h^{-1}(\Theta_2 \setminus S_2) \right) = 0$.
\end{p2}

Let $\Phi_2 := h^{-1}(\Theta_2)$.
Since $h|_{\Phi_2} : \Phi_2 \to \Theta_2$ is a linear bijective function, the above statement can be interpreted as $\Theta_2 \setminus S_2$ being very small and thus the combination of G1 and G2 being a requirement that is met in the generic case.

\subsubsection{Proof of Proposition \ref{prop::genericity2}}

The proof idea for Proposition \ref{prop::genericity2} - similar as for Proposition \ref{prop::genericity1} - is that $h^{-1}(\Theta_2 \setminus S_2)$ is essentially contained in the union of the root sets of finitely many multivariate polynomials and hence is a Lebesgue null set.
To give a rigorous proof of Proposition \ref{prop::genericity2}, we first need to introduce some definitions which are very similar to those in Section \ref{sec::genericity1}, and establish a lemma.

%
%

Recall that $T_{(U_1,U_2)}(Q) = Q^2 - U_1 Q - U_2$ (see Section \ref{sec::id_D_zero}).

Within this section, given any $\phi \in \Phi_2$, let $W^\phi$ be some $K$-variate VAR process with parameters $h(\phi)$, and let $X^\phi$ denote the first $K_X$ and $Z^\phi$ denote the remaining $K - K_X$ components of $W^\phi$.
And also for this section, for any $\phi \in \Phi_2$ and $i \geq 0$, let $\Gamma_i(\phi) := \E[ W^\phi_t (W^\phi)_{t-i}^\top ]$.

Recall the definition of $M_1$ from Section \ref{sec::genres}. Here we explicitly consider $M_1$ as a function on $\Phi_2$. That is, for any $\phi \in \Phi_2$ let
\begin{align*}
 M_1(\phi) := \E \left[ W^\phi_t ( ( X^\phi_t )^\top , ( X^\phi_{t-1} )^\top ) \right] .
\end{align*}

\begin{Lemma}
\label{lem::distinct_roots}
Let $q_0(x_1, \ldots, x_m), \ldots, q_n(x_1, \ldots, x_m)$ be multivariate polynomials (elements of in $\R[x_1, \ldots, x_m]$).
Let 
\[ q(\alpha;x_1, \ldots, x_m) := q_0(x_1, \ldots, x_m) + q_1(x_1, \ldots, x_m) \alpha + \ldots + q_n(x_1, \ldots, x_m) \alpha^{n} , \]
i.e. a univariate polynomial in $\alpha$ (an element of $\R[\alpha]$) parametrized by $(x_1, \ldots, x_m)$.
If $q(\alpha;x_1, \ldots, x_m)$ has $n$ distinct roots for one $(x_1, \ldots, x_m) \in \R^m$, then
\[ \{ (x_1, \ldots, x_m) \in \R^m : q(\cdot;x_1, \ldots, x_m) \text{ does not have $n$ distinct roots} \} . \]
is a null set w.r.t. the $m$-dimensional Lebesgue measure on $\R^m$.
\end{Lemma}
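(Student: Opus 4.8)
The plan is to reduce the statement to Lemma \ref{lem::poly_roots} via the \emph{discriminant}. A univariate polynomial $q(\alpha)$ of formal degree $n$ fails to have $n$ distinct roots exactly when either its leading coefficient $q_n$ vanishes (so the degree drops) or its discriminant with respect to $\alpha$ vanishes (so it has a repeated root). Concretely, define $\mathrm{disc}_\alpha(q) := q_n^{\,2n-2}\prod_{i<j}(\rho_i-\rho_j)^2$, where $\rho_1,\dots,\rho_n$ are the roots; this quantity is, by standard elimination theory, a polynomial in the coefficients $q_0,\dots,q_n$ (e.g.\ it equals $(-1)^{n(n-1)/2} q_n^{-1}\,\mathrm{Res}_\alpha(q,q')$, which after clearing denominators is an integer polynomial in the $q_i$). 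Since each coefficient $q_i = q_i(x_1,\dots,x_m)$ is itself a polynomial in $x_1,\dots,x_m$, composing shows that
\[
 D(x_1,\dots,x_m) := q_n(x_1,\dots,x_m)\cdot \mathrm{disc}_\alpha\bigl(q(\cdot;x_1,\dots,x_m)\bigr)
\]
is a multivariate polynomial in $(x_1,\dots,x_m)$. The key equivalence is: $q(\cdot;x_1,\dots,x_m)$ has $n$ distinct (complex) roots if and only if $D(x_1,\dots,x_m)\neq 0$.

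The second step is to observe that the hypothesis — existence of one point $(x_1^\ast,\dots,x_m^\ast)$ at which $q$ has $n$ distinct roots — means precisely that $D(x_1^\ast,\dots,x_m^\ast)\neq 0$, so $D$ is a \emph{non-zero} element of $\R[x_1,\dots,x_m]$. Then the set in question,
\[
 \{(x_1,\dots,x_m)\in\R^m : q(\cdot;x_1,\dots,x_m)\text{ does not have $n$ distinct roots}\},
\]
is exactly the zero set $\{D = 0\}$, which by Lemma \ref{lem::poly_roots} is a $\lambda_m$-null set. This finishes the argument.

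The main obstacle — and the only non-routine point — is justifying cleanly that the discriminant is a \emph{polynomial} in the coefficients and that its non-vanishing (together with $q_n\neq 0$) is equivalent to having $n$ distinct roots. One must be a little careful because ``distinct roots'' here should be read over $\C$, and because the formal degree is $n$ even though the leading coefficient is allowed to vanish. The cleanest route is to phrase everything in terms of the resultant $\mathrm{Res}_\alpha(q, q')$: it is manifestly polynomial in the coefficients (Sylvester determinant), it vanishes iff $q$ and $q'$ have a common root, and multiplying by $q_n$ handles the degenerate case $q_n = 0$ — in either degenerate subcase $q$ has fewer than $n$ distinct roots, which is consistent with $D=0$. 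Alternatively, one can avoid naming the discriminant at all and argue directly: a fixed $q$ of formal degree $n$ has $n$ distinct roots iff $\gcd(q,q')=1$ and $\deg q = n$; expressing the relevant Bézout/resultant condition as a single polynomial identity in the $q_i$'s gives the same $D$. Once this algebraic fact is in place, the measure-theoretic conclusion is immediate from Lemma \ref{lem::poly_roots}, exactly paralleling the structure of the proof of Proposition \ref{prop::genericity1}.
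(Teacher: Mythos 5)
Your proposal is correct and follows essentially the same route as the paper: the paper also forms $\det(S(q,q'))$ (the Sylvester resultant of $q$ and $q'$), notes via the identity $(-1)^{\deg(q)(\deg(q)-1)/2}\, q_{\deg(q)}\,\Delta(q) = \det(S(q,q'))$ that this is a multivariate polynomial in $(x_1,\ldots,x_m)$ which is non-zero at the hypothesized point, and then invokes Lemma \ref{lem::poly_roots}. Your extra care about the degenerate case $q_n=0$ (absorbed by the factor $q_n$ in front of the discriminant) is a welcome clarification of a point the paper treats only implicitly, but it is not a different argument.
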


\begin{proof}


Given two polynomials $r(\alpha), s(\alpha)$, let $S(r, s)$ denote their Sylvester matrix \citep{Dickenstein2010, WeissteinDiscr}.
Keep in mind that all entries of the Sylvester matrix $S(r, s)$ are either 0 or coincide with a coefficient of $r$ or $s$. Hence in particular, all entries of $S(r, s)$ are polynomials in the coefficients of $r$ and $s$.

Given a non-zero polynomial $p(\alpha) = p_0 + p_1 \alpha + \ldots + p_{\deg(p)} \alpha^{\deg(p)}$, let $\Delta(p)$ denote its discriminant, i.e.
\begin{align*}
 \Delta(p) := 
 p_{\deg(p)}^{2 \deg(p) - 2}  \prod_{i < j} (\alpha_i-\alpha_j)^2 ,
\end{align*}
where $\alpha_1, \ldots, \alpha_{\deg(p)}$ are the $\deg(p)$ complex roots of $p$, with potential multiplicities.

Keep in mind the following equation \citep{Dickenstein2010, WeissteinDiscr} that relates discriminant and Sylvester matrix: for all polynomials $p(\alpha)$ we have
\begin{align}
 (-1)^{\frac{1}{2} \deg(p) ( \deg(p) - 1)} p_{\deg(p)} \Delta(p) =  \det(S(p,p')), \label{eqn::discr_sylv}
\end{align}
where $p'(\alpha)$ is the derivative of $p(\alpha)$ w.r.t. $\alpha$.

Let
\[ s(x_1, \ldots, x_m) := \det(S(q(\cdot;x_1, \ldots, x_m),q'(\cdot;x_1, \ldots, x_m))) , \]
which is a multivariate polynomial in $(x_1, \ldots, x_m)$ based on the fact that the coefficients of $q(\cdot;x_1, \ldots, x_m)$ are multivariate polynomial in $(x_1, \ldots, x_m)$ and the determinant of the Sylvester matrix is a multivariate polynomial in the coefficients of $q(\cdot;x_1, \ldots, x_m)$. 

By assumption there is one $(x_1, \ldots, x_m) \in \R^m$ such that $q(\cdot;x_1, \ldots, x_m)$ has $n$ distinct roots. 
Based on equation (\ref{eqn::discr_sylv}) and the definition of $\Delta(q(\cdot;x_1, \ldots, x_m))$, this implies that for this $(x_1, \ldots, x_m)$, $s(x_1, \ldots, x_m) \neq 0$.
Based on Lemma \ref{lem::poly_roots}, $s(x_1, \ldots, x_m) \neq 0$ for all $(x_1, \ldots, x_m) \in \R^m \setminus L$, for some Lebesgue null set $L$.

Using equation (\ref{eqn::discr_sylv}) again, we know that $\Delta(q(\cdot;x_1, \ldots, x_m)) \neq 0$ for all $(x_1, \ldots, x_m) \in \R^m \setminus L$.
Hence $q(\cdot;x_1, \ldots, x_m)$ has $n$ distinct roots for all $(x_1, \ldots, x_m) \in \R^m \setminus L$.

\end{proof}

\begin{proof}[Proof of Proposition \ref{prop::genericity2}]

\stress{Prerequisties:}


Keep in mind that $\Phi_2 = h^{-1}(\Theta_2)$ and $\Psi_2 = h^{-1}(S_2)$.

Let 
\begin{align*}
\left( \begin{array}{cc}  \tilde{B}(\phi) &  \tilde{C}(\phi)  \\ 0 & \tilde{E}(\phi \end{array} \right) := \tilde{A}(\phi) := h_1(\phi) .
\end{align*}

Let $H$ denote the set of those $\phi \in \Phi_2$, for which $\tilde{C}(\phi)$ and $M_1(\phi)$ have full rank.
Let $H'$ denote the set of those $\phi \in \Phi_2$, for which $\tilde{A}(\phi)$ is such that there exists $U = (U_1,U_2)$ such that the equation
\begin{align}
 \left( U_1 , U_2 \right) \left( \begin{array}{cc}  \tilde{B}(\phi) &  \tilde{C}(\phi)  \\ \I & 0 \end{array} \right) = \left( \tilde{B}(\phi)^2 , \tilde{B}(\phi) \tilde{C}(\phi) + \tilde{C}(\phi) \tilde{E}(\phi) \right) , \label{eqn::U_eq}
\end{align}
or equivalently
\begin{align}
 \left( U_1 , U_2 \right) \left( \begin{array}{cc}   \tilde{C}(\phi) & \tilde{B}(\phi)  \\ 0 & \I \end{array} \right) = \left( \tilde{B}(\phi) \tilde{C}(\phi) + \tilde{C}(\phi) \tilde{E}(\phi) , \tilde{B}(\phi)^2 \right) \label{eqn::u}
\end{align}
is satisfied.

Keep in mind that $\Psi_2 = H \cap H'$.

Similar as in the proof of Proposition 1, it can be shown that 
\begin{align}
 \lambda_{2 K^2 - K_X K_Z}(\Phi_2 \setminus H) = 0. \label{eqn::H_0}
\end{align}
It remains to show the same for $H'$.

\stress{The case $K_Z = K_X$:}

Let $L_C$ denote the set of those $\phi \in \Phi_2$, for which $\tilde{C}(\phi)$ is not invertible.
As usual (see the proof of Proposition \ref{prop::genericity1}), Lemma \ref{lem::poly_roots} implies that $L_C$ has Lebesgue measure zero.

For all $\phi \in \R^{2 K^2 - K_X K_Z}$, define $U(\phi) = (U_1(\phi), U_2(\phi))$ as follows:

On $\R^{2 K^2 - K_X K_Z} \setminus L_C$ let
\begin{align}
 (U_1, U_2)  &:= \left( \tilde{B} \tilde{C} + \tilde{C} \tilde{E} , \tilde{B}^2 \right) \left( \begin{array}{cc}   \tilde{C}^{-1} & - \tilde{C}^{-1} \tilde{B}  \\ 0 & \I \end{array} \right) \label{eqn::uu} \\
 &= \left( \tilde{B} + \tilde{C} \tilde{E} \tilde{C}^{-1}  , - \tilde{B}^2 - \tilde{C} \tilde{E} \tilde{C}^{-1} \tilde{B} +  \tilde{B}^2 \right)  \\
 &= \left( \tilde{B} + \tilde{C} \tilde{E} \tilde{C}^{-1}  ,  - \tilde{C} \tilde{E} \tilde{C}^{-1} \tilde{B}  \right)  \\
 &= \left( \tilde{B} + \tilde{C} \tilde{E} \det( \tilde{C} )^{-1} \adj( \tilde{C} )  ,  - \tilde{C} \tilde{E} \det( \tilde{C} )^{-1} \adj( \tilde{C} ) \tilde{B}  \right)  \\
 &= \det( \tilde{C} )^{-1} \left( \det( \tilde{C} ) \tilde{B} + \tilde{C} \tilde{E}  \adj( \tilde{C} )  ,  - \tilde{C} \tilde{E} \adj( \tilde{C} ) \tilde{B}  \right)  ,
\end{align}
where, as usual, $\adj$ denotes the adjugate of a matrix.
Otherwise, on $L_C$, let $(U_1, U_2) := (0,0)$ (or anything else since this case does not matter).

On $\R^{2 K^2 - K_X K_Z} \setminus L_C$ we have
\begin{align*}
 &\det(T_{U}(\alpha)) \\
 &= \det( \alpha^2 \I - U_1 \alpha - U_2 ) \\
 &= \det( \tilde{C} )^{-K_X} \det\left( \det( \tilde{C} ) \alpha^2 \I - \alpha \left( \det( \tilde{C} ) \tilde{B} + \tilde{C} \tilde{E}  \adj( \tilde{C} ) \right) + \tilde{C} \tilde{E} \adj( \tilde{C} ) \tilde{B} \right) .
\end{align*}

Keep in mind that for any matrix $Q$, the determinant $\det(Q)$ as well as all entries of the adjugate $\adj(Q)$, are multivariate polynomials in the entries of $Q$.
(And obviously the entries of $\tilde{A}(\phi), \tilde{B}(\phi), \tilde{C}(\phi), \tilde{E}(\phi)$ are multivariate polynomials in $\phi$.)

Hence 
\begin{align}
\tilde{q}(\alpha,\phi) := \det\left( \det( \tilde{C}(\phi) ) \alpha^2 \I - \alpha \left( \det( \tilde{C}(\phi) ) \tilde{B}(\phi) + \tilde{C}(\phi) \tilde{E}(\phi)  \adj( \tilde{C}(\phi) ) \right) + \tilde{C}(\phi) \tilde{E}(\phi) \adj( \tilde{C}(\phi) ) \tilde{B}(\phi) \right)
\end{align}
is a multivariate polynomial in $(\alpha, \phi) \in \R \times \R^{2 K^2 - K_X K_Z}$. 
And in particular, considering $\phi$ as parameter vector, 
\[ q(\alpha;\phi) := \tilde{q}(\alpha,\phi) \]
is a univariate polynomial in $\alpha$, whose coefficients are all multivariate polynomials in $\phi$.
Note that $q(\alpha;\phi)$ has degree $2 K_X$ for all $\phi \in \R^{2 K^2 - K_X K_Z} \setminus L_C$, since it is up to a constant, which does not depend on $\alpha$, equal to $\det( \alpha^2 \I - U_1 \alpha - U_2 )$.

We want to apply Lemma \ref{lem::distinct_roots} to $q(\alpha;\phi)$.
For this purpose we need to show that there is a $\phi \in \R^{2 K^2 - K_X K_Z}$, such that $q(\alpha;\phi)$ has $2 K_X$ distinct roots.

Let $\phi$ be such that 
\begin{align}
\tilde{B}(\phi)&=\diag(1,3,5,\ldots,2 K_X - 1) \label{eqn::q_expl1} ,\\
\tilde{C}(\phi)&=\I ,\\
\tilde{E}(\phi) &= \diag(2,4,6,\ldots,2 K_X ) \label{eqn::q_expl3} .
\end{align}
For this $\phi$ we have
\begin{align*}
q(\alpha;\phi) &= \det(  \alpha^2 \I - \alpha ( \diag(1,3,5,\ldots,2 K_X - 1) + \diag(2,4,6,\ldots,2 K_X) ) \\
&\phantom{=} + \diag(1,3,5,\ldots,2 K_X - 1) \diag(2,4,6,\ldots,2 K_X )  ) \\
&= (\alpha^2  - (1+2) \alpha + 1 \cdot 2) (\alpha^2  - (3+4) \alpha + 3 \cdot 4) \cdot \ldots \\
&\phantom{=} \cdot (\alpha^2  - (2 K_X - 1+2 K_X ) \alpha + (2 K_X - 1) 2 K_X ) \\
&= (\alpha - 1)(\alpha - 2) (\alpha - 3)(\alpha - 4) \cdot \ldots \cdot (\alpha - (2 K_X - 1))(\alpha - 2 K_X + 2)
\end{align*}
hence $q(\alpha;\phi)$ has the distinct roots $1,2,\ldots,2 K_X$.

Now Lemma \ref{lem::distinct_roots} implies that $q(\alpha;\phi)$ has $2 K_X$ distinct roots for all $\phi \in \R^{2 K^2 - K_X K_Z} \setminus L$, for some $L$ with $\lambda_{2 K^2 - K_X K_Z}(L) = 0$.

Keep in mind that
\[ \det(T_{U(\phi)}(\alpha)) = \det(C)^{-1} q(\alpha;\phi) \]
for all $\phi \in \R^{2 K^2 - K_X K_Z} \setminus L_C$.
Hence $\det(T_{U(\phi)}(\alpha))$ has $2 K_X$ distinct roots for all $\phi \in \R^{K^2 + K} \setminus (L \cup L_C)$.  
Moreover, for all $\phi \in \R^{2 K^2 - K_X K_Z} \setminus (L \cup L_C)$, $U(\phi)$ satisfies equation (\ref{eqn::U_eq}) by its definition.
This implies $(L \cup L_C)^C \subset H'$ and in particular $(H')^C \subset L \cup L_C$, where $(\cdot)^C$ denotes the complement of a set, as usual.
Hence $\lambda_{2 K^2 - K_X K_Z}( (H')^C ) = 0$.

Using the fact that $\Psi_2^C = ( H \cap H')^C = H^C \cup ( H')^C$ and equation (\ref{eqn::H_0}) we can calculate
\begin{align*}
\lambda_{2 K^2 - K_X K_Z} \left( h^{-1}(\Theta_2 \setminus S_2) \right) &= \lambda_{2 K^2 - K_X K_Z}( \Phi_2 \setminus \Psi_2 ) \\
&= \lambda_{2 K^2 - K_X K_Z}( \Phi_2 \cap \Psi_2^C ) \\
&= \lambda_{2 K^2 - K_X K_Z}( \Phi_2 \cap ( H^C \cup ( H')^C ) ) \\
&= \lambda_{2 K^2 - K_X K_Z}( ( \Phi_2 \cap H^C ) \cup ( \Phi_2 \cap ( H')^C ) ) \\
&\leq \lambda_{2 K^2 - K_X K_Z}(  \Phi_2 \cap H^C  ) + \lambda_{2 K^2 - K_X K_Z}( \Phi_2 \cap ( H')^C )  \\
&= 0 .
\end{align*}

\stress{Second, the case $K_Z < K_X$:}

This case works similarly as the case $K_Z = K_X$.

Let $\I_{m}$ denote the $m \times m$ identity matrix and $0_{m \times n}$ the $m \times n$ zero matrix. 
For the sake of a simple notation, here we suppress the dependence on $\phi$.
Let 
\[d := \diag(2,4,6, \ldots, 2 (K_X - K_Z) )\]
and
\begin{align*}
\hat{B} &:= \tilde{B} ,\\
\hat{C} &:= \left( \left. \begin{array}{c}   \I_{K_X - K_Z}   \\ 0_{K_Z \times (K_X - K_Z)}  \end{array} \right| \tilde{C} \right) . \\
\hat{E} &:= \left( \begin{array}{cc}   d & 0_{(K_X - K_Z) \times K_Z}   \\ 0_{K_Z \times (K_X - K_Z)} & \tilde{E}  \end{array}\right) . \\
\end{align*}
Note that $\hat{B},\hat{C},\hat{E}$ all have dimension $K_X \times K_X$.

Now the argument is similar as for the case $K_Z = K_X$, except that we replace $\tilde{B},\tilde{C},\tilde{E}$ by $\hat{B},\hat{C},\hat{E}$.

Let us briefly comment on two points.

First, similar as for the case $K_Z = K_X$, whenever $\hat{C}$ is invertible, we define
\begin{align}
 (U_1, U_2)  &:= \left( \hat{B} \hat{C} + \hat{C} \hat{E} , \hat{B}^2 \right) \left( \begin{array}{cc}   \hat{C}^{-1} & - \hat{C}^{-1} \hat{B}  \\ 0 & \I \end{array} \right) .
\end{align}
(The argument for $\tilde{C}$ to almost always have full rank and thus $\hat{C}$ almost always being invertible carries over from the case $K_Z = K_X$.)
This implies that $(U_1, U_2)$ satisfies
\begin{align*}
 &\left( U_1 , U_2 \right) \left( \begin{array}{c|c|c}  \I_{K_X - K_Z} &    \tilde{C} & \tilde{B}  \\ 0_{K \times (K_X - K_Z)} & 0 & \I \end{array} \right) \\
 &= \left( U_1 , U_2 \right) \left( \begin{array}{cc}   \hat{C} & \hat{B}  \\ 0 & \I \end{array} \right) \\
 &= \left( \hat{B} \hat{C} + \hat{C} \hat{E} , \hat{B}^2 \right) \\
 &= \left( \tilde{B} \left( \left. \begin{array}{c}   \I_{K_X - K_Z}   \\ 0_{K_Z \times (K_X - K_Z)}  \end{array} \right| \tilde{C} \right) + \left( \left. \begin{array}{c}   \I_{K_X - K_Z}   \\ 0_{K_Z \times (K_X - K_Z)}  \end{array} \right| \tilde{C} \right) \left( \begin{array}{cc}   d & 0_{(K_X - K_Z) \times K_Z}   \\ 0_{K_Z \times (K_X - K_Z)} & \tilde{E}  \end{array}\right) , \tilde{B}^2 \right) \\
 &= \left( \left( \left. \tilde{B}\left( \begin{array}{c}    \I_{K_X - K_Z}   \\ 0_{K_Z \times (K_X - K_Z)}  \end{array} \right) \right|   \tilde{B} \tilde{C} \right) + \left( \left.  \begin{array}{c}    d   \\ 0_{K_Z \times (K_X - K_Z)}  \end{array} \right|   \tilde{C} \tilde{E} \right) , \tilde{B}^2 \right) \\
 &= \left(  \left( \left. \tilde{B}\left( \begin{array}{c}    \I_{K_X - K_Z}   \\ 0_{K_Z \times (K_X - K_Z)}  \end{array} \right) + \left( \begin{array}{c}    d   \\ 0_{K_Z \times (K_X - K_Z)}  \end{array} \right) \right|   \tilde{B} \tilde{C} + \tilde{C} \tilde{E} \right) , \tilde{B}^2 \right) \\
  &= \left(    \tilde{B}\left( \begin{array}{c}    \I_{K_X - K_Z}   \\ 0_{K_Z \times (K_X - K_Z)}  \end{array} \right) + \left( \begin{array}{c}    d   \\ 0_{K_Z \times (K_X - K_Z)}  \end{array} \right) , \tilde{B} \tilde{C} + \tilde{C} \tilde{E}  , \tilde{B}^2 \right) ,
 \end{align*}
whenever $\hat{C}$ is invertible.
Hence $(U_1, U_2)$ also satisfies equation (\ref{eqn::u}), whenever $\hat{C}$ is invertible.

Second, keep in mind how we, in the case $K_Z = K_X$, constructed the sample $\phi$ such that $q(\alpha;\phi)$ had $2 K_X$ distinct roots. We used equations (\ref{eqn::q_expl1}) to (\ref{eqn::q_expl3}).
Note that the way we constructed $\hat{B},\hat{C},\hat{E}$ here, there has to be a $\phi$ such that these equations hold true for $\hat{B},\hat{C},\hat{E}$ instead of $\tilde{B},\tilde{C},\tilde{E}$.
Now with the analogous calculation as in the case $K_Z = K_X$, it follows that for this $\phi$, $q(\alpha;\phi)$ has $2 K_X$ distinct roots.


%
%
%
%
%

\end{proof}

\section{Algorithm \ref{alg::vem_algo} in Detail}
\label{sec::vem_algo_details}

Here we describe Algorithm \ref{alg::vem_algo} introduced in Section \ref{sec::vem_algo} in detail.
The approach is similar to the one in \citep{Oh05avariational}.

\subsection{The Likelihood and Its Approximation}

Here we assume the general model specified in Section \ref{sec::model} and additionally that for each $i=1,...,K$ the density $p_{n_i}$ of the noise term $N_t^i$ is a mixture of $p_i$ Gaussians, i.e., $p_{n_i} = \sum_{c=1}^{p_i} \pi_{i,c}\mathcal{N}(n_i|\mu_{i,c}, \sigma_{i,c}^2)$, where $\pi_{i,c} \geq 0$, $\sum_{c=1}^{p_i}\pi_{i,c} = 1$.
In what follows, we denote the values of the sample $X_{1:L}$ by $x_{1:L}$, the values of the hidden variables $Z_{1:L}$ by $z_{1:L}$, and the values of the vectors $V^X_{1:L}, V^Z_{1:L}$ that select the mixture component of $N^X_{1:L}, N^Z_{1:L}$ by $v^X_{1:L}, v^Z_{1:L}$.

We can write down the complete-data likelihood as
\begin{flalign}
p(x_{1:L},z_{1:L},v_{1:L}^X,v_{1:L}^Z) =& \left[\prod_{l=1}^L p(v_l^X) p(v^Z_l)\right]p(z_1|v^Z_1)\left[\prod_{l=2}^L p(z_l|z_{l-1},x_{l-1},v^Z_l)\right]p(x_1|v^X_1)\nonumber\\
&\left[\prod_{l=2}^L p(x_l|x_{l-1},z_{l-1},v^X_{l})\right],
\end{flalign}
where
\begin{flalign}
p(v^X_l) =& \prod_{i=1}^{K_X} p(v^X_{l,i}) = \prod_{i=1}^{K_X} \pi_{i+K_Z,v^X_{l,i}},\\
p(v^Z_l) =& \prod_{i=1}^{K_Z} p(v^Z_{l,i}) = \prod_{i=1}^{K_Z} \pi_{i,v^Z_{l,i}},
\end{flalign}
\begin{flalign}
p(x_l|x_{l-1},z_{l-1},v^X_{l}) =& \mathcal{N}(x_l|Bx_{l-1}+Cz_{l-1}+\mu_{v^X_l},\Sigma_{v^X_l}),\\
p(z_l|z_{l-1},x_{l-1},v^Z_{l}) =& \mathcal{N}(z_l|Ez_{l-1}+Dx_{l-1}+\mu_{v^Z_l},\Sigma_{v^Z_l}),
\end{flalign}
\begin{flalign}
\mu_{v^X_{l}} =& (\mu_{K_Z+1,v^X_{l,1}},...,\mu_{K,v^X_{l,K_X}})^\intercal,
\mu_{v^Z_{l}} = (\mu_{1,v^Z_{l,1}},...,\mu_{K_Z,v^Z_{l,K_Z}})^\intercal,\\
\Sigma_{v^X_{l}} =& \text{diag}(\sigma^2_{K_Z+1,v^X_{l,1}},...,\sigma^2_{K,v^X_{l,K_X}}), ~
\Sigma_{v^Z_{l}} = \text{diag}(\sigma^2_{1,v^Z_{l,1}},...,\sigma^2_{K_Z,v^Z_{l,K_Z}}).
\end{flalign}

Instead of maximizing the marginal likelihood $p(x_{1:L})$, we maximize the EM lower bound of $p(x_{1:L})$, which leads to the EM algorithm. In the E-step, the posterior of the hidden variables $p(z_{1:L},v^X_{1:L},v^Z_{1:L}|x_{1:L})$ is intractable because the number of Gaussian mixtures grows exponentially with the length of the time series. Thus, approximations must be made to make the problem tractable. We use a factorized approximate posterior 
\[ p(z_{1:L},v^X_{1:L},v^Z_{1:L}|x_{1:L})\approx q(z_{1:L}|x_{1:L})q(v^X_{1:L},v^Z_{1:L}|x_{1:L})
\]
 to approximate the true posterior based on the mean-field assumption. Then the variational EM lower bound can be written as
\begin{flalign}
 \mathcal{L} =& \sum_{v^X_{1:L},v^Z_{1:L}}q(v^X_{1:L},v^Z_{1:L}|x_{1:L})\int dz_{1:L}~ q(z_{1:L}|x_{1:L})\ln{p(x_{1:L},z_{1:L},v^X_{1:L},v^Z_{1:L})}\nonumber\\
 &-\sum_{v^X_{1:L},v^Z_{1:L}}q(v^X_{1:L},v^Z_{1:L}|x_{1:L})\ln{q(v^X_{1:L},v^Z_{1:L}|x_{1:L})} - \int dz_{1:L}~ q(z_{1:L}|x_{1:L})\ln{q(z_{1:L}|x_{1:L})}\nonumber\\
 =& \sum_{l=1}^L\sum_{v^X_{l}}q(v^X_{l}|x_{1:L})\ln{p(v^X_{l})}+\sum_{l=1}^L\sum_{v^Z_{l}}q(v^Z_{l}|x_{1:L})\ln{p(v^Z_{l})}+\sum_{v^Z_{1}}q(v^Z_{1}|x_{1:L})\int dz_{1}~q(z_{1}|x_{1:L})\ln{p(z_{1}|v^Z_1)}\nonumber\\
 &+\sum_{l=2}^L\sum_{v^Z_{l}}q(v^Z_{l}|x_{1:L})\int dz_{l}dz_{l-1}~q(z_{l},z_{l-1}|x_{1:L})\ln{p(z_{l}|z_{l-1},x_{l-1},v^Z_{l})}+\sum_{v^X_{1}}q(v^X_{1}|x_{1:L})\ln{p(x_{1}|v^X_1)}\nonumber\\
&+ \sum_{l=2}^L\sum_{v^X_{l}}q(v^X_{l}|x_{1:L})\int dz_{l-1} q(z_{l-1}|x_{1:L})\ln{p(x_l|x_{l-1},z_{l-1},v^X_{l})}\nonumber\\
&-\sum_{v^X_{1:L},v^Z_{1:L}}q(v^X_{1:L},v^Z_{1:L}|x_{1:L})\ln{q(v^X_{1:L},v^Z_{1:L}|x_{1:L})} - \int dz_{1:L}~ q(z_{1:L}|x_{1:L})\ln{q(z_{1:L}|x_{1:L})}
\end{flalign}

\subsection{The Algorithm}

In the variational E step, $q(z_{1:L}|x_{1:L})$ and $q(v^X_{1:L},v^Z_{1:L}|x_{1:L})$ are updated alternately by maximizing the variational lower bound. The update rules are as follows
\begin{flalign}
q(v^X_{1:L},v^Z_{1:L}|x_{1:L}) \leftarrow \frac{1}{c_{v^X v^Z}}\exp{\left<\ln {p(x_{1:L},z_{1:L},v^X_{1:L},v^Z_{1:L})}\right>_{q(z_{1:L}|x_{1:L})}},\label{qz1}\\
q(z_{1:L}|x_{1:L}) \leftarrow \frac{1}{c_{z}}\exp{\left<\ln {p(x_{1:L},z_{1:L},v^X_{1:L},v^Z_{1:L})}\right>_{q(v^X_{1:L},v^Z_{1:L}|x_{1:L})}}\label{qe1}
\end{flalign}
In (\ref{qz1}), the expectation of the log-likelihood with respect to $q(z_{1:L}|x_{1:L})$ is calculated as
\begin{flalign}
&\left<\ln {p(x_{1:L},z_{1:L},v^X_{1:L},v^Z_{1:L})}\right>_{q(z_{1:L}|x_{1:L})} \\
&=\sum_{l=1}^L\sum_{i=1}^{K_X}\ln{p(v^X_{l,i})}+\sum_{l=1}^L\sum_{i=1}^K\ln{p(v^Z_{l,i})}\nonumber\\
&-\frac{1}{2}\sum_{i=1}^{v_Z} \left(\frac{\left<(z_{1,i}-\mu_{i,v^Z_{1,i}})^2\right>_{q(z_{1,i}|x_{1:L})}}{{\sigma^2_{i,v^Z_{1,i}}}}+2\ln{\sigma_{i,v^Z_{1,i}}}\right)\nonumber\\
&-\frac{1}{2}\sum_{l=2}^L\sum_{i=1}^{K_Z} \left(\frac{\left<\left(z_{l,i}-(Ez_{l-1})_i-(Dx_{l-1})_i-\mu_{i,v^Z_{l,i}}\right)^2\right>_{q(z_l,z_{l-1}|x_{1:L})}}{{\sigma^2_{i,v^Z_{l,i}}}}+2\ln{\sigma_{i,v^Z_{l,i}}}\right)\nonumber\\
&-\frac{1}{2}\sum_{i=1}^{K_X}\left(\frac{(x_{1,i}-\mu_{i+K_Z,v^X_{1,i}})^2}{\sigma^2_{i+K_Z,v^X_{1,i}}}+2\ln{\sigma_{i+K_Z,v^X_{1,i}}}\right)+\const\nonumber\\
&-\frac{1}{2}\sum_{l=2}^L\sum_{i=1}^{K_X} \left(\frac{\left<\left(x_{l,i}-(Cz_{l-1})_i-(Bx_{l-1})_i-\mu_{i+K_Z,v^X_{l,i}}\right)^2\right>_{q(z_{l-1}|x_{1:L})}}{{\sigma^2_{i+K_Z,v^X_{l,i}}}}+2\ln{\sigma_{i+K_Z,v^X_{l,i}}}\right).\nonumber\\
\end{flalign}
It can be seen that $q(v^X_{1:L},v^Z_{1:L}|x_{1:L})$ further factorizes as $\left[\prod_l\prod_i q(v^X_{l,i})\right]\left[\prod_l\prod_i q(v^Z_{l,i})\right]$, which means the posterior $q(v^X_{1:L},v^Z_{1:L}|x_{1:L})$ can be calculated separately for each channel. The computational complexity is linear in the time series length, the number of time series channels, and the number of Gaussian mixtures in each channel.

(\ref{qe1}) can be further expressed as
\begin{flalign}
&\left<\ln {p(x_{1:L},z_{1:L},v^X_{1:L},v^Z_{1:L})}\right>_{q(v^X_{1:L},v^Z_{1:L}|x_{1:L})} \\
&=-\frac{1}{2}\sum_{i=1}^{K_Z} z_{1,i}^2\left(\sum_{v^Z_{1,i}}\frac{q(v^Z_{1,i})}{\sigma^2_{i,v^Z_{1,i}}}\right)+\sum_{i=1}^{K_Z} z_{1,i}\left(\sum_{v^Z_{1,i}}\frac{q(v^Z_{1,i})\mu_{i,v^Z_{1,i}}}{\sigma^2_{i,v^Z_{1,i}}}\right)\nonumber\\
&-\frac{1}{2}\sum_{l=2}^L\sum_{i=1}^{K_Z} \left(z_{l,i}-(E z_{l-1}\right)_i)^2\left(\sum_{v^Z_{l,i}}\frac{q(v^Z_{l,i})}{\sigma^2_{i,v^Z_{l,i}}}\right)\nonumber\\
&+\sum_{l=2}^L\sum_{i=1}^{K_Z} \left(z_{l,i}-(Ez_{l-1}\right)_i)\left(\sum_{v^Z_{l,i}}\frac{q(v^Z_{l,i})\left((Dx_{l-1})_i+\mu_{i,v^Z_{l,i}}\right)}{\sigma^2_{i,v^Z_{l,i}}}\right)\nonumber\\
&-\frac{1}{2}\sum_{l=2}^L\sum_{i=1}^{K_X} \left(x_{l,i}-(Bx_{l-1})_i-(Cz_{l-1})_i\right)^2\left(\sum_{v^X_{l,i}}\frac{q(v^X_{l,i})}{\sigma^2_{i+K_Z,v^X_{l,i}}}\right)\nonumber\\
&+\sum_{l=2}^L\sum_{i=1}^{K_X} \left(x_{l,i}-(Bx_{l-1})_i-(Cz_{l-1})_i\right)\left(\sum_{v^X_{l,i}}\frac{q(v^X_{l,i})\mu_{i+K_Z,v^X_{l,i}}}{\sigma^2_{i+K_Z,v^X_{l,i}}}\right)+\const,
\end{flalign}
which has the form of the joint log-likelihood function of 
a time-varying linear dynamical system (LDS). The marginal posteriors $p(z_l|x_{1:L})$ and $p(z_l,z_{l-1}|x_{1:L})$ can be obtained by Kalman filter and smoothing algorithms.

In the M-step, we maximize the variational lower bound with respect to the parameters given the marginal posterior distributions from the E-step. The update rules for the parameters are given as follows
\begin{flalign}
\pi_{i,c} = 
\left\{
\begin{array}{c l}      
    \frac{1}{L}\sum_{l=1}^L q(v^Z_{l,i}=c|x_{1:L}), & i=1,...,K_Z,\\
    \frac{1}{L}\sum_{l=1}^L q(v^X_{l,i-K_Z}=c|x_{1:L}), & i=K_Z+1,...,K,
\end{array}\right.
\end{flalign}

\begin{flalign}
\mu_{i,c} = 
\left\{
\begin{array}{c l}      
    &\frac{q(v^Z_{1,i}=c|x_{1:L})\left(\left<z_{1,i}\right>_{q(z_{1,i}|x_{1:L})}\right)+\sum_{l=2}^L q(v^Z_{l,i}=c|x_{1:L})\left(\left<z_{l,i}\right>_{q(z_{l,i}|x_{1:L})}-\left(E\left<z_{l-1}\right>_{q(z_{l-1}|x_{1:L})}\right)_i-\left(Dx_{l-1}\right)_i\right)}{\sum_{l=1}^L q(v^Z_{l,i}=c|x_{1:L})} , \\
     &i=1,...,K_Z,\\
    &\frac{q(v^X_{1,i-K_Z}=c|x_{1:L})x_{1,i-K_Z}+\sum_{l=2}^L q(v^X_{l,i-K_Z}=c|x_{1:L})\left(x_{l,i-K_Z}-\left(C\left<z_{l-1}\right>_{q(z_{l-1}|x_{1:L})}\right)_{i-K_Z}-\left(Bx_{l-1}\right)_i\right)}{\sum_{l=1}^L q(v^X_{l,i-K_Z}=c|x_{1:L})} , \\
    & i=K_Z+1,...,K,
\end{array}\right.
\end{flalign}

\begin{flalign}
\sigma^2_{i,c} = 
\left\{
\begin{array}{cl}      
    &\frac{q(v^Z_{1,i}=c|x_{1:L})\left(\left<z^2_{1,i}-2\mu_{i,c}z_{1,i}\right>_{q(z_{1,i}|x_{1:L})}\right)+\sum_{l=2}^L q(v^Z_{l,i}=c|x_{1:L})\left\{\left[z_{l,i}-\left(Ez_{l-1}\right)_i-\left(Dx_{l-1}\right)_i\right]^2_{q(z_{l},z_{l-1}|x_{1:L})}\right.}{\sum_{l=1}^L q(v^Z_{l,i}=c|x_{1:L})}   \\
     &\frac{\left. -2\mu_{i,c}\left[\left<z_{l,i}\right>_{q(z_{l,i}|x_{1:L})}-\left(E\left<z_{l-1}\right>_{q(z_{l-1}|x_{1:L})}\right)_i-\left(D x_{l-1}\right)_i\right]\right\}}{\sum_{l=1}^L q(v^Z_{l,i}=c|x_{1:L})} +\mu^2_{i,c},\\
     & i=1,...,K_Z, \\
    &\frac{q(v^X_{1,i-K_Z}=c|x_{1:L})\left(x^2_{1,i-K_Z}-2\mu_{i,c}x_{1,i-K_Z}\right)+\sum_{l=2}^L q(v^X_{l,i-K_Z}=c|x_{1:L})\left\{\left[x_{l,i-K_Z}-\left(C z_{l-1}\right)_{i-K_Z}-\left(B x_{l-1}\right)_{i-K_Z}\right]^2_{q(z_{l-1}|x_{1:L})}\right.}{\sum_{l=1}^L q(v^X_{l,i-K_Z}=c|x_{1:L})}   \\
     &\frac{\left. -2\mu_{i,c}\left[x_{l,i-K_Z}-\left(C\left<z_{l-1}\right>_{q(z_{l-1}|x_{1:L})}\right)_{i-K_Z}-\left(Bx_{l-1}\right)_{i-m}\right]\right\}}{\sum_{l=1}^L q(v^X_{l,i-K_Z}=c|x_{1:L})}+\mu^2_{i,c}, \\
     &i=K_Z+1,...,K,
\end{array}\right.
\end{flalign}

\begin{flalign}
E_i =& {\left(\sum_{l=2}^L\sum_{v^Z_{l,i}}\frac{q(v^Z_{l,i}|x_{1:L})}{\sigma^2_{i,v^Z_{l,i}}}\left<z_{l-1}z_{l-1}^\top\right>_{q(z_{l-1}|x_{1:L})}\right)}^{-1}\left(\sum_{l=2}^L\sum_{v^Z_{l,i}}\frac{q(v^Z_{l,i}|x_{1:L})}{\sigma^2_{i,v^Z_{l,i}}}\left(\left<z_{l-1}z_{l,i}\right>_{q(z_{l},z_{l-1}|x_{1:L})}-\nonumber\right.\right.\\
&\left.\left.\left< z_{l-1}\right>_{q( z_{l-1}|x_{1:L})}(Dx_{l-1})_i-\left<z_{l-1}\right>_{q(z_{l-1}|x_{1:L})}\mu_{i,v^Z_{l,i}}\right) \vphantom{\sum_{l=2}^L\sum_{v^Z_{l,i}}} \right),
\end{flalign}

\begin{flalign}
D_i =& {\left(\sum_{l=2}^L\sum_{v^Z_{l,i}}\frac{q(v^Z_{l,i}|x_{1:L})}{\sigma^2_{i,v^Z_{l,i}}}x_{l-1}x_{l-1}^\top\right)}^{-1}\left(\sum_{l=2}^L\sum_{v^Z_{l,i}}\frac{q(v^Z_{l,i}|x_{1:L})}{\sigma^2_{i,v^Z_{l,i}}}x_{l-1}\left(\left<z_{l,i}\right>_{q(z_{l,i}|x_{1:L})}-\nonumber\right.\right.\\
&\left.\left.(E\left<z_{l-1}\right>_{q(z_{l-1}|x_{1:L})})_i-\mu_{i,v^Z_{l,i}}\right) \vphantom{\sum_{l=2}^L\sum_{v^Z_{l,i}}} \right),
\end{flalign}

\begin{flalign}
C_i =& {\left(\sum_{l=2}^T\sum_{v^X_{l,i}}\frac{q(v^X_{l,i}|x_{1:L})}{\sigma^2_{i+K_Z,v^X_{l,i}}}\left<z_{l-1}z_{l-1}^\top\right>_{q(z_{l-1}|x_{1:L})}\right)}^{-1}\left(\sum_{l=2}^L\sum_{v^X_{l,i}}\frac{q(v^X_{l,i}|x_{1:L})}{\sigma^2_{i+K_Z,v^X_{l,i}}}\left<z_{l-1}\right>_{q(z_{l-1}|x_{1:L})}\left(x_{l,i}-\nonumber\right.\right.\\
&\left.\left. (Bx_{l-1})_i-\mu_{i+K_Z,v^X_{l,i}}\right) \vphantom{\sum_{l=2}^L\sum_{v^Z_{l,i}}} \right),
\end{flalign}

\begin{flalign}
B_i =& {\left(\sum_{l=2}^L\sum_{v^X_{l,i}}\frac{q(v^X_{l,i}|x_{1:L})}{\sigma^2_{i+K_Z,v^X_{l,i}}}x_{l-1}x_{l-1}^\top\right)}^{-1}\left(\sum_{l=2}^L\sum_{v^X_{l,i}}\frac{q(v^X_{l,i}|x_{1:L})}{\sigma^2_{i+K_Z,v^X_{l,i}}}x_{l-1}\left(x_{l,i}-\nonumber\right.\right.\\
&\left.\left. \left(C\left<z_{l-1}\right>_{q(z_{l-1}|x_{1:L})}\right)_i-\mu_{i+K_Z,v^X_{l,i}}\right) \vphantom{\sum_{l=2}^L\sum_{v^Z_{l,i}}} \right),
\end{flalign}
where $E_i$, $D_i$, $C_i$, and $B_i$ denote the i-th row of $E$, $D$, $C$, and $B$ respectively.

\newpage

\bibliographystyle{icml2015}
\bibliography{../../Include/univbib,kun-bib}

\end{document}
